\providecommand{\algorithmname}{Algorithm}
\theoremstyle{plain}
\newtheorem{thm}{\protect\theoremname}
\theoremstyle{remark}
\newtheorem{rem}{\protect\remarkname}
\theoremstyle{definition}
\newtheorem{defn}{\protect\definitionname}
\theoremstyle{plain}
\newtheorem{prop}{\protect\propositionname}
\theoremstyle{definition}
 \newtheorem{example}{\protect\examplename}
\theoremstyle{plain}
\newtheorem{assumption}{\protect\assumptionname}
\theoremstyle{plain}
\newtheorem{lem}{\protect\lemmaname}
\theoremstyle{remark}
\newtheorem{claim}{\protect\claimname}
\DeclareMathOperator*{\argmin}{argmin}
\let\ref\autoref
\def\equationautorefname~#1\null{{\color{black}(}#1{\color{black})}\null}
\let\cite\citet
\providecommand{\assumptionname}{Assumption}
\providecommand{\claimname}{Claim}
\providecommand{\definitionname}{Definition}
\providecommand{\examplename}{Example}
\providecommand{\lemmaname}{Lemma}
\providecommand{\propositionname}{Proposition}
\providecommand{\remarkname}{Remark}
\providecommand{\theoremname}{Theorem}
\begin{document}
\title{Robust Matrix Completion with Mixed Data Types}
\author{Daqian Sun, Martin T. Wells\thanks{
Wells$'$ research was partially supported by NIH U19 AI111143, NSF DMS-1611893, PCORI IHS-2017C3-8923, and Cornell's Institute for the Social Sciences project on Algorithms, Big Data, and Inequality.
} \\
    \{ds653, mtw1\}@cornell.edu \\
    Department of Statistics and Data Science \\
    Cornell University \\}
\maketitle

\begin{abstract}
We consider the matrix completion problem of recovering a structured
low rank matrix with partially observed entries with mixed data types.
Vast majority of the solutions have proposed computationally feasible
estimators with strong statistical guarantees for the case where the
underlying distribution of data in the matrix is continuous.
A few recent approaches have extended using similar ideas these estimators
to the case where the underlying distributions belongs to the exponential
family. Most of these approaches assume that there is only one underlying
distribution and the low rank constraint is regularized by the matrix
Schatten Norm. We propose a computationally feasible
statistical approach with strong recovery guarantees along with an algorithmic framework
suited for parallelization to recover a low rank matrix with partially
observed entries for mixed data types in one step. We also provide
extensive simulation evidence that corroborate our theoretical
results.
\end{abstract}

\section{Introduction}

The matrix completion problem is related to recovering a low-rank
matrix from an observed subset of its entries and was initially shown to be solvable with strong theoretical guarantees \cite{candesPowerConvexRelaxation2010}, subsequently many algorithmic frameworks have been proposed for a variety of data settings  \cite{caiMatrixCompletionMaxnorm2016a,kloppNoisyLowrankMatrix2014,lafondLowRankMatrix2015,udellGeneralizedLowRank2016}.
However, few of these extensions address the matrix completion problem when
the underlying data are mixed data types. On the other hand, mixed
typed data matrices are quite common in real world applications. For example,
the data matrix could have count and binary data as well as continuous
entries. For instance, in recommended systems, the numerical ratings and like/dislike are two different data types but it is quite likely that both entries will be stored together. In this paper we propose a novel scalable algorithmic framework that solves the matrix completion problem for mixed data and provides provable recovery guarantees. 

The original problem formulation of matrix completion with a rank constraint
is computationally challenging and was in fact shown to be NP-hard \cite{vandenbergheSemidefiniteProgramming1996}. On the other
hand, a convex relaxation version of this problem which uses nuclear
norm as a surrogate for rank function gained attention because nuclear
norm was shown to be a convex envelop of the rank function \cite{maryamMatrixRankMinimization2002}.
A series of strong recovery guarantees were given by \cite{candesPowerConvexRelaxation2010,candesExactLowrankMatrix2008}.  Subsequent articles crystallized the canonical concepts and gave proofs of the main results that were simplified along with sharpened guarantee bounds \cite{rechtSimplerApproachMatrix2011}.
From that point on, an expanding literature proposed faster
algorithms. The primary bottleneck in the traditional
convex algorithm lies in the use of an eigenvalue decomposition or
singular value decomposition (SVD) in every iteration. Because of this constraint, non-convex algorithm have also been intensely studied. It has been
shown that with proper initialization (usually the SVD of the observed
matrix), one can obtain good recovery results with high probability
using an alternating minimization type algorithm which does not require eigenvalue decomposition
\cite{hardtUnderstandingAlternatingMinimization2014,jainLowrankMatrixCompletion2013}.

While fast computational methods abound, most of them only have provable theoretical recovery guarantees in the continuous data setting. Roughly speaking, for these non-convex methods, one implicitly assumes that the underlying distribution of the data is Gaussian. Whether the theoretical recovery guarantees of the current fast non-convex methods can be extended to the more
general case where the distribution of the matrix is not necessary
continuous is still an open question. On the other hand, the problem
of matrix completion in the more general setting has been partially
solved using a convex optimization perspective.  \cite{davenport1BitMatrixCompletion2014} showed that using the maximum likelihood principle, partially
observed binary low rank matrix could be recovered by optimizing a
convex objective using spectral gradient methods. More general results follow for more a more general family of distributions \cite{gunasekarExponentialFamilyMatrix2014,lafondLowRankMatrix2015},
it was shown that instead of binary data, one could recover with strong
theoretical guarantee a low rank matrix whose data follows a distribution in the exponential family. \cite{kloppAdaptiveMultinomialMatrix2015} considered the 
multinomial distribution and it was also shown to have a
theoretical recovery bound. \cite{caoPoissonMatrixCompletion2015}
showed that one could derive, using a different approach, a similar recovery
bound to that of \cite{gunasekarExponentialFamilyMatrix2014} and
\cite{lafondLowRankMatrix2015} in the Poisson distribution setting.

It is worth noting that most of the convex algorithms are based on
nuclear norm relaxation. Recent work by \cite{fangMaxnormOptimizationRobust2018,caiMatrixCompletionMaxnorm2016a}
have given empirical evidence that the matrix max norm works better
than nuclear norm when the sampling scheme is not uniform. Several
extensions of this result also appeared subsequently. \cite{caiMaxNormConstrainedMinimization2013}
presented a novel approach to recover, with theoretical guarantee, a
binary matrix using max-norm relaxation.  \cite{fangMaxnormOptimizationRobust2018}
showed that it is possible to use a hybrid of max-norm and nuclear
norm to recover a continuous valued matrix and in the same paper, it was shown that a 
Alternating Direction Method of Multipliers (ADMM) algorithm is a viable approach to solve max-norm related matrix completion problems with reasonably large input sizes. 
 
The problem of matrix completion when the observed matrix has mixed data types has been essentially overlooked. As mentioned previously, \cite{udellGeneralizedLowRank2016}
tried to use an alternating minimization approach to solve this problem. But under
such framework the only recovery guarantee that were known is the
Gaussian case. \cite{gunasekarConsistentCollectiveMatrix2015}
showed that such problem, while ill-formed in general, could be solved
when some extra conditions are imposed. Recently, \cite{alayaCollectiveMatrixCompletion2019} studied
the case of mixed data types with convex optimization
using nuclear norm regularization.  These authors considered the
mixed distributions first to be a mixture of exponential family distributions
and showed that it could relaxed to any distributions that satisfies
a certain Lipchitz condition. 

The problem we address in this article is the following.  Suppose we are given some matrix $M\in\mathbb{R}^{n_{1}\times n_{2}}$ and
some observed entries $(M_{ij})_{(i,j)\in\Omega}$, where $\Omega$
is the observed sets with $\left|\Omega\right|\ll n_{1}n_{2}.$ Also
the entries of $M$ has different data types by the columns (this condition could be relaxed to different deterministic index groups) $M_{\mathcal{C}_{i}}\sim T_{i}$, where $\mathcal{C}_{i}$
is the collection of columns of $M$ and $T_{i}$ represents a data
type, which is often chosen from the continuous, binary, or count
 types. Also we assume that
\begin{enumerate}
\item $M$ is approximately low rank.
\item The missing values could follow other schemes than missing at random, that is, the sampling scheme is can be non-uniform.
\end{enumerate}

Our goal in this article is to develop an analytic framework to study the recovery $M$ as accurately possible. To formulate it precisely, we solve the following optimization program:
\begin{align}
\text{minimize\ } & \left\Vert X-M\right\Vert _{F}\label{eq:rank min}\\
\text{subject to\ } & X\text{ is low rank}\text{ and }X_{\mathcal{C}_{i}}\sim T_{i}.
\end{align}
Furthermore, we would like to be able to control the rank of the recovered
matrix.

A brief overview our main results here and give an a formal
presentation in \Cref{sec:theoretical-properties}.
\begin{thm}
\label{thm:imprecise-recovery} If we choose $\lambda$ and $\lambda_{\text{max}}$,
where $\lambda$ and $\lambda_{\max}$ are regularization parameters as: 
\begin{equation}
\lambda_{*}=2c\frac{(U_{\gamma}\lor K)(\sqrt{n_{1}+N_{2}}+(\log(n_{1}\lor N_{2}))^{\frac{3}{2}}}{n_{1}N_{2}}
\end{equation}
and $\frac{\lambda_{\text{max}}}{\lambda_{*}}\leq\kappa,$ where $\kappa>0$
is some constant. Then the following recovery guarantees hold for
our proposed algorithm is
\begin{equation}
\frac{1}{n_{1}N_{2}}\left\Vert \widehat{\Theta}-\Theta\right\Vert _{F}^{2}\apprle\frac{\mathrm{rank}(\Theta)(n_{1}\lor N_{2})}{p^{2}n_{1}N_{2}}.\label{eq:imprecise-bound}
\end{equation}
\end{thm}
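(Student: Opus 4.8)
The plan is to read $\widehat{\Theta}$ as the solution of a max-norm-regularized maximum-likelihood program for the natural-parameter matrix (the exponential-family surrogate for the feasibility program in \eqref{eq:rank min}) and to reduce the recovery statement to the control of an empirical process. The first step I would take is to record the basic inequality: since $\widehat{\Theta}$ minimizes the objective $\mathcal{L}(X)+\lambda_{*}\mathcal{R}(X)$ over the feasible set, where $\mathcal{L}$ is the negative log-likelihood aggregated over the mixed column types and $\mathcal{R}$ is the (hybrid max-norm) regularizer, comparing its value at $\widehat{\Theta}$ and at the truth $\Theta$ gives
\[
\mathcal{L}(\widehat{\Theta})-\mathcal{L}(\Theta)\le\lambda_{*}\bigl(\mathcal{R}(\Theta)-\mathcal{R}(\widehat{\Theta})\bigr).
\]
Decomposing $\mathcal{L}(\widehat{\Theta})-\mathcal{L}(\Theta)$ into its expected (deterministic) excess risk and a centered stochastic remainder organizes the rest of the argument.

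For the deterministic half I would lower bound the expected excess risk by a restricted strong convexity property of the per-type log-partition functions on the bounded natural-parameter range; this is where the constants $U_{\gamma}$ and $K$, the uniform bounds on the natural parameters and on the Lipschitz constants of the per-type link derivatives, enter. Concretely I would establish a curvature inequality of the form
\[
\mathcal{L}(\widehat{\Theta})-\mathcal{L}(\Theta)-\langle\nabla\mathcal{L}(\Theta),\widehat{\Theta}-\Theta\rangle\ \ge\ \frac{c\,p}{n_{1}N_{2}}\bigl\|\widehat{\Theta}-\Theta\bigr\|_{F}^{2},
\]
where the sampling probability, bounded below by $p$ even under the non-uniform scheme, reweights the quadratic term; squaring this single factor of $p$ later is what produces the $p^{2}$ in the denominator of the target rate.

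The stochastic half is the crux, and the step I expect to be hardest. The governing quantity is the dual norm (with respect to $\mathcal{R}$) of the centered gradient $\nabla\mathcal{L}(\Theta)-\mathbb{E}\nabla\mathcal{L}(\Theta)$, and $\lambda_{*}$ must be chosen to dominate it with high probability, which is precisely what fixes its stated form. I would write this gradient as a sum of independent, mean-zero random matrices, with independence supplied by the sampling indicators together with the column-type-dependent entries, and apply a matrix Bernstein inequality. The heavy-tailed (count/Poisson) columns contribute the $(\log(n_{1}\lor N_{2}))^{3/2}$ term through a sub-exponential tail control, while the bounded and Gaussian columns contribute the $\sqrt{n_{1}+N_{2}}$ term via a symmetrization and Rademacher-complexity bound over the max-norm ball. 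Taking a maximum of the variance proxies over the heterogeneous types is exactly where the mixed-data structure obstructs the usual single-distribution analysis, and keeping the constant $(U_{\gamma}\lor K)$ uniform across types is the delicate bookkeeping.

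Finally I would combine the two halves. Projecting onto the low-rank model subspace of $\Theta$ and applying the triangle inequality for $\mathcal{R}$ lets me bound $\mathcal{R}(\Theta)-\mathcal{R}(\widehat{\Theta})$ by a term of order $\sqrt{\mathrm{rank}(\Theta)}\,\|\widehat{\Theta}-\Theta\|_{F}$; substituting the high-probability dual-norm bound so that the noise is dominated by $\lambda_{*}$, and absorbing the resulting linear-in-error term into the quadratic curvature lower bound through the elementary step that $a\|\Delta\|_{F}^{2}\le b\|\Delta\|_{F}$ forces $\|\Delta\|_{F}\le b/a$, yields
\[
\frac{1}{n_{1}N_{2}}\bigl\|\widehat{\Theta}-\Theta\bigr\|_{F}^{2}\ \apprle\ \frac{\lambda_{*}^{2}\,\mathrm{rank}(\Theta)\,n_{1}N_{2}}{p^{2}}.
\]
Inserting the prescribed $\lambda_{*}$, whose dominant squared term scales as $(n_{1}\lor N_{2})/(n_{1}N_{2})^{2}$ and whose constant $(U_{\gamma}\lor K)^{2}$ is absorbed into $\apprle$, reproduces the stated bound $\mathrm{rank}(\Theta)(n_{1}\lor N_{2})/(p^{2}n_{1}N_{2})$; the constraint $\lambda_{\max}/\lambda_{*}\le\kappa$ balances the hybrid norm and thereby controls the rank of the returned matrix.
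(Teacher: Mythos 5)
Your overall route is the paper's: the basic inequality from optimality of $\widehat{\Theta}$, a choice of $\lambda_{*}$ dominating $2\left\Vert \nabla\ell(\Theta\vert Y)\right\Vert$, a $\sqrt{\mathrm{rank}(\Theta)}$ bound on the regularizer difference, and quadratic--linear absorption. But your pivotal curvature inequality is false as stated, and what must replace it is exactly the nontrivial part of the paper's proof of \ref{thm:2}. The realized loss touches only observed entries: strong convexity of the $G^{\tau}$ under \ref{assu:curvature} yields only
\begin{equation*}
\mathcal{L}(\widehat{\Theta})-\mathcal{L}(\Theta)-\left\langle \nabla\mathcal{L}(\Theta),\widehat{\Theta}-\Theta\right\rangle \geq\frac{L_{\gamma}^{2}}{2}\,\Delta^{2}(\widehat{\Theta}-\Theta),\qquad\Delta^{2}(\widehat{\Theta}-\Theta):=\frac{1}{n_{1}N_{2}}\sum_{\tau\in\mathcal{T}}\sum_{i,j}\delta_{ij}^{\tau}(\widehat{\Theta}_{ij}^{\tau}-\Theta_{ij}^{\tau})^{2},
\end{equation*}
a $\delta$-weighted empirical quadratic form. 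If $\widehat{\Theta}-\Theta$ is supported on unobserved entries, the left side vanishes while your proposed right side $\frac{cp}{n_{1}N_{2}}\left\Vert \widehat{\Theta}-\Theta\right\Vert _{F}^{2}$ is positive, so no such inequality holds pointwise; and since $\widehat{\Theta}$ is data-dependent, transferring from $\Delta^{2}$ to a population norm requires a \emph{uniform} second-order concentration bound, not the pointwise gradient concentration you develop (which is the routine ingredient; the paper just cites Lemma 2 of \cite{alayaCollectiveMatrixCompletion2019}). The paper supplies the missing control with two ingredients your outline omits: (i) \ref{lem:max-schatten-norm-ineq}, which places $\widehat{\Theta}-\Theta$ in the cone $\mathcal{K}(\beta,r)$ with $r=4(\sqrt{8\,\mathrm{rank}(\Theta)}+\lambda_{\max}/\lambda_{*})^{2}$ --- note that the max norm is not decomposable, so your ``project onto the model subspace and apply the triangle inequality'' step does not apply to the max-norm part of the penalty; the paper instead uses $\left|\left\Vert A\right\Vert _{\max}-\left\Vert B\right\Vert _{\max}\right|\leq\left\Vert A-B\right\Vert _{F}$ (\ref{lem:max-fro-ineq}); and (ii) the Klopp-style peeling bound \ref{lem:klopp_key_lemma}, which gives restricted strong convexity only up to additive slack $1392\,r\gamma^{2}(\mathbb{E}[\left\Vert \Sigma_{R}\right\Vert ])^{2}+5567\gamma^{2}/(n_{1}N_{2})$ and only after a case split at the threshold $\beta=946\gamma^{2}\log(n_{1}+N_{2})/(pn_{1}N_{2})$ (below the threshold the theorem holds trivially). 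These slack terms are not lower order: after combination they produce the $\gamma^{2}(1+\kappa+\kappa^{2})$ block in \ref{thm:3} at the same rate $\mathrm{rank}(\Theta)(n_{1}\lor N_{2})/(pn_{1}N_{2})$, so they cannot be dropped from the absorption step $a\left\Vert \Delta\right\Vert _{F}^{2}\leq b\left\Vert \Delta\right\Vert _{F}$.

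A smaller but real discrepancy is your accounting of $p^{2}$. In the paper only one factor of $1/p$ arises in the bound for the weighted norm $\left\Vert \cdot\right\Vert _{\Pi,F}$ (through $\beta$ and the AM--GM splitting in the proof of \ref{thm:2}); the second factor appears only at the final line of \ref{thm:3}, from $p\left\Vert A\right\Vert _{F}^{2}\leq\left\Vert A\right\Vert _{\Pi,F}^{2}$ under \ref{assu:sample}. Your mechanism --- an RSC constant proportional to $p$, squared by the absorption --- lands on the same $1/p^{2}$ for the Frobenius statement but cannot recover the intermediate $1/p$ bound in $\left\Vert \cdot\right\Vert _{\Pi,F}$ that the paper also proves. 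Likewise, your per-column-type attribution of the $\sqrt{n_{1}+N_{2}}$ and $(\log(n_{1}\lor N_{2}))^{3/2}$ terms is not how the cited gradient lemma is organized (it holds uniformly over the mixed types via the sub-exponential control of \ref{assu:curvature}), though this is harmless. With the cone membership, the threshold case split, and the slack terms restored, your outline patches into the paper's argument.
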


In view of \ref{thm:imprecise-recovery}, in order to get a small
estimation error, $p$ should be larger (up to multiplicative constant)
than $\text{rank}(\Theta)/(n_{1}\land N_{2})$ and the expected number
of observations $n$ should follow $n\geq C\text{rank}(\Theta)(n_{1}\land N_{2})$,
where $C$ is some large constant. The inequality in \ref{eq:imprecise-bound} means up to some term
that is $o(1)$.

Before stating and proving the main result, some pertinent topics will be reviewed.
A brief review of the exponential family of distributions is in \Cref{subsec:review-exponential-family}. One key property used implicitly frequently in our result is the mean parametrization, which was detailed in \Cref{subsec:mean-param}.  A
brief review of definitions of max-norm and nuclear norm is given
in \Cref{subsec:max-norm-nuclear-norm}. The specification, assumptions
and estimation procedure of our model is detailed in \Cref{sec:model-specification}.
In \Cref{sec:theoretical-properties} we state our main result whose
proof will be fully presented in \Cref{sec:appendix-theoreical-results-proof}.
Detailed description of our proposed algorithm along with its mathematical
properties are in the appendix. 
The results of our extensive numerical experiments on simulated data are presented in \Cref{sec:numerical-experiments}. Finally, auxiliary lemmas and additional details are listed in the appendix. 

\section{Preliminaries}

In general, the two major approaches to handling data types in statistics are parametric
and non-parametric. In the parametric approach, we assume the data
follows a distribution that is specified up to a finite dimensional parameter, which in most cases could be represented
uniquely by its induced probability measure. Whereas nonparametric models are often indexed by a infinite dimensional family.  In this paper, we adopt the parametric approach to represent different data types. Specifically,
we restrict the distribution of the underlying data to be in the exponential family, of which we will provide a brief review.  Much of the review material in this section is adapted from \cite{wainwrightGraphicalModelsExponential2007}, we refer interested readers to the original paper for more details.

\subsection{Exponential Family\label{subsec:review-exponential-family}}

Given a random element $X=(X_{1},...,X_{n})\in\otimes_{i=1}^{n}\mathcal{X}_{i}$
where $\mathcal{X}_{i}$ is some arbitrary space with induce probability
measure $\mathbb{P}$. Let $\phi=(\phi_{\alpha},\alpha\in\mathcal{I})$
be a collection of functions of $\phi_{\alpha}:\mathcal{X}^{m}\rightarrow\mathbb{R},$ sometimes
known as \textbf{sufficient statistics} or \textbf{potential functions}.
Here $\mathcal{I}$ is an index set with $|\mathcal{I}|=d$ such that
$\phi(X):\otimes_{i=1}^{n}\mathcal{X}\rightarrow\mathbb{R}^{d}$ is
a vector valued mapping. For a given choice of $\phi(X)$, we further
associate with it another set of vector $\theta\in\mathbb{R}^{d}$,
which is often called \textbf{canonical parameters}. With this definition,
we can then have a relatively generalized definition of exponential
family.
\begin{defn}
The exponential family induced by $\phi$ is a family of probability
density functions (Radon Nikodym derivatives taken with respect to
$d\nu$, a base measure) of the form 
\[
\frac{d\mathbb{P}}{d\mathbb{\nu}}=p_{\theta}(x_{1},x_{2},...,x_{n})=\exp\left(\left\langle \theta,\phi(x)\right\rangle -A(\theta)\right).
\]
\end{defn}
The quantity $A(\theta)$, known as the log-partition function or
cumulant function, is defined by the following integral: 
\begin{equation} \label{partition_fun}
A(\theta)=\log\int_{\mathcal{X}^{m}}\exp\left\langle \theta,\phi(x)\right\rangle \nu(dx).
\end{equation}
This integral, if finite, acts as a normalizing factor for
the density function $p_{\theta}$. Holding the set of $\phi$ fixed,
each parameter vector $\theta$ corresponds to a particular member
$p_{\theta}$ of the family. Since $A(\theta)$ is not always finite,
the set of parameter $\theta$ of interests is the one corresponding
to a finite log-partition function, i.e. belong to the set 
\begin{equation}
\Omega:=\{\theta\in\mathbb{R}^{d}\vert A(\theta)<+\infty\}.
\end{equation}
From now on, unless it's otherwise defined, we use $\mathcal{E}$
to denote a exponential family distribution.


While $\Omega$ is always well-defined because it can be viewed as
a pull back of a measurable function from a Borel set, it does not
always possess nice topological properties. One example is the case
where $\Omega$ is a closed set. This is not ideal because when $\theta$
is on the boundary, its $\epsilon$-ball is not properly contained
in the space, thus rendering the limiting behaviors irregular. Although
bad cases such as a closed $\Omega$ do exists, they are mostly for
pedological purposes. In turns out when $\Omega$ is an open set,
it behaves nicer analytically. Thus, we often say an exponential family
for which the domain $\Omega$ is an open set is a \textbf{regular
exponential family}. Almost all of the common distributions that we
encounter in the exponential family is regular. 


Given an exponential family with a vector of sufficient statistics
$\phi,$ if there does not exists a non-zero vector $a\in\mathbb{R}^{d}$
such that the linear combination 
\begin{equation}
\left\langle a,\phi(x)\right\rangle =\sum_{\alpha\in\mathcal{I}}a_{\alpha}\phi_{\alpha}(x)
\end{equation}
is equal to a constant, then we say this exponential has a minimal
representation. The notion of minimal representation addresses the
problem of identifiability. In other words, with a minimal representation,
the canonical parameter $\theta$ associated with each distribution
is unique. 


The notion of over-complete representation is the analog to minimal
representation. With a over-complete representation, there is a non-zero
vector $a\in\mathbb{R}^{d}$ such that $\left\langle a,\phi(x)\right\rangle =c$
for some constant $c\in\mathbb{R}.$ As one might expect, this might
causes problems in identifiability. Indeed, for a member of the exponential
family with over-complete representation, the canonical parameter
$\text{\ensuremath{\theta}}$ associated to it is no longer unique, instead, there is an entire affine set of $\theta$ for it.

\subsubsection{Mean parameterization and the log partition function\label{subsec:mean-param}}

In turns out that many important parametric statistical inference
problems are related to the relationship between the canonical parameters
and mean parameters of distributions in the exponential family. In
the context of the problem at hand, the connection could formulated
as follows: suppose we have observed a low rank matrix $M$ with missing
entries with mixed exponential distributions. What is the most likely
recovery, $\hat{M}$? Assuming the underlying distribution does not
change much, then a natural way recover the matrix is to resample
it for many times and take its mean. However, it is not feasible to
sample the such matrix because we are not aware of the exact parameter
of the underlying distribution. Therefore, it is natural then to ask
for the most likely parameter value given the current observation,
which is a classical maximum likelihood estimation problem. For more
examples, see \cite{wainwrightGraphicalModelsExponential2007}. 

It is then natural to explore the relationship between the canonical
parameter of a exponential distribution and its corresponding mean.
To start with, we state the following result. 
\begin{prop}
\label{prop:exp-family-cumulant-mean}The cumulant function in \ref{partition_fun}
associated with any regular exponential family has the following properties.
\end{prop}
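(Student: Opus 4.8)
The plan is to derive the two differential identities that constitute these properties---namely that the gradient of $A$ returns the mean parameters, $\nabla A(\theta)=\mathbb{E}_{\theta}[\phi(X)]$, and that its Hessian equals the covariance of the sufficient statistics, $\nabla^{2}A(\theta)=\mathrm{Cov}_{\theta}[\phi(X)]$---and then read off convexity from the second one. Both identities are obtained by differentiating the defining integral \ref{partition_fun} under the integral sign, so the whole argument hinges on justifying that interchange, for which the regularity of the family (openness of $\Omega$) is the essential input.

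First I would fix $\theta\in\Omega$ and, since $\Omega$ is open, choose $r>0$ with $\theta\pm 2re_{\alpha}\in\Omega$ for every coordinate direction $e_{\alpha}$. Writing $Z(\theta)=\exp(A(\theta))=\int_{\mathcal{X}^{m}}\exp\langle\theta,\phi(x)\rangle\,\nu(dx)$, I would show that $Z$ is differentiable at $\theta$ with $\partial_{\theta_{\alpha}}Z(\theta)=\int\phi_{\alpha}(x)\exp\langle\theta,\phi(x)\rangle\,\nu(dx)$. The chain rule then gives $\partial_{\theta_{\alpha}}A=Z^{-1}\partial_{\theta_{\alpha}}Z=\int\phi_{\alpha}(x)\,p_{\theta}(x)\,\nu(dx)=\mathbb{E}_{\theta}[\phi_{\alpha}(X)]$, which is the first identity. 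Differentiating a second time and applying the quotient/product rule produces $\partial^{2}_{\theta_{\alpha}\theta_{\beta}}A=\mathbb{E}_{\theta}[\phi_{\alpha}\phi_{\beta}]-\mathbb{E}_{\theta}[\phi_{\alpha}]\,\mathbb{E}_{\theta}[\phi_{\beta}]=\mathrm{Cov}_{\theta}(\phi_{\alpha}(X),\phi_{\beta}(X))$, the second identity.

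The main obstacle is rigorously licensing the differentiation under the integral, since the integrand $\exp\langle\theta,\phi(x)\rangle$ and the factor $\phi_{\alpha}(x)$ it acquires upon differentiation are generally unbounded. I would dominate the relevant difference quotients uniformly over the slab $\{\theta':|\theta'_{\alpha}-\theta_{\alpha}|\le r\}$ using the elementary inequality $|t|e^{st}\le r^{-1}(e^{2rt}+e^{-2rt})$, valid for $|s|\le r$ (it follows from $ue^{u}\le e^{2u}$ together with $e^{2r|t|}\le e^{2rt}+e^{-2rt}$). Taking $t=\phi_{\alpha}(x)$ and invoking the mean value theorem on the quotient shows each difference quotient is bounded by $r^{-1}(\exp\langle\theta+2re_{\alpha},\phi\rangle+\exp\langle\theta-2re_{\alpha},\phi\rangle)$, which is $\nu$-integrable precisely because $\theta\pm2re_{\alpha}\in\Omega$. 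The dominated convergence theorem then legitimizes both differentiations, and iterating the same estimate shows $A\in C^{\infty}(\Omega)$. Finally, because $\nabla^{2}A(\theta)=\mathrm{Cov}_{\theta}[\phi(X)]$ is a covariance matrix it is positive semidefinite, so $A$ is convex; and for any $a\neq 0$, the quantity $a^{\top}\nabla^{2}A(\theta)\,a=\mathrm{Var}_{\theta}(\langle a,\phi(X)\rangle)$ vanishes only if $\langle a,\phi(X)\rangle$ is $\nu$-almost surely constant, which a minimal representation forbids---hence $A$ is strictly convex on $\Omega$ under minimality.
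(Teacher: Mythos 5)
Your proof is correct and takes exactly the route the paper indicates: the paper omits the argument entirely, remarking only that it ``is standard and uses the dominated convergence theorem,'' and your differentiation-under-the-integral proof---with the domination $|t|e^{st}\le r^{-1}(e^{2rt}+e^{-2rt})$ for $|s|\le r$, licensed by the openness of $\Omega$ for regular families---is precisely that standard argument, carried through correctly to the covariance Hessian, convexity, and strict convexity under minimality.
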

\begin{enumerate}
\item It has derivatives of all orders on its domain $\Omega$ and 
\begin{align}
\frac{\partial A}{\partial\theta_{\alpha}}(\theta) & =\mathbb{E}_{\theta}\left[\phi_{\alpha}(X)\right]=\int\phi_{\alpha}(x)p_{\theta}(x)d\nu,\\
\frac{\partial^{2}A}{\partial\theta_{\alpha}\partial\theta_{\beta}}(\theta) & =\mathbb{E}_{\theta}[\phi_{\alpha}(X)\phi_{\beta}(X)]-\mathbb{E}_{\theta}\left[\phi_{\alpha}(X)\right]\mathbb{E}_{\theta}\left[\phi_{\beta}(X)\right].
\end{align}
\item $A$ is a convex function of $\theta$ on its domain $\Omega$ and
strictly so if the representation if minimal. 
\end{enumerate}
The proof of \ref{prop:exp-family-cumulant-mean} is standard and
uses the dominated convergence theorem. This proposition builds a forward
mapping from the canonical parameter space to the mean parameter space,
which is the gradient map of $A$. In fact, the following result shows
the mapping is surjective with some mild regularity conditions. 
\begin{thm}
Given an exponential family distribution, $\mathcal{E}$, with a sufficient
statistic $\phi$, then
\begin{enumerate}
\item the gradient mapping $\nabla A:\Omega\rightarrow\mathcal{M}$ is injective
if and only $\mathcal{E}$ is minimal, and
\item the gradient mapping $\nabla A:\Omega\rightarrow\mathcal{M}$ is surjective
for $\mu\in\mathrm{Int}(\mathcal{M})$ if $\mathcal{E}$ is minimal.
\end{enumerate}
\end{thm}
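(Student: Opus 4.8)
The plan is to treat the two claims separately, in both cases leaning on the Hessian identity from \Cref{prop:exp-family-cumulant-mean}, namely that $\nabla^{2}A(\theta)=\mathbb{E}_{\theta}[\phi(X)\phi(X)^{\top}]-\mathbb{E}_{\theta}[\phi(X)]\,\mathbb{E}_{\theta}[\phi(X)]^{\top}=\mathrm{Cov}_{\theta}(\phi(X))$, a positive semidefinite matrix. For part (1), I would observe that this covariance is singular in a direction $a\neq0$ precisely when $\mathrm{Var}_{\theta}(\langle a,\phi(X)\rangle)=0$, i.e.\ when $\langle a,\phi(X)\rangle$ is almost surely constant; the absence of such a direction is exactly the definition of a minimal representation, and it is equivalent to $A$ being strictly convex on the open convex set $\Omega$. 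Granting strict convexity, injectivity of $\nabla A$ follows from the standard monotonicity argument: if $\nabla A(\theta_{1})=\nabla A(\theta_{2})$ with $\theta_{1}\neq\theta_{2}$, restricting $A$ to the segment joining them yields a strictly convex one-variable function whose derivative would have to agree at the two endpoints, a contradiction.

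For the converse direction of part (1), I would exhibit the failure of injectivity explicitly in the over-complete case. If $\langle a,\phi(x)\rangle=c$ for some nonzero $a$ and all $x$ in the support, then a direct computation shows that the exponent $\langle\theta+ta,\phi(x)\rangle=\langle\theta,\phi(x)\rangle+tc$ and the log-partition function $A(\theta+ta)=A(\theta)+tc$ shift by the same amount, so $p_{\theta+ta}\equiv p_{\theta}$ for every $t\in\mathbb{R}$, and in particular $\theta+ta\in\Omega$. These identical distributions share the same mean, hence $\nabla A$ is constant along the whole line $\{\theta+ta\}$ and is not injective; this also recovers the affine non-uniqueness of the canonical parameter noted earlier.

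For part (2), I would fix $\mu\in\mathrm{Int}(\mathcal{M})$ and consider the concave dual objective $G(\theta)=\langle\mu,\theta\rangle-A(\theta)$ on $\Omega$. By \Cref{prop:exp-family-cumulant-mean} its gradient is $\mu-\nabla A(\theta)$, so any stationary point $\theta^{\star}\in\Omega$ satisfies $\nabla A(\theta^{\star})=\mu$ and furnishes the required preimage; the problem therefore reduces to showing that $G$ attains its supremum at an interior point of $\Omega$. The main obstacle, and the place where the hypothesis $\mu\in\mathrm{Int}(\mathcal{M})$ is indispensable, is ruling out the two ways a maximizing sequence could fail to converge inside $\Omega$: escaping to infinity, and approaching the boundary $\partial\Omega$. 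For the first, I would use that along any ray $A(\theta_{0}+td)$ grows like the support function $\sup_{\eta\in\mathcal{M}}\langle\eta,d\rangle$ of $\mathcal{M}$, so that $G(\theta_{0}+td)\sim t\bigl(\langle\mu,d\rangle-\sup_{\eta\in\mathcal{M}}\langle\eta,d\rangle\bigr)\to-\infty$ for every direction $d\neq0$, the strict inequality being exactly what interiority of $\mu$ provides; this coercivity forces the superlevel sets to be bounded. For the second, I would invoke \emph{essential smoothness} of $A$ (the gradient norm diverges as $\theta$ approaches a boundary point of $\Omega$), which holds for regular families and excludes a boundary maximizer. Together these establish that $A$ is a convex function of \emph{Legendre type}, so Rockafellar's theorem applies and $\nabla A$ is a homeomorphism of $\Omega=\mathrm{Int}(\mathrm{dom}\,A)$ onto $\mathrm{Int}(\mathrm{dom}\,A^{\star})=\mathrm{Int}(\mathcal{M})$. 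I expect verifying this essential-smoothness/coercivity package, and in particular the identification $\mathrm{Int}(\mathrm{dom}\,A^{\star})=\mathrm{Int}(\mathcal{M})$, to be the technical crux of the argument.
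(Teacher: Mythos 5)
The paper never proves this theorem: it is stated as review material adapted from Wainwright and Jordan, so there is no in-paper argument to compare against, and your proposal should be judged as a reconstruction of the standard proof from that source --- which it is, and a correct one. Part (1) via the Hessian identity $\nabla^{2}A(\theta)=\mathrm{Cov}_{\theta}(\phi(X))$, with minimality equivalent to the absence of a degenerate direction and hence to strict convexity, plus the explicit line $\{\theta+ta\}$ of identical densities in the over-complete case, is exactly the classical argument; the one point worth making explicit is that a.s.\ constancy of $\langle a,\phi(X)\rangle$ under one $p_{\theta}$ is the same as $\nu$-a.e.\ constancy on the support (since $p_{\theta}>0$ there), which is what lets a single degenerate Hessian direction propagate to all $\theta$ and match the definition of minimality. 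Part (2) via maximizing $G(\theta)=\langle\mu,\theta\rangle-A(\theta)$, with coercivity from the support-function growth $\lim_{t\to\infty}A(\theta_{0}+td)/t=\sigma_{\overline{\mathcal{M}}}(d)>\langle\mu,d\rangle$ for interior $\mu$, and boundary escape excluded by essential smoothness, is likewise the standard route (Rockafellar's Legendre-type correspondence). Two caveats you should surface rather than leave implicit: the claim that regular families are essentially smooth (steep) is itself a nontrivial classical theorem of Barndorff-Nielsen, not a triviality to be cited in passing; and the theorem as stated in the paper omits the regularity hypothesis, so your argument silently relies on the paper's standing convention that all families considered are regular --- without steepness, surjectivity onto $\mathrm{Int}(\mathcal{M})$ genuinely fails. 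Neither caveat is a gap in your reasoning, since you flag the essential-smoothness package and the identification $\mathrm{Int}(\mathrm{dom}\,A^{*})=\mathrm{Int}(\mathcal{M})$ as the technical crux, but a complete write-up would need to discharge both.
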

%

Another important connection between the mean parameterization and the log partition follows using duality theory.  The conjugate dual function to $A$, which we denote by $A^{*}$, is defined as follows:
\begin{equation}
A^{*}(\mu)=\sup_{\theta\in\Omega} \left\{ \left\langle \theta,\mu\right\rangle -A(\theta)\right\},
\end{equation}
where $\mu \in \mathbb{R}^{d}$ is a fixed vector of so-called dual variables of the same
dimension as $\theta$. These dual variables turn out to have a natural interpretation as mean parameters.  The theorems below connect the conjugate dual function of $A$ to the Shannon Entropy of $p_{\theta(\mu)}$, $H(p_{\theta(\mu)})$ and the mean parametrization.
\begin{thm}
For any $\mu\in\mathrm{Int}(\mathcal{M})$ let $\theta(\mu)$ denote
the unique canonical parameter satisfying the dual matching condition.
The conjugate dual function $A^{*}$ takes the form 
\begin{equation}
A^{*}(\mu)=\begin{cases}
-H(p_{\theta(\mu)}) & \text{if }\mu\in\mathrm{Int}(\mathcal{M})\\
+\infty & \text{if }\mu\notin\overline{\mathcal{M}}
\end{cases}.
\end{equation}
\end{thm}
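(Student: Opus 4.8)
The plan is to split the argument into the two regimes appearing in the statement and to exploit the convex-duality structure already assembled: $A$ is a differentiable convex function on the open set $\Omega$ (\Cref{prop:exp-family-cumulant-mean}), its gradient $\nabla A$ maps $\Omega$ onto $\mathrm{Int}(\mathcal{M})$ injectively in the minimal case (the surjectivity/injectivity theorem above), and $A^*$ is by definition the Legendre--Fenchel conjugate of $A$. Since the whole result is a computation of this conjugate, the natural tools are first-order optimality for the interior case and a separating-hyperplane argument for the exterior case.

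For the interior case, fix $\mu \in \mathrm{Int}(\mathcal{M})$ and consider $F(\theta) = \langle \theta, \mu\rangle - A(\theta)$, the objective whose supremum defines $A^*(\mu)$. First I would observe that $F$ is concave (because $A$ is convex) and smooth on $\Omega$, so any stationary point is a global maximizer; setting $\nabla F(\theta) = \mu - \nabla A(\theta) = 0$ produces the dual matching condition $\nabla A(\theta(\mu)) = \mu$, which by the preceding theorem is solvable and, in the minimal case, by a unique $\theta(\mu) \in \Omega$. Substituting this optimizer back yields
\[
A^*(\mu) = \langle \theta(\mu), \mu\rangle - A(\theta(\mu)).
\]
The key step is then to recognize, using the mean identity $\mu = \nabla A(\theta(\mu)) = \mathbb{E}_{\theta(\mu)}[\phi(X)]$, that this expression is the expectation under $p_{\theta(\mu)}$ of $\langle \theta(\mu), \phi(X)\rangle - A(\theta(\mu)) = \log p_{\theta(\mu)}(X)$. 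Hence $A^*(\mu) = \mathbb{E}_{\theta(\mu)}[\log p_{\theta(\mu)}(X)] = -H(p_{\theta(\mu)})$, the claimed value.

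For the exterior case, fix $\mu \notin \overline{\mathcal{M}}$. Since $\overline{\mathcal{M}}$ is closed and convex, the separating-hyperplane theorem yields a direction $a \in \mathbb{R}^d$ and $\epsilon > 0$ with $\langle a, \mu\rangle \geq \langle a, \nu\rangle + \epsilon$ for every $\nu \in \overline{\mathcal{M}}$. Because $\nabla A(\theta) = \mathbb{E}_\theta[\phi(X)] \in \mathcal{M}$ for every $\theta \in \Omega$, this gives $\langle a, \nabla A(\theta)\rangle \leq \langle a, \mu\rangle - \epsilon$ uniformly. Choosing any $\theta_0 \in \Omega$ and following the ray $\theta(t) = \theta_0 + t a$, I would differentiate $g(t) = F(\theta(t))$ to get $g'(t) = \langle a, \mu\rangle - \langle a, \nabla A(\theta(t))\rangle \geq \epsilon$, so $g(t) \geq g(0) + \epsilon t \to +\infty$ and therefore $A^*(\mu) = +\infty$.

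The main obstacle is the domain bookkeeping in the exterior case: the estimate $g'(t) \geq \epsilon$ is only valid while $\theta(t)$ remains in $\Omega$, so I must rule out the ray leaving the effective domain in finite time. I would close this gap by noting that $h(t) = A(\theta(t))$ is convex with $h'(t) = \langle a, \nabla A(\theta(t))\rangle \leq \langle a, \mu\rangle - \epsilon$ bounded above; combined with the lower semicontinuity of $A$ (immediate from Fatou applied to the integral in \ref{partition_fun}) and the openness of $\Omega$, a convex function whose derivative is bounded above cannot blow up at a finite boundary point, which forces the ray to stay in $\Omega$ for all $t \geq 0$. A secondary subtlety worth flagging is that the theorem is silent on the boundary $\overline{\mathcal{M}} \setminus \mathrm{Int}(\mathcal{M})$, where the attainment argument of the interior case breaks down and $A^*$ is recovered only as a limit; I would simply remark that the two stated cases are precisely the regimes admitting a clean closed form.
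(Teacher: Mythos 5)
Your proof is correct. The paper itself states this theorem without proof, quoting it as background from \cite{wainwrightGraphicalModelsExponential2007}, and your argument is essentially the standard one from that source: stationarity of the concave objective plus the mean identity $\mu=\nabla A(\theta(\mu))$ identifies $A^{*}(\mu)=\langle\theta(\mu),\mu\rangle-A(\theta(\mu))$ with $\mathbb{E}_{\theta(\mu)}[\log p_{\theta(\mu)}(X)]=-H(p_{\theta(\mu)})$ in the interior case, while strict separation of $\mu$ from the closed convex set $\overline{\mathcal{M}}$ drives the objective to $+\infty$ along a ray in the exterior case. You also correctly close the one genuine subtlety that textbook renditions often gloss --- that the ray $\theta_{0}+ta$ must remain in $\Omega$ --- via lower semicontinuity of $A$ (Fatou) together with openness and convexity of $\Omega$ under regularity; the only caveat worth adding is that both the uniqueness of $\theta(\mu)$ and the surjectivity of $\nabla A$ onto $\mathrm{Int}(\mathcal{M})$ rest on the minimality assumption that the theorem's statement implicitly presupposes.
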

\begin{thm}
The log-partition function has the following variational representation
\begin{equation}
A(\theta)=\sup_{\mu\in\mathcal{M}}\left\{ \left\langle \theta,\mu\right\rangle -A^{*}(\mu)\right\} .
\end{equation}
Moreover, for all $\theta\in\Omega$, the supremum is attained uniquely
at $\mathrm{Int}(\mathcal{M})$ and is specified by the moment matching conditions
\begin{equation}
\mu=\int_{\mathcal{X}^{m}}\phi(x)p_{\theta}(x)v(dx)=\mathbb{E}_{\theta}[\phi(X)].
\end{equation}
\end{thm}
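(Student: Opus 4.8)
The plan is to recognize this statement as the biconjugation (Fenchel--Moreau) identity $A = A^{**}$ specialized to the cumulant function. The right-hand side $\sup_{\mu\in\mathcal{M}}\{\langle\theta,\mu\rangle - A^{*}(\mu)\}$ is exactly the double conjugate $A^{**}(\theta)$, so the assertion is that the biconjugate recovers $A$. Since Proposition \ref{prop:exp-family-cumulant-mean} already shows that $A$ is convex (and finite, hence proper, on the nonempty open set $\Omega$), the only structural fact left to secure before invoking biconjugation is lower semicontinuity of $A$. This follows from Fatou's lemma applied to the defining integral $\exp(A(\theta)) = \int \exp\langle\theta,\phi(x)\rangle\,\nu(dx)$: along any $\theta_{n}\to\theta$ the integrand converges pointwise, so $\liminf_{n}\exp(A(\theta_{n}))\geq\exp(A(\theta))$, and monotonicity of $\log$ gives $\liminf_{n}A(\theta_{n})\geq A(\theta)$.

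Rather than quote biconjugation as a black box, I would prove the two inequalities directly so that the moment-matching characterization drops out. The inequality $A(\theta)\geq\langle\theta,\mu\rangle - A^{*}(\mu)$ holds for every $\mu$ by rearranging the Fenchel--Young inequality $A^{*}(\mu)\geq\langle\theta,\mu\rangle - A(\theta)$, which is immediate from the definition of $A^{*}$ as a supremum; taking the supremum over $\mu$ yields $A(\theta)\geq A^{**}(\theta)$. For the reverse inequality it suffices to exhibit a single $\mu^{*}$ at which Fenchel--Young holds with equality, that is, $\langle\theta,\mu^{*}\rangle - A^{*}(\mu^{*}) = A(\theta)$.

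The natural candidate is $\mu^{*}=\nabla A(\theta)$, which by part (1) of Proposition \ref{prop:exp-family-cumulant-mean} equals $\mathbb{E}_{\theta}[\phi(X)] = \int\phi(x)p_{\theta}(x)\,\nu(dx)$, exactly the asserted moment-matching value. To verify equality I would examine $A^{*}(\mu^{*}) = \sup_{\theta'\in\Omega}\{\langle\theta',\mu^{*}\rangle - A(\theta')\}$: the inner objective is concave and differentiable in $\theta'$, so its supremum is attained where $\nabla A(\theta')=\mu^{*}=\nabla A(\theta)$. Since $A$ is convex and differentiable on $\Omega$, $\theta'=\theta$ is a maximizer, whence $A^{*}(\mu^{*}) = \langle\theta,\mu^{*}\rangle - A(\theta)$ and therefore $\langle\theta,\mu^{*}\rangle - A^{*}(\mu^{*}) = A(\theta)$, closing the gap. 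That the maximizing $\mu^{*}$ lies in $\mathrm{Int}(\mathcal{M})$ is precisely the content of the surjectivity statement $\nabla A:\Omega\to\mathrm{Int}(\mathcal{M})$ recorded in the preceding theorem.

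Uniqueness then follows from the injectivity clause of that same theorem: in the minimal case $A$ is strictly convex by Proposition \ref{prop:exp-family-cumulant-mean}(2), so $\nabla A$ is injective and the optimal $\mu^{*}=\nabla A(\theta)$ is the unique point of $\mathrm{Int}(\mathcal{M})$ satisfying the first-order condition. The step I expect to require the most care is establishing that the inner supremum defining $A^{*}(\mu^{*})$ is genuinely attained at an interior point $\theta'=\theta$ rather than escaping to the boundary of $\Omega$; this is where the regularity hypothesis (openness of $\Omega$) together with the surjectivity/injectivity correspondence between $\Omega$ and $\mathrm{Int}(\mathcal{M})$ does the real work, ruling out boundary escape and pinning the optimizer down uniquely.
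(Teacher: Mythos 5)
Your overall route --- the two Fenchel--Young inequalities together with the explicit witness $\mu^{*}=\nabla A(\theta)=\mathbb{E}_{\theta}[\phi(X)]$ --- is the standard conjugate-duality argument; it is essentially the proof of this result in \cite{wainwrightGraphicalModelsExponential2007}, which the paper itself states without proof as review material, so there is no in-paper proof to diverge from. The attainment half of your argument is correct: $g(\theta')=\left\langle \theta',\mu^{*}\right\rangle -A(\theta')$ is concave and differentiable on the open convex set $\Omega$ with $\nabla g(\theta)=0$, and the gradient inequality $g(\theta')\leq g(\theta)+\left\langle \nabla g(\theta),\theta'-\theta\right\rangle =g(\theta)$ already forces $\theta$ to be a global maximizer over $\Omega$. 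Note that this same one-line observation disposes of the ``boundary escape'' worry you flag at the end; no additional care, and no appeal to the surjectivity/injectivity correspondence, is needed there beyond the concavity you have already invoked. (Your Fatou lower-semicontinuity step, while true, is never used in your direct two-inequality argument; it would matter only if you invoked Fenchel--Moreau as a black box, e.g.\ to handle $\theta\notin\Omega$.)

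The genuine flaw is in the uniqueness step, where injectivity of $\nabla A$ controls the wrong variable. Injectivity (equivalently, strict convexity of $A$ under minimality) says that for a \emph{fixed mean} $\mu$ there is at most one $\theta'$ with $\nabla A(\theta')=\mu$; that gives uniqueness of the maximizer of the \emph{inner} problem defining $A^{*}(\mu)$, not uniqueness of the maximizing $\mu$ in the \emph{outer} problem $\sup_{\mu}\left\{ \left\langle \theta,\mu\right\rangle -A^{*}(\mu)\right\}$, which is what the theorem asserts. What actually pins down the outer maximizer is differentiability of $A$ at $\theta$: if $\mu'\in\mathcal{M}$ attains the outer supremum, then $A^{*}(\mu')=\left\langle \theta,\mu'\right\rangle -A(\theta)$, so the supremum defining $A^{*}(\mu')$ is attained at the interior point $\theta$, and stationarity there yields $\mu'=\nabla A(\theta)=\mu^{*}$; equivalently, every maximizer is a subgradient of $A$ at $\theta$, and $\partial A(\theta)=\{\nabla A(\theta)\}$ is a singleton by part (1) of \ref{prop:exp-family-cumulant-mean}. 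In particular, uniqueness of the optimal $\mu$ requires no minimality hypothesis at all --- minimality is what makes $\theta(\mu)$ unique in the companion theorem on $A^{*}$, a different statement --- so your restriction to the minimal case both misidentifies the mechanism and needlessly weakens the conclusion. With that substitution your proof is complete and matches the standard one.
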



The final connection with the log partition function connects Bregman and Kullback-Leibler divergences. We introduce the notion of Bregman Divergence since its connection
with the Kullback-Leibler divergence of exponential family which will be
used frequently in the proof of the recovery upper bound.
\begin{defn}
Let $S$ be a closed convex subset of $\mathbb{R}^{m}$ and $\Phi:S\subset\text{dom}(\Phi)\rightarrow\mathbb{R}$
a continuously differentiable and strictly convex function. The Bregman
divergence of $\Phi$, denoted as $d_{\Phi}:S\times S\rightarrow[0,\infty)$
is defined as 
\begin{equation}
d_{\Phi}(x,y)=\Phi(x)-\Phi(y)-\left\langle x-y,\nabla\Phi(y)\right\rangle .
\end{equation}
The next proposition from \cite{wainwrightGraphicalModelsExponential2007}
gives the connection of Kullback-Leibler divergence of distributions
in exponential family and Bregman divergence.
\end{defn}
\begin{prop}
For exponential family distributions, the Bregman divergence corresponds
to the Kullback-Leibler divergence with $\Phi=A.$
\end{prop}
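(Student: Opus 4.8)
The plan is to unfold both divergences directly from their definitions and show they coincide once the mean parametrization is invoked. Fix two canonical parameters $\theta_{1},\theta_{2}\in\Omega$ with associated densities $p_{\theta_{1}},p_{\theta_{2}}$. First I would write out the Kullback--Leibler divergence
\[
D_{\mathrm{KL}}(p_{\theta_{1}}\|p_{\theta_{2}})=\mathbb{E}_{\theta_{1}}\!\left[\log\frac{p_{\theta_{1}}(X)}{p_{\theta_{2}}(X)}\right]
\]
and substitute the exponential family form $\log p_{\theta}(x)=\langle\theta,\phi(x)\rangle-A(\theta)$. The log-likelihood ratio then collapses to the affine expression $\langle\theta_{1}-\theta_{2},\phi(x)\rangle-A(\theta_{1})+A(\theta_{2})$, in which the only $x$-dependence is linear in $\phi$. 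Taking the expectation under $p_{\theta_{1}}$ and using linearity reduces everything to the single moment $\mathbb{E}_{\theta_{1}}[\phi(X)]$.

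The key step is to apply the first part of \Cref{prop:exp-family-cumulant-mean}, which identifies $\mathbb{E}_{\theta_{1}}[\phi(X)]=\nabla A(\theta_{1})$. After this substitution the divergence becomes
\[
D_{\mathrm{KL}}(p_{\theta_{1}}\|p_{\theta_{2}})=A(\theta_{2})-A(\theta_{1})-\langle\theta_{2}-\theta_{1},\nabla A(\theta_{1})\rangle,
\]
which is exactly $d_{A}(\theta_{2},\theta_{1})$, the Bregman divergence generated by $\Phi=A$ evaluated at the pair $(\theta_{2},\theta_{1})$. This establishes the claimed correspondence, so the entire argument rests on the cumulant-mean identity already proved.

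The main point to handle with care is the bookkeeping of arguments: both the KL and Bregman divergences are asymmetric, so one must verify that the ordering of the pair matches on each side. Here $D_{\mathrm{KL}}(p_{\theta_{1}}\|p_{\theta_{2}})$ corresponds to $d_{A}(\theta_{2},\theta_{1})$, with the ``base point'' $\theta_{1}$ appearing as the \emph{second} Bregman argument. I would also check the regularity hypotheses implicit in the Bregman definition: $A$ must be continuously differentiable and strictly convex on the relevant domain. Differentiability of all orders and convexity are guaranteed by \Cref{prop:exp-family-cumulant-mean}, and strict convexity holds once we assume a minimal representation, so $d_{A}$ is genuinely well-defined. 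No deeper obstacle arises; the argument is a direct computation anchored by the mean parametrization.
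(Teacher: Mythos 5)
Your proof is correct and is exactly the standard argument: expanding the log-likelihood ratio to $\langle\theta_{1}-\theta_{2},\phi(x)\rangle-A(\theta_{1})+A(\theta_{2})$, invoking $\mathbb{E}_{\theta_{1}}[\phi(X)]=\nabla A(\theta_{1})$ from \ref{prop:exp-family-cumulant-mean}, and matching $D_{\mathrm{KL}}(p_{\theta_{1}}\|p_{\theta_{2}})=d_{A}(\theta_{2},\theta_{1})$ with the base point as the second Bregman argument, which is the ordering the paper itself uses later when it writes $\mathrm{KL}(\widehat{\Theta}_{ij},\Theta_{ij})$ as a Bregman divergence in the proof of \ref{thm:2}. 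The paper gives no proof of this proposition, deferring to \cite{wainwrightGraphicalModelsExponential2007}, whose derivation is this same computation, and your checks on strict convexity under minimality and smoothness of $A$ are precisely the hypotheses needed for $d_{A}$ to be well-defined.
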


\subsubsection{Common Examples of Members of the Exponential family}
\begin{example}[Gaussian]
The Gaussian distribution is widely used in modeling continuous data.
A Gaussian random variable with mean $\mu$ and variance $\sigma$
has the following form of density:
\begin{equation}
f_{X}(x\vert\mu,\sigma)=\frac{1}{\sqrt{2\pi\sigma^{2}}}\exp\left\{ -\frac{(x-\mu)^{2}}{2\sigma^{2}}\right\} \; (\mu,\sigma)\in\mathbb{R}\times\mathbb{R}^{+}
\end{equation}
which in its exponential family form, could be written as 
\begin{align}
f_{X}(x\vert\mu,\sigma) & =\frac{1}{\sqrt{2\pi\sigma^{2}}}\exp\left\{ -\log\sigma-\frac{x^{2}}{2\sigma^{2}}+\frac{\mu x}{\sigma^{2}}-\frac{\mu^{2}}{2\sigma^{2}}\right\} \\
 & =\frac{1}{\sqrt{2\pi\sigma^{2}}}\exp\left\{ \left\langle \theta,\phi(x)\right\rangle -A(\theta)\right\} ,
\end{align}
where $\theta=(\mu/\sigma^{2},-1/(2\sigma^{2}))$ and $\phi(x)=(x,x^{2}).$
The canonical parameterization of the Gaussian distributions is the mean parametrization. 
\end{example}
\begin{example}[Gamma]
The Gamma distribution is a two parameter continuous probability
distribution. It is often used to model the size of insurance claims
and rainfalls, see \cite{hewittMethodsFittingDistributions1979}, \cite{husakUseGammaDistribution2007}. In wireless communication, the Gamma distribution
is used to model the multi-path fading of signal power. It also has
wide application in the field of neuroscience, genomics, and oncology.
A random variable $X$ is said to follow Gamma distribution with parameter
$\alpha$ and scale $\theta$ if it has probability density function
\begin{equation}
f_{X}(x|\alpha,\theta)=\frac{1}{\Gamma(\alpha)\theta^{\alpha}}x^{\alpha-1}e^{-\frac{x}{\theta}}\text{ for }(\alpha,\theta)\in\mathbb{R}^{+}\times\mathbb{R}^{+}
\end{equation}
In its exponential family canonical form, we have 
\begin{align}
f_{X}(x\vert\alpha,\theta)  
  =\exp\left(-\log\Gamma(\alpha)-\alpha\log\theta+(\alpha-1)\log x-\frac{x}{\theta}\right).
\end{align}
Note that $\mathbb{E}[X]=\theta\alpha.$ Hence, the mean parameterization
for Gamma distribution can be written as 
\begin{align}
f_{X}(x\vert\mu,\alpha)  
  =\exp\left(x\left(-\frac{\alpha}{\mu}\right)+(\alpha-1)\log x-\log\Gamma(a)-\alpha\log\mu+\alpha\log\alpha\right).
\end{align}
\end{example}
\begin{example}[Bernoulli]
The Bernoulli distribution is the most used distribution to model binary data.
In its most common form, the probability mass function (p.m.f) for a Bernoulli random variable as follows 
\begin{equation}
\mathbb{P}(X=x\vert p)=p^{x}(1-p)^{1-x},\text{ for }(x,p)\in\{0,1\}\times[0,1]
\end{equation}
which in its exponential family form, could be rewritten as 
\begin{equation} \label{bern_pmf}
\mathbb{P}(X=x\vert p)=\exp\left\{ x\log\frac{p}{1-p}+\log(1-p)\right\} .
\end{equation}
Since $\mathbb{E}[X]=p,$ \ref{bern_pmf} is also the mean parameterization with $p=\mu$.
\end{example}
\begin{example}[Poisson]
The Poisson distribution is frequently used in fitting count data. A
Poisson random variable is often defined by its being equipped with
the following p.m.f 
\begin{equation}
\mathbb{P}(X=x\vert\lambda)=\lambda^{x}\frac{e^{-\lambda}}{x!}\text{ for }(x,\lambda)\in\mathbb{N}\times\mathbb{R}^{+}.
\end{equation}
In its exponential family canonical form, 
\begin{equation} \label{poisson_pmf}
\mathbb{P}(X=x\vert\lambda)=\frac{1}{x!}\exp\left\{ x\log\lambda-\lambda\right\} .
\end{equation}
Note that $\mathbb{E}[X]=\lambda,$ \ref{poisson_pmf} is also the mean parameterization with $\lambda=\mu$.
\end{example}

\begin{example}[Negative Binomial]
The negative binomial distribution is often used to model the number of failures
before $r$th success in a stream of independent Bernoulli trials. It is also used as a alternative to the Poisson distribution for count data that accounts for possible over-dispersion via the representation as a Poisson-Gamma mixture. A most common way to parameterize negative binomial distribution is
by $r$, the number of success and the probability of success $p$,
namely, a random variable $X$ is said to have negative binomial distribution
if it has the following p.m.f 
\begin{equation}
\mathbb{P}(X=k)=\binom{k+r-1}{k}p^{r}(1-p)^{k},\text{ for }p\in[0,1],k\in\mathbb{N}.
\end{equation}
We note that such definition could be extended to $k\in\mathbb{R}$
with a slight extension of the definition: 
\begin{equation}
\mathbb{P}(X=k)=\frac{\Gamma(k+r)}{k!\Gamma(r)}p^{r}(1-p)^{k}\text{ for }p\in[0,1],k\in\mathbb{R}.\label{eq:extended-nb-pmf}
\end{equation}
It's easy to check that when $k\in\mathbb{N},$ the extended version
falls reduces to the original negative binomial distribution. It is well-known
that extended definition \ref{eq:extended-nb-pmf} could be interpreted
as a Poisson-Gamma mixture, which is sometimes useful in some model
fitting problems. Its mean parameterization of the following form (cf.
\ref{lem:mean-parametrization-negative-binomial}): 
\begin{equation}
\mathbb{P}(X=k\vert\mu,r)=\frac{\Gamma(k+r)}{\Gamma(r)k!}\left(\frac{r}{\mu+r}\right)^{r}\left(\frac{\mu}{\mu+r}\right)^{k},\text{ for }\mu\in\mathbb{R}^{+},r\in\mathbb{N}.
\end{equation}
\end{example}
In the previous examples we have seen how the definition of exponential
family manifests in simple scalar random variables. To further illustrate
the usage of concept of exponential family in the setting of our problem
interest, we now showcase the random exponential family matrices. 
\begin{example}[Independent Exponential Family Random Matrix]
 \label{exa:exponential random matrix} Let $X\in\mathbb{R}^{m\times n}$
be a random matrix, where each entry $X_{ij}$ is drawn independently
from the same exponential family characterized by the density $\mathbb{P}(X_{ij}\vert\Theta_{ij})=h(X_{ij})\exp(X_{ij}\Theta_{ij}-G(\Theta_{ij}))$,
then it follows from factorization theorem, 
\[
\mathbb{P}(X\vert\Theta)=\prod_{i,j}h(X_{ij})\exp(X_{ij}\Theta_{ij}-G(\Theta_{ij})=h(X)\exp\left(\left\langle X,\Theta\right\rangle -G(\Theta)\right),
\]
where by slightly abuse of notation we denote $G:\mathbb{R}^{m\times n}\mapsto\mathbb{R}$
as $G(\Theta)=\sum_{ij}G(\Theta_{ij})$ and $h(X)=\prod_{ij}h(X_{ij}).$
\end{example}
The next example is important as it serves as the basis of our model formulation.
\begin{example}[Structurally Heterogeneous Exponential Family Random Matrix]
 Let $X=[X^{(1)}\ X^{(2)}\ ...\ X^{(k)}]$ be $\mathbb{R}^{m\times n},$
where $X^{(i)}\in\mathbb{R}^{m\times n_{i}}$ such that $\sum_{i=1}^{k}n_{i}=n$
be a random matrix consisting of column-wise disjoint sub-matrices
$X^{(i)},$ and for each $X^{(s)}$ is an independent exponential
family random matrix as in \ref{exa:exponential random matrix} with
density $\mathbb{P}(X^{(s)}\vert\Theta^{(s)})=h^{(s)}(X)\exp(\left\langle X^{(s)},\Theta^{(s)}\right\rangle -G^{(s)}(\Theta^{(s)})).$
Again, by independence and factorization theorem, we have that the
density for $X$ as 
\begin{equation}
\mathbb{P}(X\vert\Theta)=\prod_{s=1}^{k}h^{(s)}(X^{(s)})\exp\left(\left\langle X^{(s)},\Theta^{(s)}\right\rangle -G^{(s)}(\Theta^{(s)})\right).
\end{equation}
\end{example}

\subsection{Matrix Norms \label{subsec:max-norm-nuclear-norm}}

Both the nuclear norm and max norm will serve as important tools in the
derivation of a tractable formulation of the matrix completion problem.
Before we delve deeper, a few definitions are needed.
\begin{defn}
Let $A\in\mathbb{R}^{n_{1}\times n_{2}}$ and $A=U\Sigma V^{T}=\sum_{i=1}^{n_{1}\land n_{2}}\sigma_{i}u_{i}v_{i}^{T}$
be its singular value decomposition, then the nuclear norm of $A$
is defined as 
\begin{equation}
\left\Vert A\right\Vert _{*}=\sum_{i=1}^{n_{1}\land n_{2}}\sigma_{i}.
\end{equation}
\end{defn}


Since the rank function could be defined as the \textquotedbl $\ell_{0}$  norm\textquotedbl{}
of the vector of singular values of a matrix, the nuclear norm, which
can be seen as the $\ell_{1}$ counterpart of the same concept, intuitively
should be a good approximation of the rank function. Formally, \cite{maryamMatrixRankMinimization2002}
showed that the convex envelop of $\text{rank}(X)$ for $X\in\left\{ X\in\mathbb{R}^{n\times m}:\left\Vert X\right\Vert \leq1\right\} $
is the nuclear norm.
\begin{defn}
Let $A\in\mathbb{R}^{n_{1}\times n_{2}}$. The max norm of $A$ is
defined as 
\begin{equation}
\left\Vert A\right\Vert _{\text{max}}=\min_{U,V\text{ s.t.}A=UV^{T}}\left\Vert U\right\Vert _{\ell_{2}\rightarrow\ell_{\infty}}\left\Vert V\right\Vert _{\ell_{2}\rightarrow\ell_{\infty}},
\end{equation}
where $\left\Vert \cdot\right\Vert _{\ell_{2}\rightarrow\ell_{\infty}}$
is the operator norm from $\ell_{2}$ to $\ell_{\infty}$ defined
by 
\begin{equation}
\left\Vert A\right\Vert _{\ell_{2}\rightarrow\ell_{\infty}}=\sup_{\left\Vert x\right\Vert _{2}\leq1}\left\Vert Ax\right\Vert _{\infty}.
\end{equation}
\end{defn}


The direct connection between max norm and rank is a bit technical.
So we avoid it here. Instead, we observe this connection by taking
a look at the connection between the max norm and the nuclear norm.
It is well known that nuclear norm has the following alternative representation:
\begin{equation}
\left\Vert A\right\Vert _{*}=\min_{\substack{\left\Vert u_{j}\right\Vert _{2}=\left\Vert v_{j}\right\Vert _{2}=1,\\
M=\sum_{j}\sigma_{j}u_{j}v_{j}
}
}\sum_{j}\left|\sigma_{j}\right|.
\end{equation}
On the other hand, \cite{jamesonSummingNuclearNorms1987} showed that
\begin{equation}
\left\Vert A\right\Vert _{\max}\asymp\min_{\substack{\left\Vert u_{j}\right\Vert _{\infty}=\left\Vert v_{j}\right\Vert _{\infty}=1,\\
M=\sum_{j}\sigma_{j}u_{j}v_{j}^{T}
}
}\sum_{j}\left|\sigma_{j}\right|,
\end{equation}
where the factor of equivalence is the Grothendieck's constant $K\in(1.67,1.79).$
Roughly, the similarity in representations suggests that
max norm may be a good approximation of rank function. For a more
precise characterization of the relationship between rank and the
max norm, see \cite{srebroRankTracenormMaxnorm2005}.


Both the max and nuclear norms have their respective semi-definite program
representations. This makes the numerical computation easier, especially
in the case of max norm, where computing from the original definition
is NP-hard. Let $A\in\mathbb{R}^{n\times m}$ be an arbitrary matrix.
Then the nuclear norm of $A$ was be represented by \cite{maryamMatrixRankMinimization2002}
as the solution to the following semi-definite programs:
\begin{align}
\max\quad & A\bullet X\\
\text{subject to\ensuremath{\quad}} & \begin{bmatrix}I_{m} & Y\\
Y^{T} & I_{n}
\end{bmatrix}\succeq0
\end{align}
and its dual
\begin{align}
\min\quad & \text{tr}(W_{1})+\text{tr}(W_{2})\\
\text{subject to}\quad & \begin{bmatrix}W_{1} & -\frac{1}{2}A\\
-\frac{1}{2}A^{T} & W_{2}
\end{bmatrix}\succeq0.
\end{align}
Furthermore, $\left\Vert A\right\Vert _{*}\leq t$ if and only if
there exists $Y\in\mathcal{S}^{m}$ and $Z\in\mathcal{S}^{n}$ such that
\begin{equation}
\text{tr}(Y)+\text{tr}(Z)\leq2t,\begin{bmatrix}Y & X\\
X^{T} & Z
\end{bmatrix}\succeq0.
\end{equation}
On the other hand, \cite{srebroMaximummarginMatrixFactorization2004}
showed that $\left\Vert A\right\Vert _{\max}$ can be represented
as the solution to the following semi-definite program: 
\begin{align}
\min\quad & R\\
\text{subject to\ensuremath{\quad}} & \begin{bmatrix}W_{1} & A\\
A^{T} & W_{2}
\end{bmatrix}\succeq0,\\
 & \left\Vert \text{diag}(W_{1})\right\Vert _{\infty}\leq R,\left\Vert \text{diag}(W_{2})\right\Vert _{\infty}\leq R.
\end{align}
We will see later that this representation will facilitate the reformulation of our
objective function.

\section{Model Specification \label{sec:model-specification}}

\subsection{Set up and assumptions}

Let $\Theta=[\Theta_{\tau_{1}}\ \Theta_{\tau_{2}}\ ...\ \Theta_{\tau_{\left|\mathcal{T}\right|}}]\in\mathbb{R}^{n_{1}\times N_{2}}$
be the full matrix (unknown truth), where $[\Theta_{\tau}]_{\tau\in\mathcal{T}}\in\mathbb{R}^{n_{1}\times n_{2}^{\tau}}$
represent sub-matrices whose entries follow different distributions
in the exponential family, that is, 
\begin{equation}
\frac{d\mathbb{P}_{\Theta_{ij}^{\tau}}}{d\lambda}=h^{\tau}(\Theta_{ij}^{\tau})\exp(X_{ij}^{\tau}\eta_{ij}^{\tau}-G^{\tau}(\eta_{ij}^{\tau}))
\end{equation}
and $N_{2}:=\sum_{\tau\in\mathcal{T}}n_{2}^{\tau}.$ Let observed
matrix be $Y\in\mathbb{R}^{n_{1}\times N_{2}}.$ Additionally, we
assume the entries of $\Theta$ are uniformly bounded, that is, $\left\Vert \Theta\right\Vert _{\infty}\leq\gamma$
for some $\gamma\in\mathbb{R}.$ For ease of notation, let $\mathcal{C}(\gamma):=\left\{ X\in\mathbb{R}^{n_{1}\times N_{2}}:\left\Vert X\right\Vert _{\infty}\leq\gamma\right\} $
be the $\ell_{\infty}$ball with radius $\gamma$.


Before we delve into estimation, we first address the procedure
with which the observed incomplete matrix, $Y$, is determined. Formally, $Y$
is generated by associating each full matrix $X_{ij}^{\tau}$ with a Bernoulli
random variable $\delta_{ij}^{\tau}\sim\text{Bin}(\pi_{ij}^{\tau})$
and let $Y_{ij}^{\tau}=\delta_{ij}^{\tau}X_{ij}^{\tau}.$ Here, $\pi_{ij}^{\tau}$
can be thought of as the sampling rate. In the easiest case, uniform
sampling scheme, we could consider $\pi_{ij}^{\tau}=\alpha\in(0,1)$,
where $\alpha$ is some constant. For an intuitive understanding for
this scheme, imagine we are scanning through $X$ entry by entry in
a row-major manner, for each entry, we stop and toss a coin which
has a probability of $\alpha$ landing on a head, and probability
of $1-\alpha$ on a tail. If landed on a head, we keep $X_{ij}^{\tau}$
the same; otherwise we let $X_{ij}^{\tau}=0.$ Notice that in this
example, we used the same coin throughout the double loop. In the
non-uniform sampling scheme, the same coin analog still holds with
one simple modification, we can potentially use a different coin with
different head probability for every entry at which we stop.


We now introduce two mild but necessary assumptions for
our model. These two assumptions are common in previous literature,
cf. \cite{alayaCollectiveMatrixCompletion2019,gunasekarExponentialFamilyMatrix2014,lafondLowRankMatrix2015,kloppNoisyLowrankMatrix2014}
\begin{assumption}
\label{assu:sample}Each entry has a positive probability of being
observed, that is,
\begin{equation}
\min_{\tau}\min_{i,j\in[n_{1}]\times[n_{2}^{\tau}]}\pi_{ij}^{\tau}\geq p
\end{equation}
for $p\in(0,1).$
\end{assumption}
\begin{assumption}
\label{assu:curvature} The curvature of $A^{\tau}(x)$ is bounded,
that is 
\begin{align}
\sup_{\eta\in[-\gamma-\frac{1}{K},\gamma+\frac{1}{K}]}\left[\nabla^{2}G^{\tau}\right](\eta) & \leq U_{\gamma}^{2}\\
\inf_{\eta\in[-\gamma-\frac{1}{K},\gamma+\frac{1}{K}]}\left[\nabla^{2}G^{\tau}\right](\eta) & \leq L_{\gamma}^{2}.
\end{align}
\end{assumption}
\noindent Note that \ref{assu:sample} is natural in the sense that if there
are some entries with $0$ probability of being sampled, then the
problem could become completely intractable in the sense that if we
let a whole row to be unobserved then it would be possible that the
matrix is full rank and thus non-recoverable.  In addition,  \ref{assu:curvature} is an sufficient condition for $\Theta_{ij}^{\tau}$
to have uniformly bounded variance and sub-exponential tails, which
serve as a license that enables us to invoke concentration inequalities
in our proof. \cite{alayaCollectiveMatrixCompletion2019}
shows that a wide range of distributions satisfy Assumptions 1 and 2, some of these are reproduced in Table \ref{table:curvature}.

\begin{table}
\centering
\begin{tabular}[h]{ccc}
\toprule 
Model & $L_{\gamma}$ & $U_{\gamma}$ \\
\midrule
Normal & $\sigma^2$ & $\sigma^2$ \\ 
Binomial & $\frac{Ne^{-(\gamma+\frac{1}{K})}}{(1+e^{\gamma+\frac{1}{K}})^{2}}$ & $\frac{N}{4}$ \\ 
Gamma &$\frac{\alpha}{(\gamma+\frac{1}{K})^{2}}$ &$\frac{\alpha}{(\left|\gamma_{1}\right|\land\left|\gamma_{2}\right|)^{2}}$ \\ 
Negative binomial & $\frac{re^{-(\gamma+\frac{1}{K})}}{(1-e^{-(\gamma+\frac{1}{K})})^{2}}$ &$\frac{re^{(\gamma+\frac{1}{K})}}{(1-e^{(\gamma+\frac{1}{K})})^{2}}$ \\ 
Poisson & $e^{-(\gamma+\frac{1}{K})}$ & $e^{-(\gamma+\frac{1}{K})}$ \\
\bottomrule
\end{tabular}
\label{table:curvature}
\caption{Examples of $L_{\gamma}$ and $U_{\gamma}$ functions in Assumptions 1 and 2 for various member of the exponential family.}
\end{table}

\subsection{Estimation Procedure}

Since $\{\Theta_{ij}^{\tau}\}$ are independent, by construction, we
can write out the (normalized) negative log-likelihood function, $\ell(\Theta)$,
as 
\begin{equation}
-\frac{1}{n_{1}N_{2}}\sum_{\tau\in[\mathcal{T}]}\sum_{(i,j)\in[n_{1}]\times[n_{2}^{\tau}]}\delta_{ij}^{\tau}(Y_{ij}^{\tau}\Theta_{ij}^{\tau}-G^{\tau}(\Theta_{ij}^{\tau})).
\end{equation}
Using maximum likelihood principle, the straightforward approach is to let our estimator $\widehat{\Theta}$ be the solution to the
following program:
\begin{align}
\text{\ensuremath{\min_{\Theta\in\mathbb{R}^{n_{1}\times N_{2}}}\quad}} & \ell(\Theta)\\
\text{subject to}\quad & \text{rank}(\Theta)\text{ is low},\left\Vert \Theta\right\Vert _{\infty}\leq\gamma.
\end{align}

Since this program is non-convex, due to the nature of the
rank function, we consider a convex relaxation of the original problem
by nuclear norm:
\begin{align}
\text{\ensuremath{\min_{\Theta\in\mathbb{R}^{n_{1}\times N_{2}}}\quad}} & \ell(\Theta)\\
\text{subject to}\quad & \left\Vert \Theta\right\Vert _{*}\leq\gamma_{1},\left\Vert \Theta\right\Vert _{\infty}\leq\gamma.
\end{align}
Recent works have shown that nuclear norm alone doesn't perform well
in non-uniform sampling schemes. A max-norm regularization approach
is often used to address this issue  \cite{fangMaxnormOptimizationRobust2018,caiMaxNormConstrainedMinimization2013,caiMatrixCompletionMaxnorm2016a}.
However, using max-norm alone could lead to suboptimal recovery result;
therefore, we propose using a hybrid norm which combines the max norm
and nuclear, in our convex relaxation set up:
\begin{align}
\text{\ensuremath{\min_{\Theta\in\mathbb{R}^{n_{1}\times N_{2}}}\quad}} & \ell(\Theta)\label{eq:max nuc constrained program}\\
\text{subject to}\quad & \left\Vert \Theta\right\Vert _{*}\leq\gamma_{1},\left\Vert \Theta\right\Vert _{\text{max}}\leq\gamma_{2},\left\Vert \Theta\right\Vert _{\infty}\leq\gamma.
\end{align}
By convexity and strong duality, the admissible solution $\widehat{\Theta}$ of \ref{eq:max nuc constrained program} can also be obtained
by the following unconstrained program
\begin{align}
\widehat{\Theta} & =\stackrel[\Theta\in\mathcal{C}(\gamma)]{}{\text{argmin}}\ell(\Theta)+\lambda_{*}\left\Vert \Theta\right\Vert _{*}+\lambda_{\max}\left\Vert \Theta\right\Vert _{\text{max}}.\label{eq:p}
\end{align}

\section{Theoretical Properties\label{sec:theoretical-properties}}

\global\long\def\norm#1{\|#1\|}%

We now state the main result regarding the recovery of $\Theta.$
Due to spacing limitation, we state an imprecise version of the theorem
(ignoring multiplicative constant) and defer the precise versions
to Appendix A. 
\begin{thm}
\label{thm:2}Under \ref{assu:sample} and \ref{assu:curvature}, if we
set $\lambda_{*}\geq2\left\Vert \nabla\ell(\Theta|Y)\right\Vert $
and $\lambda_{\text{max}}\geq 0$, then with probability $1-\frac{4}{(n_{1}+N_{2})}$,
we have the following upper bounds, 
\begin{align}
 & \frac{1}{n_{1}N_{2}}\norm{\widehat{\Theta}}-\Theta_{\Pi,F}^{2}\\
\leq\  & \frac{C}{p}\max\Bigg\{ n_{1}N_{2}\mathrm{rank}(\Theta)\left(\frac{\lambda_{*}^{2}+\lambda_{*}\lambda_{\max}L_{\gamma}^{2}+L_{\gamma}^{4}\lambda_{\max}^{2}}{L_{\gamma}^{4}}+\left(1+\frac{\lambda_{\max}}{\lambda_{*}}+\frac{\lambda_{\max}^{2}}{\lambda_{*}^{2}}\right)\gamma^{2}(\mathbb{E}[\left\Vert \Sigma_{R}\right\Vert ])^{2}\right),\frac{\gamma^{2}\log(n_{1}+N_{2})}{n_{1}N_{2}}\Bigg\}
\end{align}
and 
\begin{align}
 & \frac{1}{n_{1}N_{2}}\norm{\widehat{\Theta}-\Theta}_{F}^{2}\\
\leq\  & \frac{C}{p^{2}}\max\Bigg\{ n_{1}N_{2}\mathrm{rank}(\Theta)\left(\frac{\lambda_{*}^{2}+\lambda_{*}\lambda_{\max}L_{\gamma}^{2}+L_{\gamma}^{4}\lambda_{\max}^{2}}{L_{\gamma}^{4}}+\left(1+\frac{\lambda_{\max}}{\lambda_{*}}+\frac{\lambda_{\max}^{2}}{\lambda_{*}^{2}}\right)\gamma^{2}(\mathbb{E}[\left\Vert \Sigma_{R}\right\Vert ])^{2}\right),\frac{\gamma^{2}\log(n_{1}+N_{2})}{n_{1}N_{2}}\Bigg\}.
\end{align}
\end{thm}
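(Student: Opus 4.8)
The plan is to treat $\widehat{\Theta}$ as a penalized $M$-estimator and follow the standard oracle-inequality program, adapting each step to the mixed exponential-family likelihood and the non-uniform Bernoulli sampling. Writing $\Delta := \widehat{\Theta} - \Theta$, the starting point is the \emph{basic inequality}: since $\widehat{\Theta}$ minimizes the objective in \ref{eq:p} and the truth $\Theta$ is feasible (it lies in $\mathcal{C}(\gamma)$ because $\|\Theta\|_\infty \le \gamma$), optimality yields
\[
\ell(\widehat{\Theta}) - \ell(\Theta) \;\le\; \lambda_*\big(\|\Theta\|_* - \|\widehat{\Theta}\|_*\big) + \lambda_{\max}\big(\|\Theta\|_{\max} - \|\widehat{\Theta}\|_{\max}\big).
\]
To turn the left-hand side into something quadratic in $\Delta$, I would Taylor-expand the loss: by convexity of each $G^\tau$ (\ref{prop:exp-family-cumulant-mean}) the second-order remainder is exactly a sampled Bregman divergence, which by the Bregman--Kullback-Leibler correspondence is the KL divergence of the exponential family. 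The lower curvature bound $L_\gamma^2$ of \ref{assu:curvature} then lower-bounds this remainder by a sampling-weighted quadratic $\tfrac{1}{2}L_\gamma^2 \|\Delta\|_{\Pi,F}^2$, isolating a first-order term $\langle \nabla\ell(\Theta|Y),\Delta\rangle$.

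Next I would control the first-order term by duality between the spectral and nuclear norms, $|\langle \nabla\ell(\Theta|Y),\Delta\rangle| \le \|\nabla\ell(\Theta|Y)\|\,\|\Delta\|_*$. The hypothesis $\lambda_* \ge 2\|\nabla\ell(\Theta|Y)\|$ is designed precisely so that this term is absorbed by the penalty. Invoking the decomposability of the nuclear norm at the low-rank matrix $\Theta$ — splitting $\Delta$ into its projection onto the tangent space of $\Theta$ and the orthogonal part — forces $\Delta$ into a cone on which $\|\Delta\|_* \apprle \sqrt{\mathrm{rank}(\Theta)}\,\|\Delta\|_F$; this is where the $\mathrm{rank}(\Theta)$ factor in the bound originates, and the max-norm penalty contributes the additional $\lambda_{\max}/\lambda_*$-dependent terms through the analogous inequality for $\|\Delta\|_{\max}$.

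The main obstacle is establishing \emph{restricted strong convexity} (RSC): the quadratic $\tfrac12 L_\gamma^2\|\Delta\|_{\Pi,F}^2$ is built from the random sampling operator, and I must show it dominates a deterministic multiple $c\,p\,L_\gamma^2 (n_1 N_2)^{-1}\|\Delta\|_F^2$ uniformly over the cone, up to a lower-order residual. This reduces to bounding the supremum of an empirical process indexed by low-rank, $\ell_\infty$- and max-norm-constrained matrices. I would symmetrize, apply a contraction step, and reduce the expected supremum to $\mathbb{E}[\|\Sigma_R\|]$, the expected spectral norm of the Rademacher-weighted sampling matrix; the $\ell_\infty$ and max-norm constraints are exactly what keep this complexity term finite and generate the $\gamma^2(\mathbb{E}[\|\Sigma_R\|])^2$ contributions. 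A peeling argument over the scale of $\|\Delta\|_F$ handles the residual, and the sub-exponential tails guaranteed by $U_\gamma^2$ in \ref{assu:curvature} supply the deviation bounds, calibrated to deliver probability $1 - 4/(n_1+N_2)$.

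Finally I would assemble the pieces. Combining the basic inequality, the Bregman lower bound, the absorbed first-order term, and the RSC lower bound produces a quadratic inequality in $\|\Delta\|_{\Pi,F}$ whose solution is the first stated bound on $(n_1 N_2)^{-1}\|\widehat{\Theta}-\Theta\|_{\Pi,F}^2$, carrying the single factor $1/p$. The unweighted Frobenius bound then follows immediately from \ref{assu:sample}: since every sampling weight is at least $p$, one has $\|\Delta\|_F^2 \le p^{-1}\|\Delta\|_{\Pi,F}^2$, which converts the weighted estimate into the Frobenius estimate at the cost of the second factor $1/p$, yielding the $1/p^2$ scaling.
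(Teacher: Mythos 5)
Your proposal is correct and follows essentially the same route as the paper's proof: the basic optimality inequality, the Bregman--Kullback-Leibler identity with the $L_\gamma^2$ curvature bound, trace duality with $\lambda_* \geq 2\|\nabla\ell(\Theta|Y)\|$ absorbing the first-order term, the nuclear-norm decomposability cone yielding the $\sqrt{\mathrm{rank}(\Theta)}$ and $\lambda_{\max}/\lambda_*$ factors, and finally $p\|A\|_F^2 \leq \|A\|_{\Pi,F}^2$ to pass from the $1/p$ weighted bound to the $1/p^2$ Frobenius bound. The only difference is packaging: the symmetrization--contraction--peeling argument you sketch for the restricted-strong-convexity step is imported by the paper as a ready-made lemma (Appendix A.1 of \cite{alayaCollectiveMatrixCompletion2019}), whose threshold $\beta$ case split is also what produces the $\gamma^2\log(n_1+N_2)/(n_1N_2)$ branch of the max that your sketch leaves implicit.
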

\begin{thm}
\label{thm:3}Under \ref{assu:sample} and \ref{assu:curvature}, if
we let choose $\lambda$ and $\lambda_{\text{max}}$ in the following
ways: 
\begin{equation}
\lambda_{*}=2c\frac{(U_{\gamma}\lor K)(\sqrt{n_{1}+N_{2}}+(\log(n_{1}\lor N_{2}))^{\frac{3}{2}}}{n_{1}N_{2}}\quad\text{and}\quad\frac{\lambda_{\text{max}}}{\lambda_{*}}\leq\kappa,
\end{equation}
where $\kappa>0$ is some constant, then the following recovery guarantees
hold:
\begin{equation}
\frac{1}{n_{1}N_{2}}\norm{\widehat{\Theta}-\Theta}_{\Pi,F}^{2}\leq\frac{C\mathrm{rank}(\Theta)(n_{1}\lor N_{2})}{pn_{1}N_{2}}\left(1+\frac{\log^{3}(n_{1}\lor N_{2})}{n_{1}\lor N_{2}}\right)\left((U_{\gamma}\lor K)^{2}\frac{1+\kappa L_{\gamma}^{2}+\kappa^{2}L_{\gamma}^{4}}{L_{\gamma}^{4}}+\gamma^{2}(1+\kappa+\kappa^{2})\right)
\end{equation}
and 
\begin{equation}
\frac{1}{n_{1}N_{2}}\left\Vert \widehat{\Theta}-\Theta\right\Vert _{F}^{2}\leq\frac{C\mathrm{rank}(\Theta)(n_{1}\lor N_{2})}{p^{2}n_{1}N_{2}}\left(1+\frac{\log^{3}(n_{1}\lor N_{2})}{n_{1}\lor N_{2}}\right)\left((U_{\gamma}\lor K)^{2}\frac{1+\kappa L_{\gamma}^{2}+\kappa^{2}L_{\gamma}^{4}}{L_{\gamma}^{4}}+\gamma^{2}(1+\kappa+\kappa^{2})\right).
\end{equation}
\end{thm}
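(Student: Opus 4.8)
The plan is to obtain Theorem~\ref{thm:3} directly from the master inequality of Theorem~\ref{thm:2} by carrying out three tasks: (i) verify that the prescribed value of $\lambda_*$ meets the threshold hypothesis $\lambda_*\geq 2\|\nabla\ell(\Theta\mid Y)\|$ with the probability $1-4/(n_1+N_2)$ already appearing in Theorem~\ref{thm:2}; (ii) control the expected spectral norm $\mathbb{E}[\|\Sigma_R\|]$; and (iii) perform the algebraic substitution, under the constraint $\lambda_{\max}/\lambda_*\leq\kappa$, that collapses the abstract bound into the stated closed form. Only (i) and (ii) are genuinely probabilistic; once they are in hand, (iii) is bookkeeping.

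For (i), the first move is to compute the score. Differentiating the normalized negative log-likelihood entrywise gives a gradient whose $(i,j)$ entry in block $\tau$ is proportional to $\delta_{ij}^\tau\bigl(X_{ij}^\tau-[\nabla G^\tau](\Theta_{ij}^\tau)\bigr)$. By the mean-parametrization identity $[\nabla G^\tau](\Theta_{ij}^\tau)=\mathbb{E}_\Theta[X_{ij}^\tau]$ of Proposition~\ref{prop:exp-family-cumulant-mean}, and the independence of $\delta_{ij}^\tau$ from the data, these entries are independent and mean zero. The second-derivative formula of the same proposition together with Assumption~\ref{assu:curvature} bounds the per-entry variance by $\nabla^2 G^\tau(\Theta_{ij}^\tau)\leq U_\gamma^2$ and, since the curvature is controlled on the inflated interval $[-\gamma-\tfrac1K,\gamma+\tfrac1K]$, supplies sub-exponential tails. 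I would then invoke a non-commutative (matrix) Bernstein inequality for sums of independent sub-exponential random matrices: its sub-Gaussian regime produces the $\sqrt{n_1+N_2}$ term and its tail regime the $(\log(n_1\lor N_2))^{3/2}$ term, while the variance proxy and the max-norm buffer (through Grothendieck's constant $K$) yield the prefactor $U_\gamma\lor K$; matching the deviation level to probability $1-4/(n_1+N_2)$ fixes the constant $c$. This concentration estimate is the crux of the argument, and its most delicate part is tracking the variance proxy carefully enough that the $U_\gamma\lor K$ scaling, rather than a cruder bound, is obtained.

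Task (ii) is handled by a symmetrization argument: $\Sigma_R$ is the Rademacher-reweighted sampling matrix $\sum_{\tau,(i,j)}\varepsilon_{ij}^\tau\,\delta_{ij}^\tau E_{ij}^\tau$ (suitably normalized), and a second application of a matrix concentration inequality bounds $\mathbb{E}[\|\Sigma_R\|]$ at the order $\sqrt{n_1\lor N_2}/(n_1 N_2)$ up to logarithmic and constant factors, i.e.\ at exactly the order that makes $n_1N_2(\mathbb{E}[\|\Sigma_R\|])^2$ comparable to $n_1N_2\lambda_*^2$. I expect to quote this as an auxiliary lemma from the appendix rather than re-derive it, so that both terms in the Theorem~\ref{thm:2} bracket end up sharing a common prefactor.

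Finally, for (iii), using $\lambda_{\max}\leq\kappa\lambda_*$ I would bound $\frac{\lambda_*^2+\lambda_*\lambda_{\max}L_\gamma^2+L_\gamma^4\lambda_{\max}^2}{L_\gamma^4}\leq \lambda_*^2\,\frac{1+\kappa L_\gamma^2+\kappa^2 L_\gamma^4}{L_\gamma^4}$ and $1+\frac{\lambda_{\max}}{\lambda_*}+\frac{\lambda_{\max}^2}{\lambda_*^2}\leq 1+\kappa+\kappa^2$. Substituting the value of $\lambda_*$ and expanding $(\sqrt{n_1+N_2}+(\log(n_1\lor N_2))^{3/2})^2=(n_1+N_2)\bigl(1+(\log(n_1\lor N_2))^{3/2}/\sqrt{n_1+N_2}\bigr)^2$, which is $\asymp (n_1\lor N_2)\bigl(1+\log^3(n_1\lor N_2)/(n_1\lor N_2)\bigr)$, factors out the common prefactor $\frac{\mathrm{rank}(\Theta)(n_1\lor N_2)}{p\,n_1N_2}\bigl(1+\log^3(n_1\lor N_2)/(n_1\lor N_2)\bigr)$ multiplying the bracket $\bigl((U_\gamma\lor K)^2\frac{1+\kappa L_\gamma^2+\kappa^2L_\gamma^4}{L_\gamma^4}+\gamma^2(1+\kappa+\kappa^2)\bigr)$, which is the first displayed inequality. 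The second, plain-Frobenius, bound then follows from the first by the elementary comparison $\|\cdot\|_{\Pi,F}^2\geq p\,\|\cdot\|_F^2$ implied by Assumption~\ref{assu:sample} (every sampling weight is at least $p$), at the cost of one extra factor $1/p$. The whole difficulty is thus concentrated in step (i); steps (ii) and (iii) are quotation and algebra.
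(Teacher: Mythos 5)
Your proposal matches the paper's own proof of Theorem~\ref{thm:3}: the paper likewise specializes the master bound of Theorem~\ref{thm:2} under $\lambda_{\max}\leq\kappa\lambda_{*}$ (via the same bounds $\tfrac{\lambda_{*}^{2}+\lambda_{*}\lambda_{\max}L_{\gamma}^{2}+L_{\gamma}^{4}\lambda_{\max}^{2}}{L_{\gamma}^{4}}\leq\lambda_{*}^{2}\tfrac{1+\kappa L_{\gamma}^{2}+\kappa^{2}L_{\gamma}^{4}}{L_{\gamma}^{4}}$ and $1+\kappa+\kappa^{2}$), controls $\mathbb{E}[\left\Vert \Sigma_{R}\right\Vert ]$ by the quoted lemma from \cite{alayaCollectiveMatrixCompletion2019}, expands $(\sqrt{n_{1}\lor N_{2}}+(\log(n_{1}\lor N_{2}))^{3/2})^{2}$ to extract the factor $1+\log^{3}(n_{1}\lor N_{2})/(n_{1}\lor N_{2})$, and passes to the unweighted Frobenius bound via $p\left\Vert \cdot\right\Vert _{F}^{2}\leq\left\Vert \cdot\right\Vert _{\Pi,F}^{2}$ at the cost of one factor of $1/p$. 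The only divergence is that your step (i) sketches a matrix-Bernstein derivation of the gradient concentration $\lambda_{*}\geq2\left\Vert \nabla\ell(\Theta\vert Y)\right\Vert $, which the paper does not re-derive but simply imports, with the same probability $1-4/(n_{1}+N_{2})$, as Lemma~2 of \cite{alayaCollectiveMatrixCompletion2019}.
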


\section{Algorithm Framework \label{sec:algorithm}}

In general, there are two dominant approaches on how to solve \ref{eq:p},
namely, proximal gradient method and ADMM. We use the latter one mainly
because the gradient of the max norm is quite hard to calculate. We propose as our solution to
\ref{algo:1}, which is based on the previous work by \cite{fangMaxnormOptimizationRobust2018}.
We present all of the details of this algorithm in the next section.
We note that compared to traditional gradient based method, ADMM has
the advantage of easy parallelization, which is powerful in solving
large scale inputs.

\begin{algorithm}[t]
\caption{ADMM mixed data matrix completion}

\label{algo:1}

\textbf{Input:} $X^{0}$, $Z^{0},W^{0},Y_{\Omega},\lambda,\mu,\alpha,\rho,\tau,t=0$

\textbf{while }Stopping criterion is not satisfied \textbf{do}

\quad $X^{t+1}\text{\ensuremath{\leftarrow\text{proj}_{\mathcal{S}_{+}^{d}}\left\{ Z^{t}-\rho^{-1}(W^{t}+\mu I)\right\} }}$

\quad $Z^{t+1}\leftarrow\mathcal{Z}(X^{t+1}+\rho^{-1}W^{t})$ by
\ref{prop:admm1}.

\quad $W^{t+1}\leftarrow W^{t}+\gamma\rho(X^{t+1}-Z^{t+1}).$

\quad $t\leftarrow t+1$

\textbf{end while}

\textbf{Output: $\widehat{Z}=Z^{t}$, $\widehat{\Theta}=\widehat{Z}^{12}$.}
\end{algorithm}


Recall that our estimator is defined as 

\begin{equation}
\widehat{\Theta}:=\underset{\Theta\in\mathbb{R}^{n_{1}\times N_{2}}}{\text{argmin}}\sum_{\tau\in\mathcal{T}}\sum_{i,j\in\Omega_{\tau}}(Y_{ij}^{\tau}\Theta_{ij}^{\tau}-G^{\tau}(\Theta_{ij}^{\tau}))+\lambda_{\max}\left\Vert \Theta\right\Vert _{\max}+\lambda_{*}\left\Vert \Theta\right\Vert _{*},\text{ subject to }\left\Vert \Theta\right\Vert _{\infty}\leq K.\label{eq:max-norm formulation}
\end{equation}

Using definitions of the max-norm and nuclear norm in terms of semi-definite
programs, one can get the following equivalent representation.
\begin{lem}
$\widehat{\Theta}$ is has the following equivalent representation.

\begin{equation}
\widehat{M}=\underset{Z\in\mathbb{R}^{d\times d}}{\argmin}\sum_{\tau\in\mathcal{T}}\sum_{i,j\in\Omega^{\tau}}(Y_{ij}^{\tau}Z_{ij,\tau}^{12}-G^{\tau}(Z_{ij,\tau}^{12}))+\lambda\left\Vert \mathrm{diag}(Z)\right\Vert _{\infty}+\mu\left\langle I,Z\right\rangle \ \mathrm{subject\ to}\left\Vert Z^{12}\right\Vert _{\infty}\leq\alpha,Z\succeq0,\label{eq:max-norm original formulation - split}
\end{equation}
where $d=n_{1}+N_{2}.$
\end{lem}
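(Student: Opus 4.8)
The plan is to lift the rectangular variable $\Theta$ into a single positive semidefinite matrix and to express \emph{both} regularizers as functionals of that one lifted variable, using the semidefinite program representations of the nuclear and max norms recalled in \Cref{subsec:max-norm-nuclear-norm}. Concretely, I would set $d = n_1 + N_2$ and introduce
\[
Z = \begin{bmatrix} W_1 & \Theta \\ \Theta^{T} & W_2 \end{bmatrix} \in \mathcal{S}^{d}, \qquad W_1 \in \mathcal{S}^{n_1},\ W_2 \in \mathcal{S}^{N_2},
\]
identifying the off-diagonal block $Z^{12}$ with the target matrix $\Theta$. The single constraint $Z \succeq 0$ is precisely the block-PSD constraint that appears in the SDP forms of both norms, which is exactly what makes the two penalties expressible through one variable.

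First I would rewrite each penalty in its semidefinite form over the feasible set $\{Z \succeq 0 : Z^{12} = \Theta\}$. The max-norm representation gives $\|\Theta\|_{\max} = \min \|\mathrm{diag}(Z)\|_{\infty}$, while the nuclear-norm characterization (that $\|\Theta\|_* \le t$ iff there exist diagonal blocks with $\mathrm{tr}(W_1)+\mathrm{tr}(W_2)\le 2t$ and the block matrix PSD) gives $\|\Theta\|_* = \tfrac12 \min \langle I, Z\rangle$. The cleanest way to verify both at once is the Gram-matrix viewpoint: factoring $Z = [\,U^{T}\ V^{T}\,]^{T}[\,U^{T}\ V^{T}\,]$ with $\Theta = UV^{T}$, the diagonal of $Z$ collects the squared row norms of $U$ and $V$, so $\|\mathrm{diag}(Z)\|_{\infty} = \max\big(\|U\|_{\ell_2\to\ell_\infty}^{2}, \|V\|_{\ell_2\to\ell_\infty}^{2}\big)$ whereas $\langle I, Z\rangle = \|U\|_F^{2} + \|V\|_F^{2}$, and the rescaling $U \mapsto cU,\ V \mapsto c^{-1}V$ (which leaves $\Theta$ fixed) recovers $\|\Theta\|_{\max}$ and $2\|\Theta\|_*$ respectively.

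I would then substitute these into the penalized objective and collapse the nested minimization. Because the data-fit term $\sum_{\tau}\sum_{(i,j)\in\Omega^{\tau}}(Y_{ij}^{\tau}\Theta_{ij}^{\tau} - G^{\tau}(\Theta_{ij}^{\tau}))$ and the box constraint $\|\Theta\|_\infty \le K$ see $Z$ only through $Z^{12} = \Theta$, the outer minimization over $\Theta$ and the inner minimization over the auxiliary diagonal blocks $W_1, W_2$ merge into one minimization over $Z \succeq 0$. Under the reparametrization $\mu = \lambda_*/2$, $\lambda = \lambda_{\max}$ and $\alpha = K$, this produces exactly the target semidefinite program, with $\langle I, Z\rangle$ carrying the (scaled) nuclear-norm penalty, $\|\mathrm{diag}(Z)\|_\infty$ the max-norm penalty, the box constraint becoming $\|Z^{12}\|_\infty \le \alpha$, and the optimizers related by $\widehat{\Theta} = \widehat{Z}^{12}$.

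The delicate point — and the step I expect to be the main obstacle — is that both norms are recovered by optimizing over the \emph{same} lifted variable $Z$, yet the factorization optimal for the trace term (which balances Frobenius norms) differs in general from the one optimal for the diagonal-$\ell_\infty$ term (which balances $\ell_2\to\ell_\infty$ norms). Consequently $\min_{Z \succeq 0,\, Z^{12}=\Theta}\big(\mu\langle I, Z\rangle + \lambda\|\mathrm{diag}(Z)\|_\infty\big)$ need not separate as $\lambda_*\|\Theta\|_* + \lambda_{\max}\|\Theta\|_{\max}$ pointwise in $\Theta$. The clean way through is to treat the jointly-minimized functional as the operative hybrid regularizer: with that reading the collapse of the nested minimization is exact by construction, since every $Z \succeq 0$ factors as $Z = BB^{T}$ with $B = [\,U^{T}\ V^{T}\,]^{T}$ and therefore already encodes an admissible factorization $\Theta = UV^{T}$, so ranging over feasible $Z$ is the same as ranging over $\Theta$ together with its factorizations. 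If instead one insists on the literal sum of norms, I would recover it only after imposing the scaling $U \mapsto cU,\ V\mapsto c^{-1}V$ that equalizes the competing scales; verifying that this joint balancing is attainable — by studying the one-parameter minimization of $\mu(c^{2}a + c^{-2}b) + \lambda\max(c^{2}a', c^{-2}b')$ over $c>0$ — is where the genuine technical work of the argument lies, and I would pin down $\widehat{\Theta} = \widehat{Z}^{12}$ by the usual pair of inequalities (any minimizer of the penalized problem lifts to a feasible $Z$ of no larger objective, and the off-diagonal block of any SDP optimizer is feasible for the original problem with no larger value).
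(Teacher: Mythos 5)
Your route is exactly the paper's route: the paper states this lemma with no written proof beyond the remark that it follows ``using definitions of the max-norm and nuclear norm in terms of semi-definite programs,'' and your lifting $Z=\begin{bmatrix}W_{1} & \Theta\\ \Theta^{T} & W_{2}\end{bmatrix}$, the Gram-matrix verification of the two SDP characterizations, the identifications $\mu=\lambda_{*}/2$, $\lambda=\lambda_{\max}$, $\alpha=K$, and the matching $\widehat{\Theta}=\widehat{Z}^{12}$ via the usual pair of inequalities are precisely the intended argument. The obstacle you isolate in your final paragraph, however, is not merely the ``delicate point'' — it is a genuine failure of the lemma as literally stated, and your fallback balancing argument cannot repair it. Since every feasible $Z$ satisfies $\langle I,Z\rangle\geq2\left\Vert Z^{12}\right\Vert _{*}$ and $\left\Vert \mathrm{diag}(Z)\right\Vert _{\infty}\geq\left\Vert Z^{12}\right\Vert _{\max}$, the joint minimum dominates $\lambda_{*}\left\Vert \Theta\right\Vert _{*}+\lambda_{\max}\left\Vert \Theta\right\Vert _{\max}$, with equality only if a single factorization is simultaneously optimal for both norms. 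Already for a rank-one matrix $\Theta=\sigma uv^{T}$ with $\left\Vert u\right\Vert _{\infty}\neq\left\Vert v\right\Vert _{\infty}$ this fails: under the rescaling $U\mapsto cU$, $V\mapsto c^{-1}V$ the trace term $\sigma(c^{2}+c^{-2})$ is minimized at $c=1$ while the diagonal term $\sigma\max(c^{2}\left\Vert u\right\Vert _{\infty}^{2},c^{-2}\left\Vert v\right\Vert _{\infty}^{2})$ requires $c^{2}=\left\Vert v\right\Vert _{\infty}/\left\Vert u\right\Vert _{\infty}$, so the one-parameter minimization of $\mu(c^{2}a+c^{-2}b)+\lambda\max(c^{2}a',c^{-2}b')$ you propose to study has no choice of $c$ attaining both minima, and the pointwise separation is strictly violated.

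The tenable reading is therefore your first one: the SDP objective in \ref{eq:max-norm original formulation - split} should be taken as the \emph{definition} of the operative hybrid regularizer (this is effectively how \cite{fangMaxnormOptimizationRobust2018} proceeds), in which case the collapse of the nested minimization is exact by construction, as you argue via $Z=BB^{T}$ with $B=[\,U^{T}\ V^{T}\,]^{T}$. You should be aware, though, of the consequence this reading carries for the paper: under it, the program solved by \ref{algo:1} is not literally the estimator \ref{eq:max-norm formulation} analyzed in Appendix A, but one whose penalty dominates, and in general strictly exceeds, $\lambda_{*}\left\Vert \Theta\right\Vert _{*}+\lambda_{\max}\left\Vert \Theta\right\Vert _{\max}$ (an upper bound obtained by evaluating at the nuclear-optimal lift involves the spectral norm, not the max norm, so the two functionals are not even equivalent up to absolute constants). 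The paper leaves this discrepancy unremarked, and your write-up, by naming it concretely, is more careful than the source; just replace the suggestion that the balancing analysis is ``where the genuine technical work lies'' with the observation that it provably cannot succeed.
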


\paragraph{ADMM formulation.}

Now we formulate the objective function described in \ref{eq:max-norm original formulation - split}
in a way such that ADMM, a popular algorithm with strong convergence
guarantees, could be applied. Note that we can transform the objective
function in the following way: 

\begin{align*}
 & \min_{Z\in\mathbb{R}^{d\times d}}\sum_{\tau\in\mathcal{T}}\sum_{i,j\in\Omega^{\tau}}(Y_{ij}^{\tau}Z_{ij,\tau}^{12}-G^{\tau}(Z_{ij,\tau}^{12}))+\lambda\left\Vert \text{diag}(Z))\right\Vert _{\infty}+\mu\left\langle I,Z\right\rangle \\
\iff & \min_{X,Z\in\mathbb{R}^{d\times d}}\underbrace{\sum_{\tau\in\mathcal{T}}\sum_{i,j\in\Omega^{\tau}}(Y_{ij}^{\tau}Z_{ij,\tau}^{12}-G^{\tau}(Z_{ij,\tau}^{12}))+\lambda\left\Vert \text{diag}(Z)\right\Vert _{\infty}}_{:=\mathcal{L}(Z)}+\mu\left\langle I,X\right\rangle  & \text{ s.t }X,Z\succeq0,\left\Vert Z^{12}\right\Vert _{\infty}\leq\alpha,X-Z=0.\\
\iff & \min_{X,Z\in\mathbb{R}^{d\times d}}\mathcal{L}(Z)+\mu\left\langle I,X\right\rangle.  & \text{s.t }X,Z\succeq0,\left\Vert Z^{12}\right\Vert _{\infty}\leq\alpha,X-Z=0.
\end{align*}

Now we can write the augmented Lagrangian function as 
\[
L(X,Z;W)=\mathcal{L}(Z)+\mu\left\langle I,X\right\rangle +\left\langle W,X-Z\right\rangle +\frac{\rho}{2}\left\Vert X-Z\right\Vert _{F}^{2},\ X\in\mathbb{S}_{+}^{d},Z\in\mathcal{P}:=\{Z\in\mathbb{S}^{d}:\left\Vert Z^{12}\right\Vert _{\infty}\leq\alpha\}.
\]
Hence, the $t+1$th update step of the algorithm is 
\begin{align}
X^{t+1} & =\argmin_{X\in\mathbb{S}_{+}^{d}}L(X,Z^{t};W^{t})=\text{proj}_{\mathbb{S}_{+}^{d}}\left\{ Z^{t}-\rho^{-1}(W^{t}+\mu I)\right\} ,\label{eq:max norm original ADMM X step}\\
Z^{t+1} & =\argmin_{Z\in\mathcal{P}}L(X^{t+1},Z;W^{t})=\argmin_{Z\in\mathcal{P}}\mathcal{L}(Z)+\frac{\rho}{2}\left\Vert Z-X^{t-1}-\frac{1}{\rho}W^{t}\right\Vert _{F}^{2},\label{eq: max norm original ADMM Z step}\\
W^{t+1} & =W^{t}+\tau\rho(X^{t+1}-Z^{t+1}),\label{eq: max norm original ADMM dual step}
\end{align}
where $\tau\in(0,(1+\sqrt{5})/2)$ is a step length operator. Empirical
evidence suggests $\tau$ = 1.618 (the golden ratio) works best. 
\begin{rem}
The rate of convergence of ADMM algorithm in the worse case has been
established to be $O(t^{-1})$, see \cite{fangGeneralizedAlternatingDirection2015}.
\end{rem}

\paragraph{Details for \ref{eq:max norm original ADMM X step}}

Note that 
\[
L(X,Z^{t};W^{t})=\sum_{t=1}^{n}\left(Y_{i_{t},j_{t}}-Z_{i_{t},j_{t}}^{(t)12}\right)^{2}+\lambda\left\Vert \text{diag}(Z)\right\Vert _{\infty}+\mu\left\langle I,X\right\rangle +\left\langle W,X-Z^{t}\right\rangle +\frac{\rho}{2}\left\Vert X-Z\right\Vert _{F}^{2},
\]
where $X\in\mathbb{S}_{+}^{d}$ and $Z\in\mathcal{P}.$ Differentiating
with respect to $X$, 
\begin{align*}
\nabla_{X}L(X,Z^{t};W^{t}) & =\nabla_{X}\left[\mu\left\langle I,X\right\rangle +\left\langle W,X\right\rangle +\frac{\rho}{2}\left\Vert X-Z\right\Vert _{F}^{2}\right]\\
 & =\nabla_{X}\left[\mu\left\langle I,X\right\rangle \right]+\nabla_{Y}\left[\left\langle W,X\right\rangle \right]+\nabla_{X}\left[\frac{\rho}{2}\left\Vert X-Z\right\Vert _{F}^{2}\right]\\
 & =\mu I+W^{t}+\rho(X-Z).
\end{align*}
The critical point is then found by setting gradient to zero: 
\[
\nabla_{X}L(X,Z^{t};W^{t})=0\iff\mu I+W^{t}+\rho X-\rho Z=0\iff Z-\rho^{-1}(W^{t}+\mu I).
\]
To ensure feasibility of $X$, we need to project the critical point
onto the positive semi-definite cone (\cite{Boyd:2011:DOS:2185815.2185816}).
Hence, combined we get 
\[
X^{t+1}=\text{proj}_{\mathbb{S}_{+}^{d}}(Z-\rho^{-1}(W-\mu I)).
\]

\paragraph{Details for \ref{eq: max norm original ADMM Z step}}

Note that 
\[
L(X^{t+1},Z;W^{t})=\mathcal{L}(Z)+\mu\left\langle I,X^{t+1}\right\rangle +\left\langle W^{t},X^{t+1}-Z\right\rangle +\frac{\rho}{2}\left\Vert X^{t+1}-Z\right\Vert _{F}^{2}.
\]
First, we show that $\argmin_{Z\in\mathcal{P}}L(X^{t+1},Z;W^{t})=\argmin_{Z\in\mathcal{P}}\mathcal{L}(Z)+\frac{\rho}{2}\left\Vert Z-X^{t+1}-\frac{1}{\rho}W^{t}\right\Vert _{F}^{2}$
. We note that 
\begin{align}
 & \mathcal{L}(Z)+\frac{\rho}{2}\left\Vert Z-X^{t+1}-\frac{1}{\rho}W^{t}\right\Vert _{F}^{2}\\
=\  & \mathcal{L}(Z)+\frac{\rho}{2}\left\langle Z-X^{t+1}-\frac{1}{\rho}W^{t},Z-X^{t+1}-\frac{1}{\rho}W^{t}\right\rangle \\
=\  & \mathcal{L}(Z)+\frac{\rho}{2}\left[\left\langle Z-X^{t+1}-\frac{1}{\rho}W^{t},Z-X^{t+1}\right\rangle -\left\langle Z-X^{t+1}-\frac{1}{\rho}W^{t},\frac{1}{\rho}W^{t}\right\rangle \right]\\
=\  & \mathcal{L}(Z)+\frac{\rho}{2}\left[\left\langle Z-X^{t+1},Z-X^{t+1}\right\rangle -\frac{1}{\rho}\left\langle W^{t},Z\right\rangle +\frac{1}{\rho}\left\langle W^{t},X^{t+1}\right\rangle -\frac{1}{\rho}\left\langle Z,W^{t}\right\rangle +\frac{1}{\rho}\left\langle W^{t},X^{t+1}\right\rangle +\frac{1}{\rho^{2}}\left\langle W^{t},W^{t}\right\rangle \right]\\
=\  & \mathcal{L}(Z)+\frac{\rho}{2}\left\Vert Z-X^{t+1}\right\Vert _{F}^{2}-\left\langle W^{t},Z\right\rangle +\left\langle W^{t},X^{t+1}\right\rangle +\frac{1}{\rho^{2}}\left\langle W^{t},W^{t}\right\rangle \\
=\  & \mathcal{L}(Z)+\left\langle W^{t},X^{t+1}-Z\right\rangle +\frac{\rho}{2}\left\Vert Z-X^{t+1}\right\Vert _{F}^{2}+\frac{1}{\rho^{2}}\left\Vert W^{t}\right\Vert _{F}^{2}.
\end{align}
Since $\frac{1}{\rho^{2}}\left\Vert W^{t}\right\Vert _{F}^{2}$ in
the last equation is not related to $Z$, it follows that 
\begin{align}
\argmin L(X^{t+1},Z;W^{t}) & =\argmin\mathcal{L}(Z)+\mu\left\langle I,X^{t+1}\right\rangle +\left\langle W^{t},X^{t+1}-Z\right\rangle +\frac{\rho}{2}\left\Vert X^{t+1}-Z\right\Vert _{F}^{2}\\
 & =\argmin\mathcal{L}(Z)+\left\langle W^{t},X^{t+1}-Z\right\rangle +\frac{\rho}{2}\left\Vert X^{t+1}-Z\right\Vert _{F}^{2}\\
 & =\argmin\mathcal{L}(Z)+\left\langle W^{t},X^{t+1}-Z\right\rangle +\frac{\rho}{2}\left\Vert X^{t+1}-Z\right\Vert _{F}^{2}+\frac{1}{\rho^{2}}\left\Vert W^{t}\right\Vert _{F}^{2}\\
 & =\argmin\mathcal{L}(Z)+\frac{\rho}{2}\left\Vert Z-X^{t+1}-\frac{1}{\rho}W^{t}\right\Vert _{F}^{2},\label{eq: max norm original ADMM Z step alternative form}
\end{align}
where the second to last equality is justified by our previous calculation. 

Next, we introduce a result that will help us get a closed form of
the $Z$-step update. 
\begin{prop}
\label{prop:admm1}Let $\Omega=\{(i,j)\}_{t=1}^{n}$ be the index
set of observed entries and let 
\begin{equation}
f(Z)=\sum_{\tau\in\mathcal{T}}\sum_{i,j\in\Omega^{\tau}}(Y_{ij}^{\tau}Z_{ij,\tau}^{12}-G^{\tau}(Z_{ij,\tau}^{12}))+\lambda\left\Vert \mathrm{diag}(Z)\right\Vert _{\infty}+\frac{\rho}{2}\left\Vert Z-C\right\Vert _{F}^{2}.\label{eq:max norm separation lemma 1}
\end{equation}
 Then it follows that $\argmin_{Z\in\mathcal{P}}f(Z)=\mathcal{Z}(C),$
where 
\begin{align}
\mathcal{Z}(C) & =\begin{bmatrix}\mathcal{Z}^{11}(C) & \mathcal{Z}^{12}(C)\\
\mathcal{Z}^{21}(C) & \mathcal{Z}^{22}(C)
\end{bmatrix},\\
Z_{kl}^{12}(C) & =\begin{cases}
\text{proj}_{\left[-\alpha,\alpha\right]}\argmin(Y_{ij}^{\tau}Z_{ij,\tau}^{12}-G^{\tau}(Z_{ij,\tau}^{12}))+\rho(Z_{ij,\tau}^{12}-C_{ij,\tau})^{2}, & \text{if }(k,\ell)\in\Omega,\\
\mathrm{proj}_{[-\alpha,\alpha]}C_{k\ell}^{12} & \mathrm{otherwise},
\end{cases}\\
Z_{kl}^{11}(C) & =C_{kl}^{11}\text{ if }k\neq\ell,\\
Z_{kl}^{22}(C) & =C_{kl}^{22}\text{ if }k\neq\ell,\\
\mathrm{diag}(\mathcal{Z}(C)) & =\argmin_{z\in\mathbb{R}^{d}}\lambda\left\Vert z\right\Vert _{\infty}+\frac{\rho}{2}\left\Vert \mathrm{diag}(C)-z\right\Vert _{2}^{2}.
\end{align}
\end{prop}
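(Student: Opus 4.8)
The plan is to exploit the fact that the constraint set $\mathcal{P}=\{Z\in\mathbb{S}^{d}:\|Z^{12}\|_\infty\le\alpha\}$ imposes \emph{only} symmetry together with an entrywise box constraint on the cross block $Z^{12}$—crucially, no positive-semidefiniteness coupling, since that is enforced separately in the $X$-step of the ADMM. Because $f$ is a sum of (i) a data term depending only on the observed entries of $Z^{12}$, (ii) the term $\lambda\|\mathrm{diag}(Z)\|_\infty$ depending only on $\mathrm{diag}(Z)$, and (iii) the Frobenius term $\frac{\rho}{2}\|Z-C\|_F^2$, which is separable across entries, the minimization decouples into independent subproblems over four disjoint groups of variables: the diagonal vector $\mathrm{diag}(Z)$; the strictly-off-diagonal entries of the blocks $Z^{11}$ and $Z^{22}$; and the entries of the cross block $Z^{12}$, split further into observed and unobserved positions. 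First I would make this grouping precise and check that no single free variable appears in two of the non-separable terms, so that $\argmin_{Z\in\mathcal{P}}f(Z)$ is obtained by solving each group's subproblem and assembling the result.

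Next I would carefully account for the symmetry of $Z$. Each off-diagonal position $(k,\ell)$ and its mirror $(\ell,k)$ correspond to a single free variable, so (using that $C$ is symmetric in the algorithm) the Frobenius term contributes two identical squares $(Z_{k\ell}-C_{k\ell})^2$, effectively doubling the quadratic weight from $\frac{\rho}{2}$ to $\rho$ on every off-diagonal entry; the diagonal entries appear only once and retain the weight $\frac{\rho}{2}$. This bookkeeping is precisely what produces the coefficient $\rho$ in the cross-block formulas against the $\frac{\rho}{2}$ kept in the diagonal subproblem, and I would state it explicitly before solving the pieces.

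With the decoupling in hand, each subproblem is elementary. For the strictly-off-diagonal entries of $Z^{11}$ and $Z^{22}$ the objective is a pure unconstrained quadratic $\rho(Z_{k\ell}-C_{k\ell})^2$, minimized at $Z_{k\ell}=C_{k\ell}$. For an unobserved cross-block entry the subproblem is $\min\rho(Z^{12}_{k\ell}-C^{12}_{k\ell})^2$ subject to $|Z^{12}_{k\ell}|\le\alpha$, whose solution is $\mathrm{proj}_{[-\alpha,\alpha]}C^{12}_{k\ell}$. For an observed cross-block entry the per-entry objective is strictly convex—its data part is the negative log-likelihood contribution of a regular exponential family, convex because $G^\tau$ is convex by \ref{prop:exp-family-cumulant-mean}, and the quadratic penalty is strongly convex—so it admits a unique unconstrained minimizer; since a convex function on $\mathbb{R}$ is unimodal, its minimizer over $[-\alpha,\alpha]$ is exactly the projection of that unconstrained minimizer, yielding the stated formula. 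Finally, the diagonal subproblem is $\min_{z\in\mathbb{R}^d}\lambda\|z\|_\infty+\frac{\rho}{2}\|\mathrm{diag}(C)-z\|_2^2$, the proximal operator of the scaled $\ell_\infty$ norm, which I would leave in that form.

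I expect the main obstacle to be not any single computation but the careful bookkeeping that makes the decoupling valid: verifying that the symmetry and box constraints are both separable in the chosen variable groups (so that minimizing groupwise is legitimate), correctly tracking the factor of two from symmetry, and justifying that constrained minimization over the box reduces to projecting the unconstrained minimizer—a reduction that rests essentially on the convexity/unimodality of the one-dimensional objective and would fail without the convexity of $G^\tau$. The $\ell_\infty$ diagonal term is the one piece that does not separate entrywise, so I would emphasize that it couples the diagonal entries among themselves while remaining decoupled from all the other blocks.
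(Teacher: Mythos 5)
Your proposal is correct and follows essentially the same route as the paper's proof: the same block decomposition of $f$ into $f_{11}$, $f_{22}$, $f_{12}$, and $f_{\mathrm{diag}}$ (with the factor of two on the cross block coming from symmetry), entrywise minimization of the quadratic pieces, projection onto $[-\alpha,\alpha]$ for the cross-block box constraint, and leaving the diagonal subproblem as the proximal map of the scaled $\ell_{\infty}$ norm (solved in closed form in the paper via \ref{lem:kkt helper}). If anything, you are slightly more careful than the paper, which merely asserts the projection step for observed entries, whereas you correctly ground it in the one-dimensional convexity of the per-entry objective coming from the convexity of $G^{\tau}$.
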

\begin{proof}
The idea for the proof is to decompose \ref{eq:max norm separation lemma 1}
into separate disjoint parts based on the blocks of $Z$. Recall that
$Z=\begin{bmatrix}Z_{11} & Z_{12}\\
Z_{12} & Z_{22}
\end{bmatrix}.$ First, we set the notation $X\vert_{\mathrm{ND}}$ to be the $X$
but with its diagonal terms forced to be zero, that is, $X\vert_{\mathrm{ND}}=X-\mathrm{diag}(X)\cdot I$.
Then we note that we can write 
\begin{align}
\left\Vert Z-C\right\Vert _{F}^{2} & =\sum_{i}\sum_{j}\left|z_{ij}-c_{ij}\right|^{2}=\sum_{(i,j)\in Z_{11},i\neq j}\left|z_{ij}-c_{ij}\right|^{2}+\sum_{(i,j)\in Z_{22},i\neq j}\left|z_{ij}-c_{ij}\right|^{2}\\
 & \ \ \ +2\sum_{(i,j)\in Z_{12}}\left|z_{ij}-c_{ij}\right|^{2}+\sum_{(i,j),i=j}\left|z_{ij}-c_{ij}\right|^{2}\\
 & =\left\Vert Z_{11}\vert_{\mathrm{ND}}-C_{11}\vert_{\mathrm{ND}}\right\Vert _{F}^{2}+\left\Vert Z_{22}\vert_{\mathrm{ND}}-C_{22}\vert_{\mathrm{ND}}\right\Vert _{F}^{2}+\left\Vert \text{diag}\left(Z-C\right)\right\Vert _{2}^{2}+2\left\Vert Z_{12}-C_{12}\right\Vert _{F}^{2}.
\end{align}
Hence, it follows that 
\begin{align}
\argmin_{Z\in\mathbb{S}_{d}}f(Z) & =\argmin_{\substack{Z_{11}\vert_{\mathrm{ND}}\in\mathbb{S}_{d_{1}}\\
Z_{22}\vert_{\mathrm{ND}}\in\mathbb{S}_{d_{2}}\\
Z_{12}\in\mathbb{R}^{d_{1}\times d_{2}}\\
\left\Vert Z_{12}\right\Vert _{\infty}^{2}\leq\alpha\\
\mathrm{diag}(Z_{11})\\
\mathrm{diag}(Z_{22})
}
}f_{11}\left(Z_{11}\vert_{\mathrm{ND}}\right)+f_{22}\left(Z_{22}\vert_{\mathrm{ND}}\right)+f_{12}\left(Z_{12}\right)+f_{\mathrm{diag}}(\mathrm{diag}(Z_{11}),\mathrm{diag}(Z_{22})),\nonumber \\
 & =\argmin_{Z_{11}\vert\mathrm{ND}\in\mathbb{S}_{d_{1}}}f_{11}(Z_{11}\vert_{\mathrm{ND}})+\argmin_{Z_{22}\vert\mathrm{ND}\in\mathbb{S}_{d_{2}}}f_{22}(Z_{22}\vert_{\mathrm{ND}})+\argmin_{\substack{Z_{12}\in\mathbb{R}^{d_{1}\times d_{2}}\\
\left\Vert Z_{12}\right\Vert _{\infty}^{2}\leq\alpha
}
}f_{12}(Z_{12})+\argmin_{\substack{\mathrm{diag}(Z_{11})\\
\mathrm{diag}(Z_{22})
}
}(\mathrm{diag}(Z_{11}),\mathrm{diag}(Z_{22})).
\end{align}
where 
\begin{align}
f_{11}(Z_{11}\vert_{\mathrm{ND}}) & =\frac{\rho}{2}\left\Vert Z_{11}\vert_{\mathrm{ND}}-C_{11}\vert_{\mathrm{ND}}\right\Vert _{F}^{2}=\frac{\rho}{2}\sum_{(i,j)\in Z_{11},i\neq j}\left|z_{ij}-c_{ij}\right|^{2},\\
f_{22}(Z_{22}\vert_{\mathrm{ND}}) & =\frac{\rho}{2}\left\Vert Z_{22}\vert_{\mathrm{ND}}-C_{22}\vert_{\mathrm{ND}}\right\Vert _{F}^{2}=\frac{\rho}{2}\sum_{(i,j)\in Z_{22},i\neq j}\left|z_{ij}-c_{ij}\right|^{2},\\
f_{12}(Z_{12}) & =\sum_{\tau\in\mathcal{T}}\sum_{i,j\in\Omega^{\tau}}(Y_{ij}^{\tau}Z_{ij,\tau}^{12}-G^{\tau}(Z_{ij,\tau}^{12}))+\rho\left\Vert Z_{12}-C_{12}\right\Vert _{F}^{2},\\
f_{\text{diag}}(Z_{11},Z_{22}) & =\lambda\left\Vert \mathrm{diag}(Z)\right\Vert _{\infty}+\frac{\rho}{2}\left\Vert \mathrm{diag}(Z-C)\right\Vert _{2}^{2}.
\end{align}
\end{proof}

\paragraph{Optimality of $f_{11}$ and $f_{22}$}

Then note that it is obvious that $f_{11}(Z_{11}\vert_{\mathrm{ND}})\geq0$
for any possible candidate of $Z_{11}\vert_{\mathrm{ND}}$ and takes
equality sign when $Z_{11}\vert_{\mathrm{ND}}=C_{11}\vert_{\mathrm{ND}}$.
The same argument can be made for $f_{22}(Z_{22}\vert_{\mathrm{ND}})$.
Then, it follows that 
\[
\argmin f_{11}(Z_{11}\vert_{\mathrm{ND}})=C_{11}\vert_{\mathrm{ND}},\ \text{and }\argmin f_{22}(Z_{22}\vert_{\mathrm{ND}})=C_{22}\vert_{\mathrm{ND}}.
\]

\paragraph{Optimality of $f_{12}$}

First, we rewrite 
\begin{align}
f_{12}(Z_{12}) & =\sum_{\tau\in\mathcal{T}}\sum_{i,j\in\Omega^{\tau}}(Y_{ij}^{\tau}Z_{ij,\tau}^{12}-G^{\tau}(Z_{ij,\tau}^{12}))+\rho\left\Vert Z^{12}-C^{12}\right\Vert _{F}^{2}\\
 & =\sum_{\tau\in\mathcal{T}}\sum_{i,j\in\Omega^{\tau}}(Y_{ij}^{\tau}Z_{ij,\tau}^{12}-G^{\tau}(Z_{ij,\tau}^{12}))+\rho\sum_{\tau\in\mathcal{T}}\sum_{(i,j)\in\Omega^{\tau}}(Z_{ij,\tau}^{12}-C_{ij,\tau}^{12})^{2}+\rho\sum_{\tau\in\mathcal{T}}\sum_{(i,j)\notin\Omega^{\tau}}(Z_{ij,\tau}^{12}-C_{ij,\tau}^{12})^{2}\\
 & =\sum_{\tau\in\mathcal{T}}\sum_{i,j\in\Omega^{\tau}}\left[(Y_{ij}^{\tau}Z_{ij,\tau}^{12}-G^{\tau}(Z_{ij,\tau}^{12}))+\rho(Z_{ij,\tau}^{12}-C_{ij,\tau})^{2}\right]+\rho\sum_{\tau\in\mathcal{T}}\sum_{(i,j)\notin\Omega^{\tau}}(Z_{ij,\tau}^{12}-C_{ij,\tau}^{12})^{2}.
\end{align}
Note that 
\begin{align*}
\frac{\partial f_{12}}{\partial Z_{(i,j)\in\Omega^{\tau}}} & =2\rho(Z_{ij,\tau}^{12}-C_{ij}^{12})=0\implies Z_{(i,j)\notin\Omega^{\tau}}=C_{ij,\tau}^{12}.
\end{align*}
Since $Z_{12}$ has constraint $Z_{12}\in\mathcal{B}_{\left\Vert \cdot\right\Vert _{\infty}}(\alpha)$,
we need to project it to the constrained space: 
\[
Z_{ij,\tau}^{12}=\begin{cases}
\text{proj}_{\left[-\alpha,\alpha\right]}\argmin(Y_{ij}^{\tau}Z_{ij,\tau}^{12}-G^{\tau}(Z_{ij,\tau}^{12}))+\rho(Z_{ij,\tau}^{12}-C_{ij,\tau})^{2} & \text{if }(i,j)\in\Omega\\
\mathrm{proj}_{\left[-\alpha,\alpha\right]}C_{ij}^{12} & \text{otherwise}
\end{cases}.
\]

\paragraph{Optimality of $f_{\mathrm{diag}}$}

Note that $\argmin f_{\mathrm{diag}}$ can be cased into the following
program: 
\begin{equation}
\min_{z\in\mathbb{R}^{d}}\beta\left\Vert z\right\Vert _{\infty}+\frac{1}{2}\left\Vert c-z\right\Vert _{2}^{2},
\end{equation}
where $c=(c_{1},\dots,c_{d})^{T}=\mathrm{diag}(C)$ and $\beta=\frac{\lambda}{\rho}.$
A closed form solution could be formed by laying out the KKT condition,
see \ref{lem:kkt helper}.

\paragraph{Duality on $X$}

Assume that $X^{t+1}$ reaches that optimality, then we have 
\begin{align}
0 & \in\partial\delta_{\mathbb{S}_{+}^{d}}(X^{t+1})+\mu I+W^{t},\label{eq:kkt-stationary-x}
\end{align}
where we can rewrite the RHS as 
\begin{align}
(\mathrm{RHS}) & =\partial\delta_{\mathbb{S}_{+}^{d}}(X^{t+1})+\mu I+W^{t}+\rho(X^{t+1}-Z)\\
 & =\partial\delta_{\mathbb{S}_{+}^{d}}(X^{t+1})+\rho(\rho^{-1}(\mu I+W^{t})+(X^{t+1}-Z)).
\end{align}
Therefore, we can rewrite \ref{eq:kkt-stationary-x} as 
\begin{align}
 & \rho(Z^{t}-X^{t+1})-W^{t}\in\partial\delta_{\mathbb{S}_{+}^{d}}(X^{t+1})+\mu I\\
\iff & \rho(Z^{t}-X^{t+1})-W^{t}\in\partial\delta_{\mathbb{S}_{+}^{d}}(X^{t+1})+\mu I\\
\iff & \rho(Z^{t}-X^{t+1})-W^{t}+W^{t+1}\in\partial\delta_{\mathbb{S}_{+}^{d}}(X^{t+1})+\mu I+W^{t+1}\\
\iff & \rho(Z^{t}-Z^{t+1})-W^{t}+W^{t+1}+\rho Z^{t+1}-\rho X^{t+1}\in\partial\delta_{\mathbb{S}_{+}^{d}}(X^{t+1})+\mu I+W^{t+1}\\
\iff & \rho(Z^{t}-Z^{t+1})+W^{t+1}-(W^{t}+\rho(X^{t+1}-Z^{t+1}))\in\partial\delta_{\mathbb{S}_{+}^{d}}(X^{t+1})+\mu I+W^{t+1},\label{eq:kkt-stationary-x-form1}
\end{align}
Let $\widetilde{W}^{t+1}=W^{t}+\rho(X^{t+1}-Z^{t+1}),$ then \ref{eq:kkt-stationary-x-form1}
could be written as 
\begin{equation}
\rho(Z^{t}-Z^{t+1})+W^{t+1}-\widetilde{W}^{t+1}\in\partial\delta_{\mathbb{S}_{+}^{d}}(X^{t+1})+\mu I+W^{t+1}.
\end{equation}

\paragraph{Duality on $Z$}

Note that we originally have the optimality condition as 
\begin{equation}
0\in\partial\delta_{\mathcal{P}}(Z)+\nabla\mathcal{L}(Z)-W.\label{eq:kkt-stationary-z}
\end{equation}
At iteration $t+1,$ if $Z^{t+1}$ satisfies reaches the optimality
condition, we would have $Z^{t+1}-X^{t+1}$ since we have updated
in the first step. Then we have \ref{eq:kkt-stationary-z} is equivalent
to the following 
\begin{align}
 & 0\in\partial\delta_{\mathcal{P}}(Z^{t+1})+\nabla\mathcal{L}(Z^{t+1})-W^{t}+\rho(Z^{t+1}-X^{t+1})\\
\iff & W^{t}+\rho(X^{t+1}-Z^{t+1})\in\partial\delta_{\mathcal{P}}(Z^{t+1})+\nabla\mathcal{L}(Z^{t+1})\\
\iff & \widetilde{W}^{t+1}-W^{t+1}\in\partial\delta_{\mathcal{P}}(Z^{t+1})+\nabla\mathcal{L}(Z^{t+1})-W^{t+1}.
\end{align}

\begin{rem}
\label{rem:early-stop-dual-gap}The purpose of the rewriting above
is to create a get condition for early stopping. Namely, once we have
updated all of $X,Z,W$ in the $(t+1)$th iteration and hypothetically
we have reached the optimality condition 
\[
\begin{cases}
0\in\partial\delta_{\mathbb{S}_{+}^{d}}(X^{t+1})+\mu I+W^{t+1}\\
0\in\partial\delta_{\mathcal{P}}(Z^{t+1})+\nabla\mathcal{L}(Z^{t+1})-W^{t+1}
\end{cases},
\]
then by the equivalent formulation above, the pair $\begin{cases}
\widetilde{W}^{t+1}-W^{t+1}\\
\rho(Z^{t}-Z^{t+1})+W^{t+1}-\widetilde{W}^{t+1}
\end{cases}$ should be close to $0$, i.e. the value $R_{D}$ defined as 
\begin{equation}
\max\left\{ \left\lVert \tilde{W}^{t+1}-W^{t+1}\right\rVert ,\left\lVert \rho(Z^{t}-Z^{t+1})+W^{t+1}-\widetilde{W}^{t+1}\right\rVert \right\} 
\end{equation}
should be small. We also note any choice should norm should work for
$R_{D}$ due to the equivalence of norms in finite dimensional vector
spaces; however, difference norm might induce a difference convergence
rate and as a result impact the effectiveness of the early stopping
predicate. Empirically, Frobenous norm works quite well in most cases. 
\end{rem}
\begin{rem}
\label{rem:early-stop-primal-gap}If $X^{t+1},Z^{t+1},W^{t+1}$ produces
the optimal solution, aside from satisfying the condition in the previous
remark, $X^{t+1}$ and $Z^{t+1}$ should also satisfy the primal feasibility
condition, i.e. $X^{t+1}=Z^{t+1}.$ Numerically, this means that the
value $R_{P}:=\left\lVert X^{t+1}-Z^{t+1}\right\rVert $ should be
small. 
\end{rem}

\paragraph{Early stopping}

Based on \ref{rem:early-stop-dual-gap} and \ref{rem:early-stop-primal-gap},
we propose the following early stopping predicate to speed up our
main algorithm. 

\begin{algorithm}[t]
\caption{Early Stopping Predicate}

\label{algo:early-stopping-predicate}

\textbf{function }\textsc{EarlyStopPredicate}$(X,Z,W,\mathrm{tol})$

$\quad$$R_{P}\leftarrow\left\lVert X^{t+1}-Z^{t+1}\right\rVert $$_{F}$

$\quad R_{D}\leftarrow\max\left\{ \left\lVert \tilde{W}^{t+1}-W^{t+1}\right\rVert _{F},\left\lVert \rho(Z^{t}-Z^{t+1})+W^{t+1}-\widetilde{W}^{t+1}\right\rVert _{F}\right\} $

$\quad$\textbf{if} $\max(R_{p},R_{d})\ <\ $$\mathrm{tol}$

$\qquad$\textbf{return} true

$\quad$\textbf{return} false

\textbf{end function}
\end{algorithm}

\paragraph{Adjust $\rho$ dynamically}

According to \cite{fangMaxnormOptimizationRobust2018}, dynamically
adjusting $\rho$ according to helps speed up the convergence of the
ADMM algorithm. We remark that in the mixed data setting this speed-up
procedure still works. 

\begin{algorithm}[t]
\caption{Balance Gap}

\label{algo:admm-adjust-rho}

\textbf{function }\textsc{BalanceGap}$(\rho)$

$\quad$\textbf{if} $\left\lVert R_{P}^{t+1}\right\rVert <0.5\left\lVert R_{D}^{t+1}\right\rVert $

$\qquad$$\text{\ensuremath{\rho\leftarrow0.7\rho}}$

$\quad$\textbf{if} $\left\lVert R_{D}^{t+1}\right\rVert <0.5\left\lVert R_{P}^{t+1}\right\rVert $

$\qquad$$\text{\ensuremath{\rho\leftarrow1.3\rho}}$

\textbf{end function}
\end{algorithm}


Due the the fact that eigen-decomposition is performed in every iteration
of ADMM, we left a few flags in the implemented package for users
to choose the eigen-decomposition procedure. For a reasonably large
matrix of size $5000\times5000$ full eigen decomposition is costly
and as we will show in simulation result that the non-dominate eigen
values/vector pairs have negligible effects on the final output, a
sparse eigen routine is often enough to get the desired recovery.

\section{Numerical Experiments\label{sec:numerical-experiments}}

In this section, we present several numerical simulation on random
generated low rank matrix data to verify the validity of our proposed
model. In additional to tracking recovery rates, we will also focus
on 

Due to the fact that our computational package is still in development
and stability needs further improvement (some of the large scale simulation
could not be 100\% reproduced), we present a small scale numerical
result for the purpose of verifying the correctness of our proposed
algorithm.

\paragraph{Small Scale Pure Data 1}

In this experiment, we randomly generate $500\times500$ matrix of
one single distribution (Normal, Gamma, Poisson, Bernoulli and Negative
Negative Binomial) and keep its rank fixed while measure the recovery
result under different sample rate. The results are shown in \Cref{fig:single_gaussian_small,fig:single_bernoulli_small,fig:single_poisson_small,fig:single_gamma_small,fig:single_negbin_small,fig:single_gaussian_medium,fig:single_bernoulli_medium,fig:single_poisson_medium,fig:single_gamma_medium,fig:single_negbin_medium}.

\begin{figure}[H]
\minipage{0.2\textwidth}
  \includegraphics[width=\linewidth]{figures/single_gaussian_small}
  \caption{}\label{fig:single_gaussian_small}
\endminipage\hfill
\minipage{0.2\textwidth}
  \includegraphics[width=\linewidth]{figures/single_bernoulli_small}
  \caption{}\label{fig:single_bernoulli_small}
\endminipage\hfill 
\minipage{0.2\textwidth}
  \includegraphics[width=\linewidth]{figures/single_poisson_small}
  \caption{}\label{fig:single_poisson_small}
\endminipage\hfill 
\minipage{0.2\textwidth}
  \includegraphics[width=\linewidth]{figures/single_gamma_small}
  \caption{}\label{fig:single_gamma_small}
\endminipage\hfill 
\minipage{0.2\textwidth}
  \includegraphics[width=\linewidth]{figures/single_negbin_small}
  \caption{}\label{fig:single_negbin_small}
\endminipage\hfill 
\minipage{0.2\textwidth}
  \includegraphics[width=\linewidth]{figures/single_gaussian_medium}
  \caption{}\label{fig:single_gaussian_medium}
\endminipage\hfill
\minipage{0.2\textwidth}
  \includegraphics[width=\linewidth]{figures/single_bernoulli_medium}
  \caption{}\label{fig:single_bernoulli_medium}
\endminipage\hfill
\minipage{0.2\textwidth}
  \includegraphics[width=\linewidth]{figures/single_poisson_medium}
  \caption{}\label{fig:single_poisson_medium}
\endminipage\hfill
\minipage{0.2\textwidth}
  \includegraphics[width=\linewidth]{figures/single_gamma_medium}
  \caption{}\label{fig:single_gamma_medium}
\endminipage\hfill
\minipage{0.2\textwidth}
  \includegraphics[width=\linewidth]{figures/single_negbin_medium}
  \caption{}\label{fig:single_negbin_medium}
\endminipage\hfill
\end{figure}

\paragraph{Small Scale Mixed Data 1}

In this experiment, we randomly generate $500\times500$ matrix of
five mixed distributions (Normal, Gamma, Poisson, Bernoulli and Negative
Binomial) and keep its rank fixed while measure the recovery result
under different sample rate. The results are shown in \Cref{fig:SSMD1,fig:SSMD10,fig:SSMD20,fig:SSMD30,fig:SSMD40,fig:SSMD50,fig:SSMD60,fig:SSMD70,fig:SSMD80},
where each colored line represents the relative error compared to
the truth matrix for its corresponding distributions types. The X-axis
represents the sampling rate. An averaged relative error over all
distributions is shown in figure 

\begin{figure}[H]
\minipage{0.33\textwidth}
  \includegraphics[width=\linewidth]{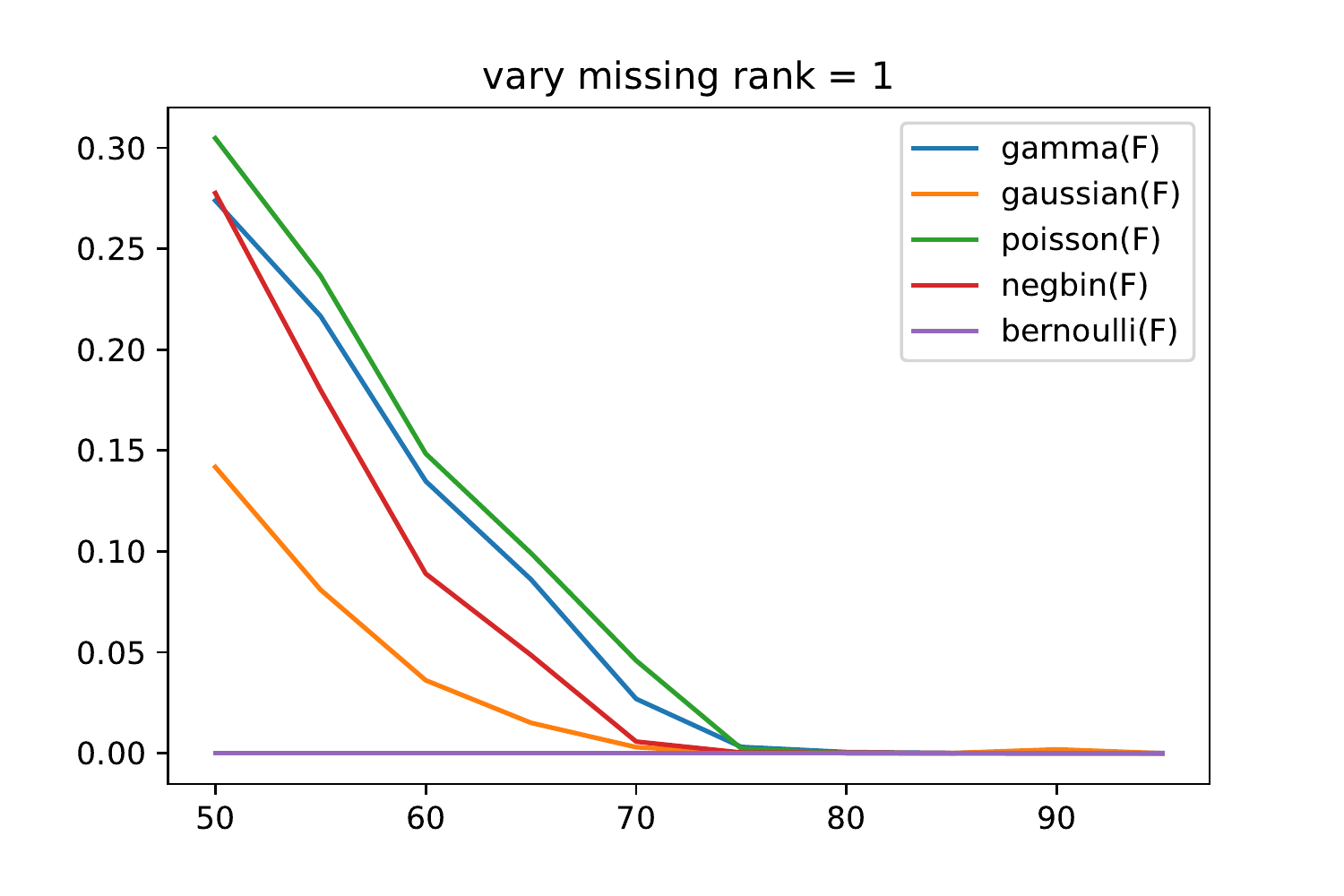}
  \caption{}\label{fig:SSMD1}
\endminipage\hfill
\minipage{0.33\textwidth}
  \includegraphics[width=\linewidth]{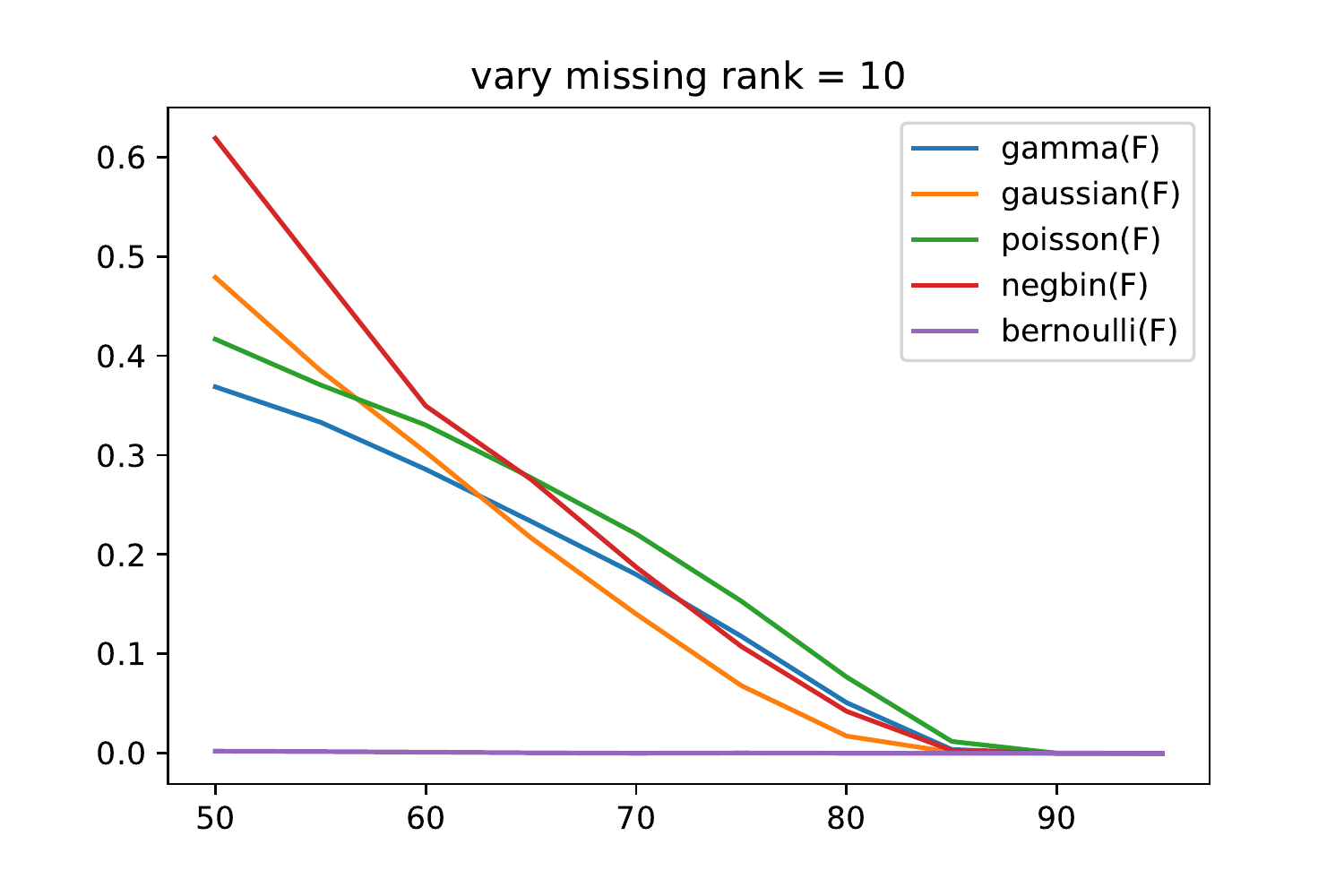}
  \caption{}\label{fig:SSMD10}
\endminipage\hfill
\minipage{0.33\textwidth}
  \includegraphics[width=\linewidth]{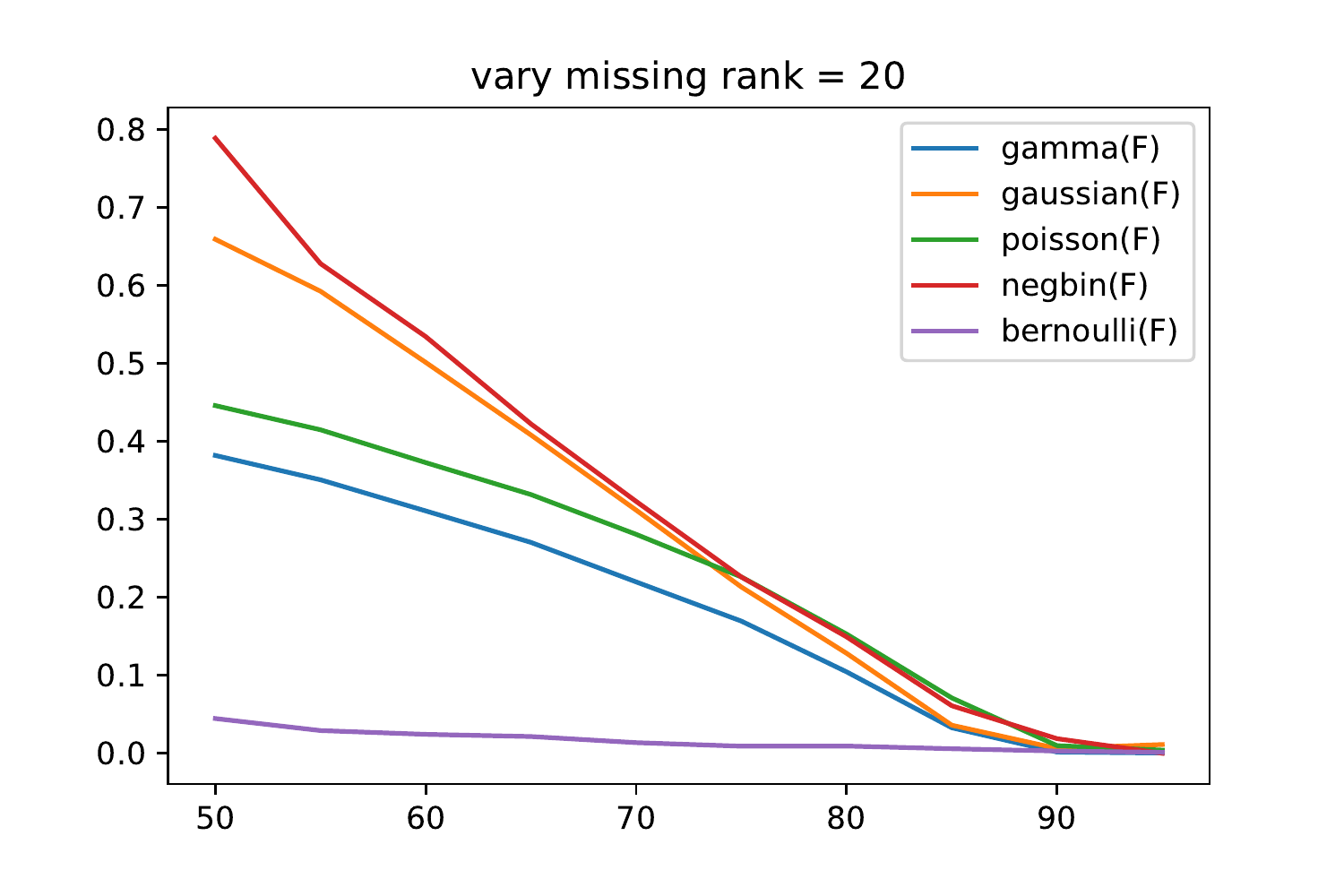}
  \caption{}\label{fig:SSMD20}
\endminipage\hfill 
\newline
\minipage{0.33\textwidth}
  \includegraphics[width=\linewidth]{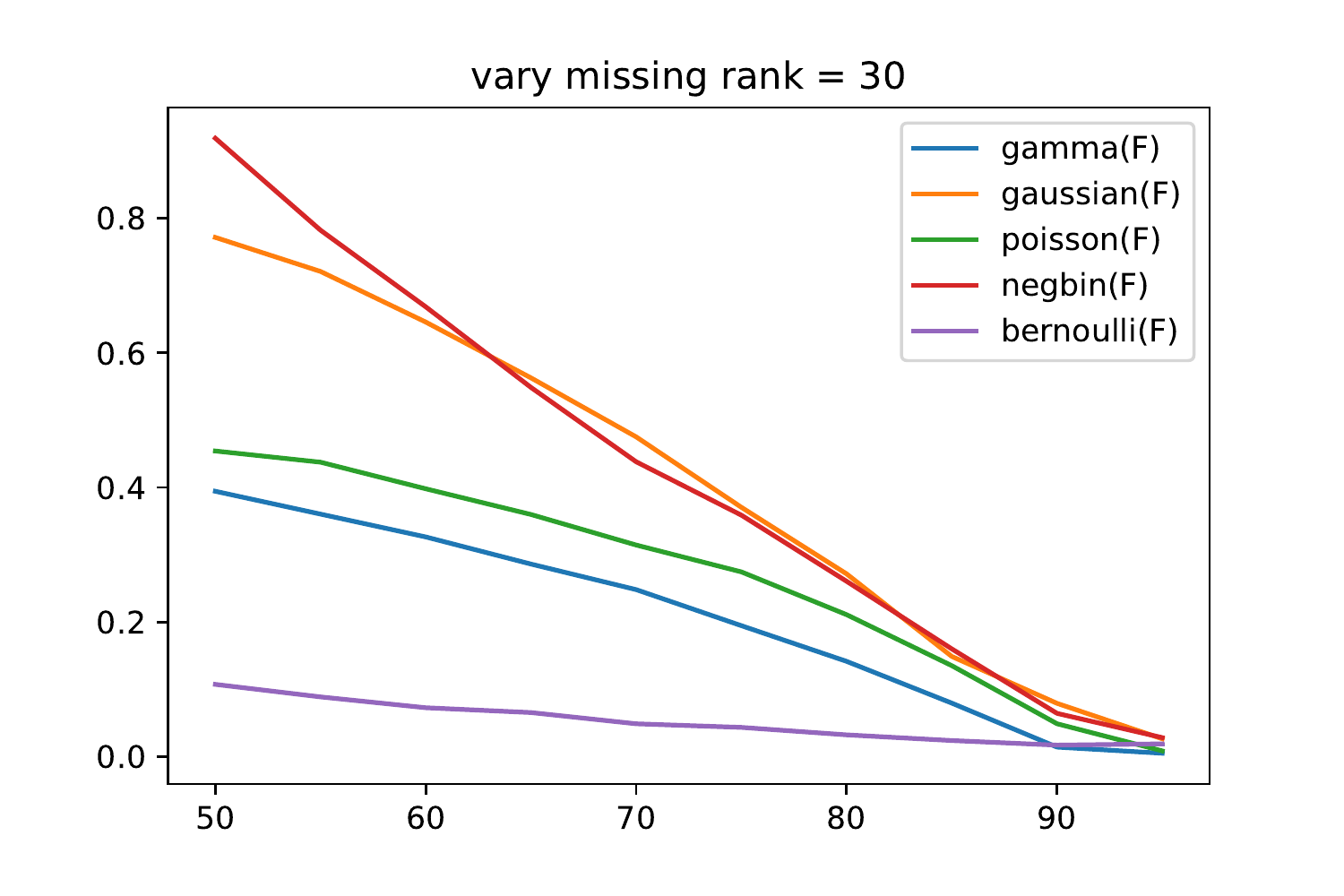}
  \caption{}\label{fig:SSMD30}
\endminipage\hfill
\minipage{0.33\textwidth}
  \includegraphics[width=\linewidth]{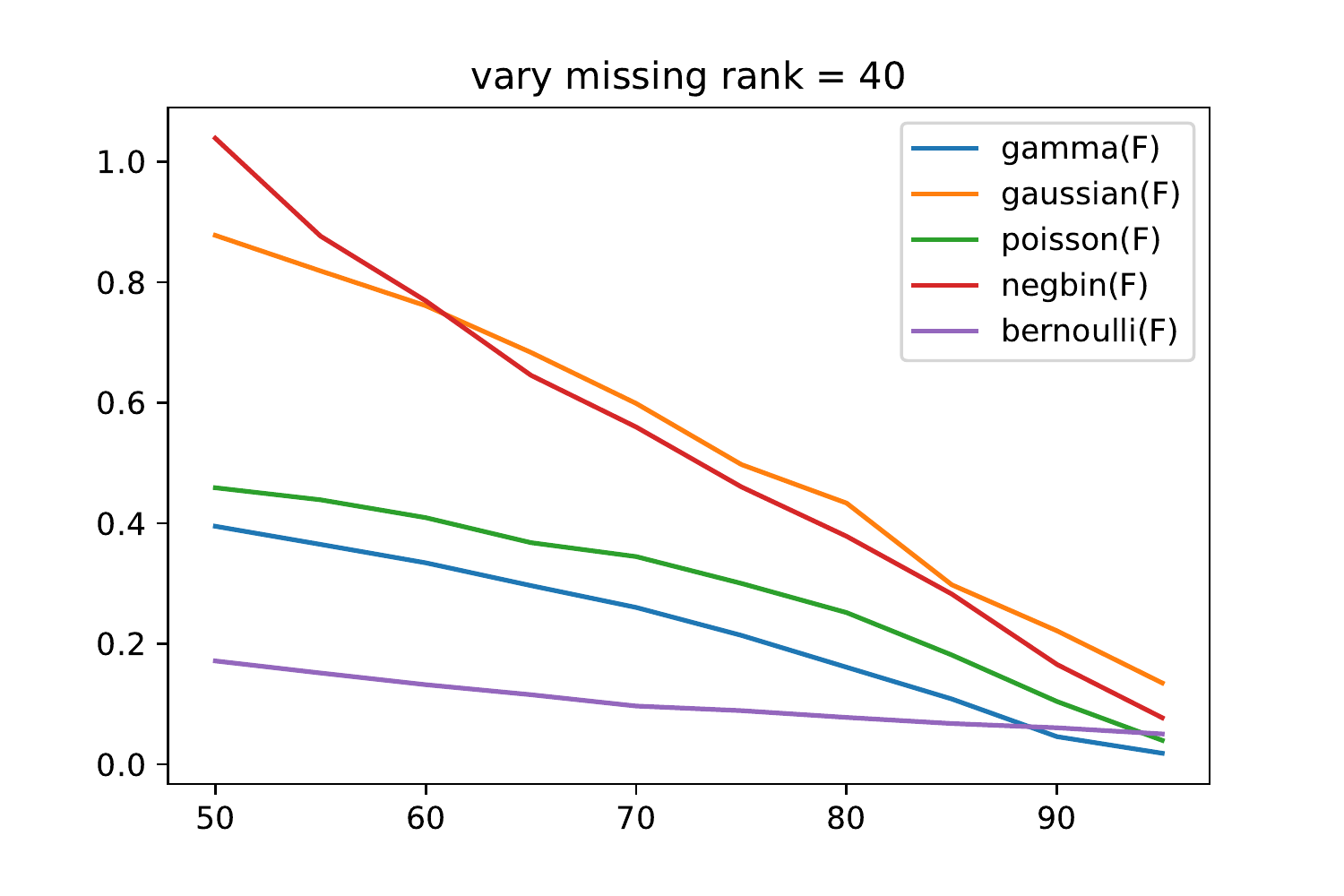}
  \caption{}\label{fig:SSMD40}
\endminipage\hfill
\minipage{0.33\textwidth}
  \includegraphics[width=\linewidth]{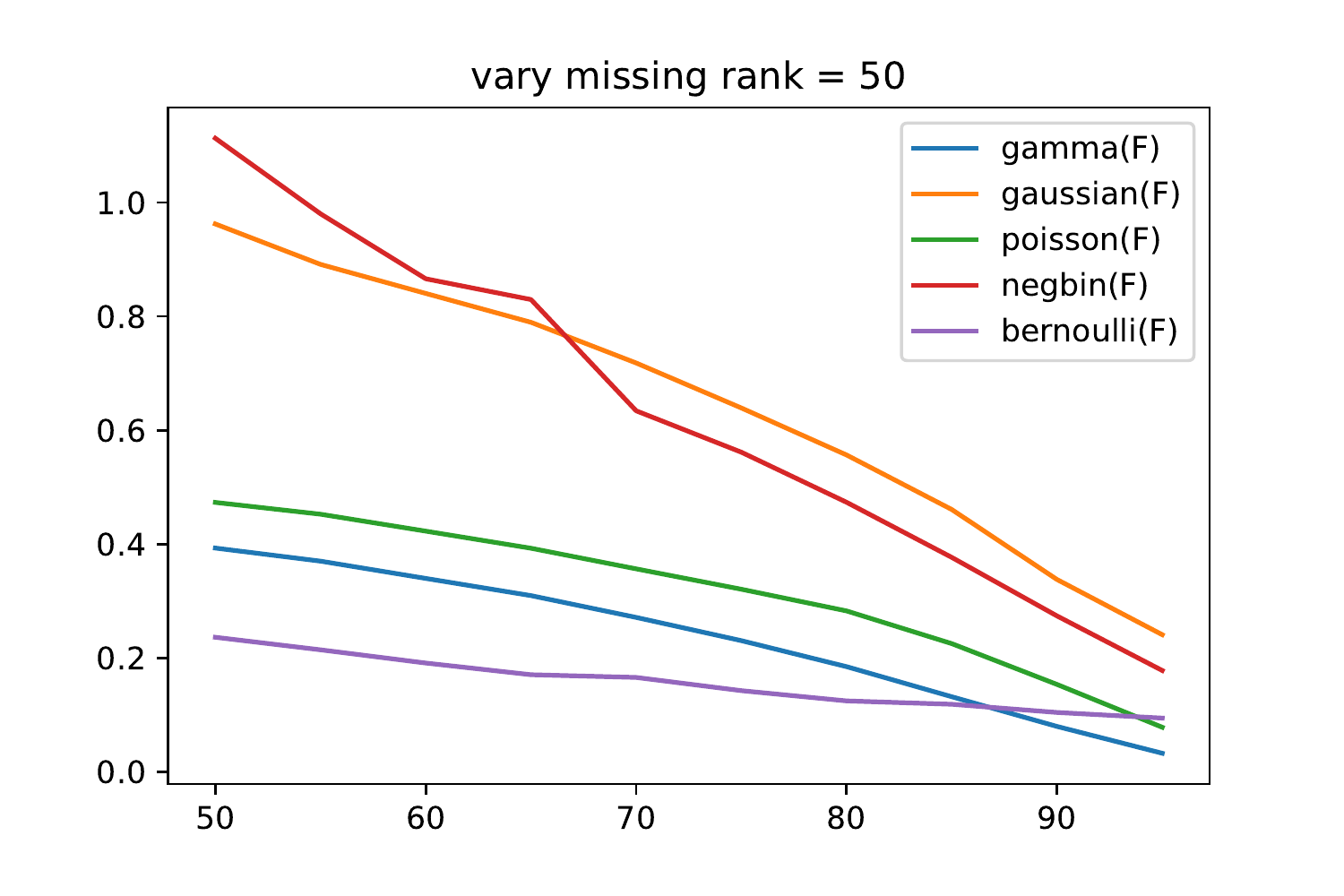}
  \caption{}\label{fig:SSMD50}
\endminipage\hfill 
\newline
\minipage{0.33\textwidth}
  \includegraphics[width=\linewidth]{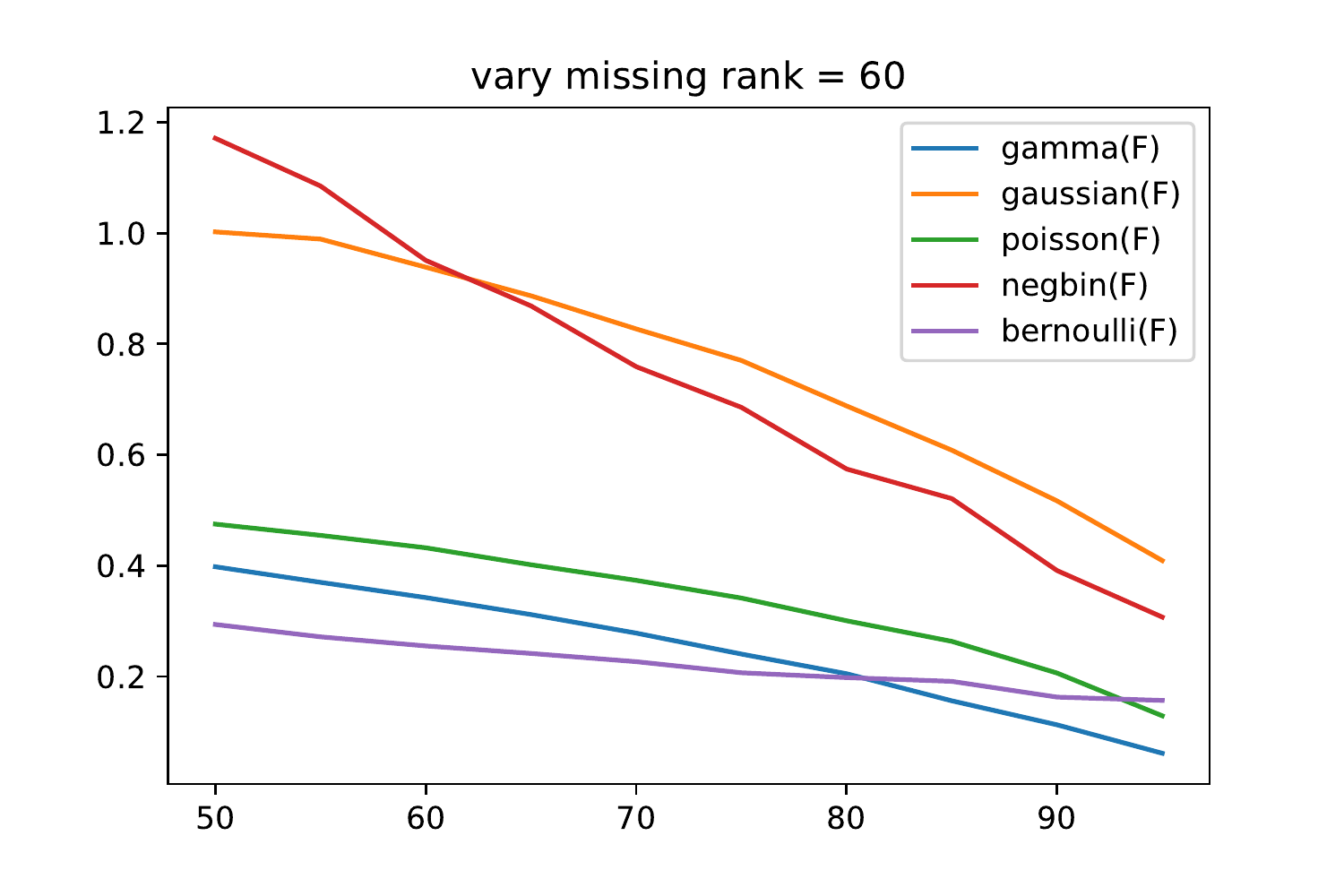}
  \caption{}\label{fig:SSMD60}
\endminipage\hfill
\minipage{0.33\textwidth}
  \includegraphics[width=\linewidth]{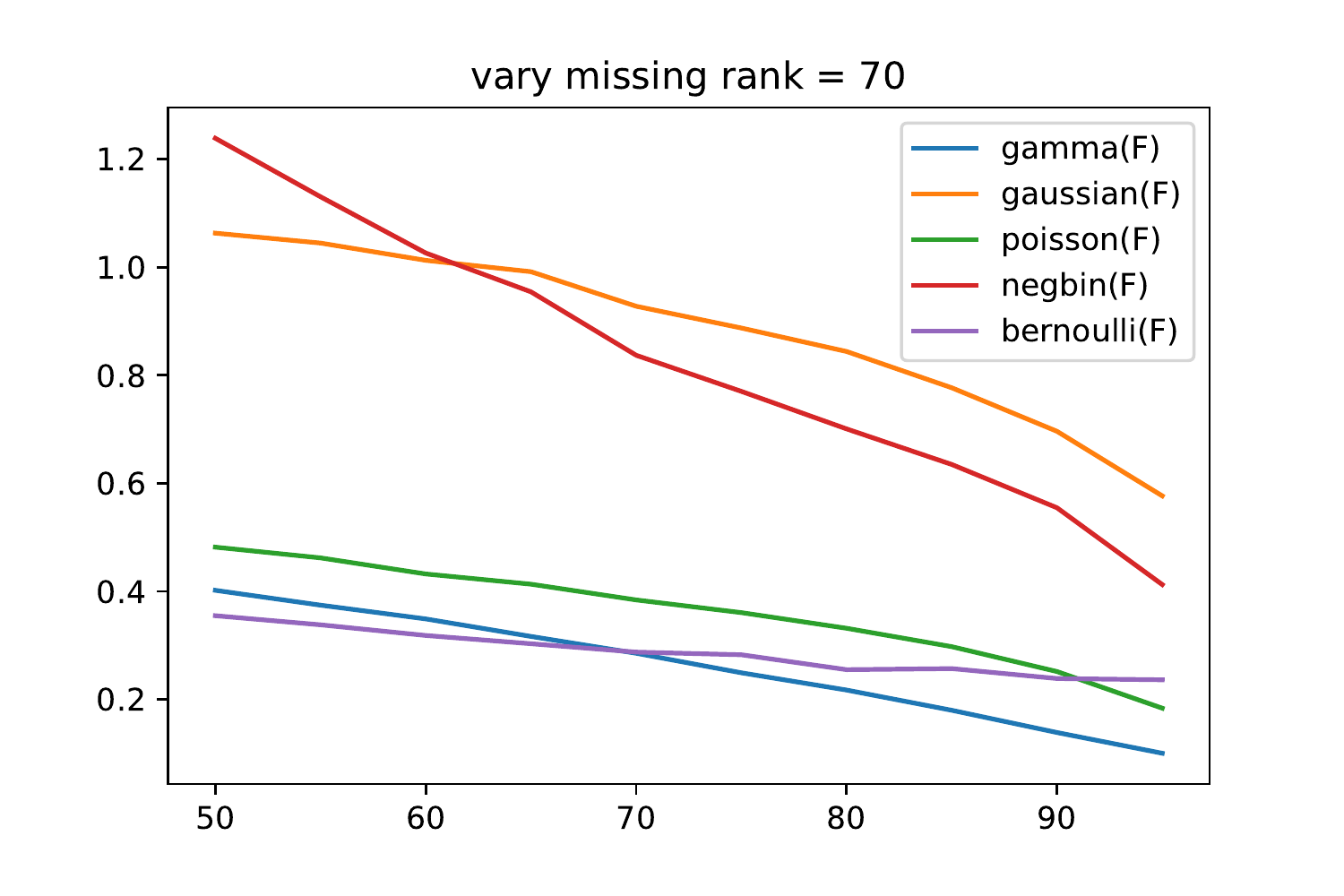}
  \caption{}\label{fig:SSMD70}
\endminipage\hfill
\minipage{0.33\textwidth}
  \includegraphics[width=\linewidth]{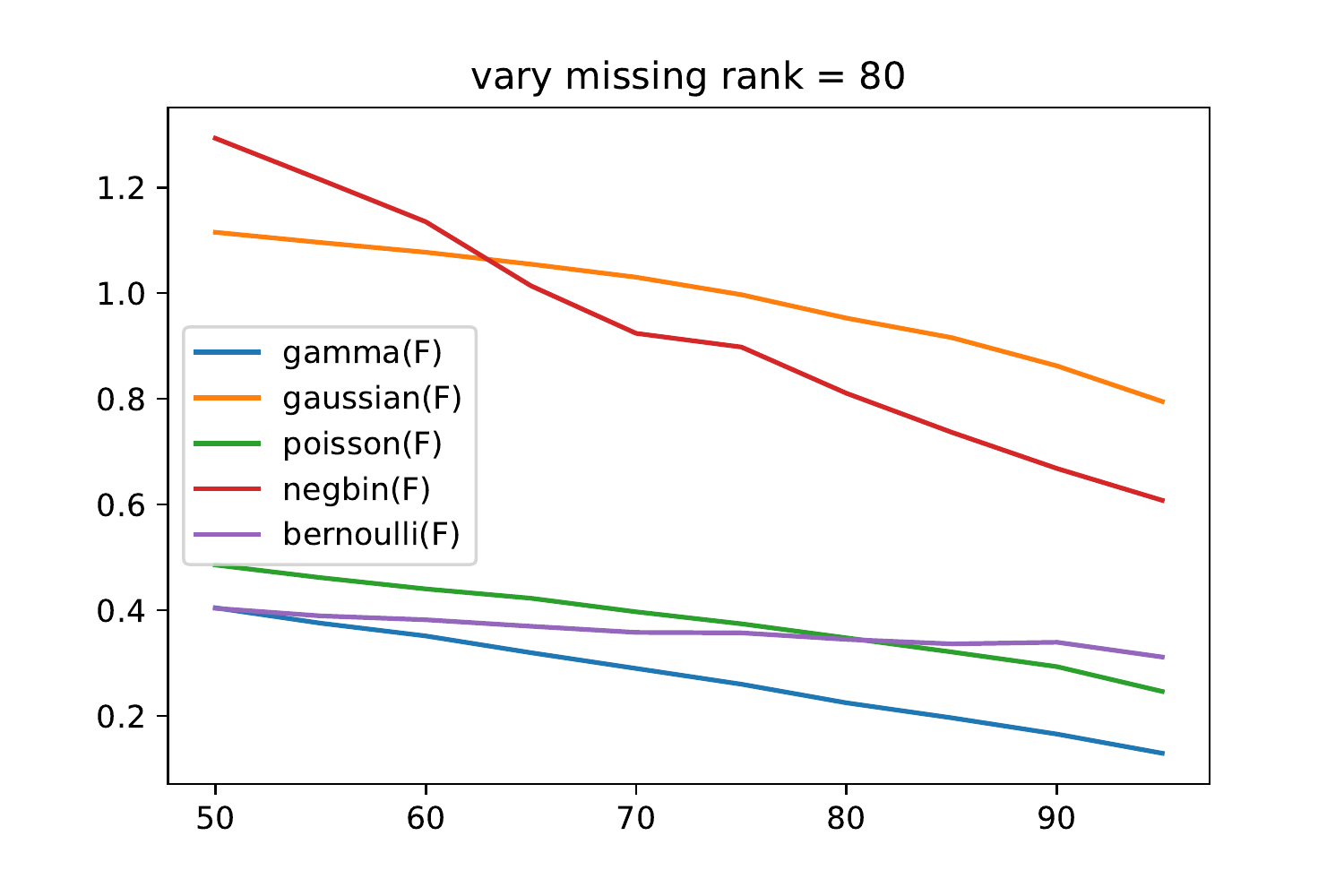}
  \caption{}\label{fig:SSMD80}
\endminipage\hfill 
\end{figure}

\begin{figure}[!htb]
\centering
\includegraphics[height=8cm]{figures/vary_missing_500_avg}
\caption{}\label{fig:SSMD500AVG}
\end{figure}

\paragraph{Small Scale Mixed Data 2}

In this experiment, we test the performance of our algorithm the sampling
rate is fixed at 80\% while changing the input rank of the input matrix.
The resulting figure is in \ref{fig:vary_rank_small}

\begin{figure}[H]
\centering
\includegraphics[height=8cm]{figures/vary_rank_small}
\caption{}\label{fig:vary_rank_small}
\end{figure}

\paragraph{Medium Scale Mixed Data }

In this experiment, we reproduce the same previous evaluation procedures
on medium scaled input. We generate $2000\times2000$ matrix of 5
mixed types(Gaussian, Bernoulli, Poisson, NegBin and Gamma), each
of which could be view as a $2000\times400$ submatrix. We then measure
the performance when holding rank fixed and varying sample rate and
vice versa. The results are in \ref{fig:vary_missing_medium} and
\ref{fig:vary_rank_medium}

\begin{figure}[H]
\minipage{0.4\textwidth}
  \includegraphics[width=\linewidth]{figures/vary_missing_avg_medium}
  \caption{}\label{fig:vary_missing_medium}
\endminipage\hfill
\minipage{0.4\textwidth}
  \includegraphics[width=\linewidth]{figures/vary_rank_medium}
  \caption{}\label{fig:vary_rank_medium}
\endminipage\hfill
\end{figure}

\paragraph{Different Eigen-solvers}

This experiment is designed to test the difference in performance
when different eigen-solvers were used: full eigen-decomposition or
truncated-eigen-decomposition. The result is in \ref{fig:diff_eigen}.
The input matrix is a $500\times500$ mixed typed matrix with each
data type occupying a $500\times100$ sub-matrix. We can see that
when the rank is low, i.e. less than $20\%$ of the corresponding
sub-matrices, the difference between using full and partial eigen
decomposition is small.

\begin{figure}[!htb]
\centering
\includegraphics[height=7cm]{figures/diff_eigen}
\caption{}\label{fig:diff_eigen}
\end{figure}

\paragraph{Observations} The simulation results help verify our theoretical results in that we can see from the plots that 
\begin{itemize}
    \item when the rank is low and fixed, the recovery success is proportional to the sampling rate;
    \item when the sampling rate is fixed, the recovery success is inversely proportional to the rank of the data matrix;
    \item the recovery success when recovering mixed distributed low rank matrices is on par with recovering singly-distributed low rank matrices. 
\end{itemize}
Additionally, we note that although in theory the full eigen-decomposition should be used in order to find out all the positive eigen value/vector pairs, in practice when the matrix is sufficiently low rank, e.g. 10\% of $\min{n, m}$ where $n, m$ respectively refer to row count and column count, using truncated eigen-solver therefore only taking not the full positive spectrum but only the dominate ones actually performs on par with taking the full spectrum. However, we should also note that as rank increases, the truncated eigen-version of the algorithm under performs significantly. 

\section{Concluding Remarks}

From a theoretical point of view we have only obtained an upper bound on the recovery rate. However, many of the previous works have developed a lower bound using information theoretic techniques. It would be interesting to see if a similar 
result could be proved in this general case. Although we have shown that a hybrid of max norm and Schatten norm in the loss function can lead to recovery of the matrix with statistical guarantee, the inequalities between max norm and Schatten norm actually provides a significant bridge in facilitating the final proof. We could not produce a similar result using the same technique without the existence of nuclear norm in the loss function. Hence, an open question is whether we can prove a similar result for max-norm-only loss functions.
\\

\noindent While our paper mainly discusses theoretical results, the numerical implementation counterparts are also worth some brief discussion.  The algorithms developed and analyzed in this article has been implemented in a Julia package, \texttt{MatrixCompletion.jl}\footnote{see https://github.com/jasonsun0310/MatrixCompletion.jl}. To the best of our knowledge, this is the first dedicated package in Julia that address the problem of matrix completion of reasonably large input size that uses convex optimization methods. In addition, \texttt{MatrixCompletion.jl} also provides several features that we deem useful for interested readers who want to get hands on experience with our algorithm.

\paragraph{Automatic Data Type Detection}
In reality it is often unknown that what are the exact distributions of the underlying data. To address this issue, we provided an API that allows the algorithm to automatically detect the best fitting distributed within the supported range and after doing so, also acquire the MLEs of the corresponding parameters. Traditional goodness-and-fit often has less power when the input data size are large. To address this problem, we adopted a different approach combining a simple trivial decision tree and comparing the empirical distribution to its exponential family candidates in terms of moment generating functions. 

\paragraph{Automatic Differentiation and Extensible Loss Function Design}
We acknowledge that besides the loss functions we proposed, there are many other possible candidates within or outside the exponential families could be deemed useful in solving the matrix completion problem. \texttt{MatrixCompletion.jl}'s implementation has taken these factors into consideration. Custom loss functions are possible. Furthermore, we also have bundled automatic differentiation support to help facilitate the implementation of custom loss function by removing the need to manually implement another gradient.

\paragraph{More Classical Algorithms} 
With the help of Github and researchers around the world, we are aiming to make \texttt{MatrixCompletion.jl} a comprehensive library on matrix completion. Currently, we are adding more classical algorithms such as singular value thresholding, manifold optimization based methods. Because of Julia's multiple dispatch system and its good module system, all these algorithms can be implemented under one polymorphic method call, which is straight forward as well as user-friendly.

\printbibliography

\newpage

\section*{Appendix A: Theoretical Results \label{sec:appendix-theoreical-results-proof}}

\subsection*{Precise statement of upper bounds}

Let the collection of matrices $(E_{11}^{\tau},...,E_{n_{1}n_{2}^{\tau}})$
be the canonical basis in the space of matrices of size $n_{1}\times n_{2}^{\tau},$
Let $(\varepsilon_{ij}^{\tau})$ be an i.i.d. Rademacher sequence.
We defined 
\begin{equation}
\Sigma_{R}=(\Sigma_{R}^{1},...,\Sigma_{R}^{\left|\mathcal{T}\right|}),
\end{equation}
where 
\begin{equation}
\Sigma_{R}^{\tau}=\frac{1}{n_{1}N_{2}}\sum_{i,j\in[n_{1}]\times[n_{2}^{\tau}]}\varepsilon_{ij}^{\tau}\delta_{ij}^{\tau}E_{ij}^{\tau}.
\end{equation}
The following lemma provides a bound on the operator norm of $\Sigma_{R}$.
\begin{lem}[Lemma 1 in \cite{alayaCollectiveMatrixCompletion2019}]
There exists an absolute constant $c$ such that 
\begin{equation}
\mathbb{E}\left[\left\Vert \Sigma_{R}\right\Vert \right]\leq c\left(\frac{\sqrt{\mu}+\sqrt{\log(n_{1}\land N_{2})}}{n_{1}N_{2}}\right).
\end{equation}
\end{lem}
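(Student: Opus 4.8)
The plan is to recognize that, up to the scalar factor $1/(n_1 N_2)$, the matrix $\Sigma_R$ is an $n_1 \times N_2$ random matrix whose entries are \emph{independent}. Writing $A := n_1 N_2 \, \Sigma_R$, the entry of $A$ at position $(i,j)$ in block $\tau$ is $A_{ij}^\tau = \varepsilon_{ij}^\tau \delta_{ij}^\tau$; since the Rademacher variables and the sampling indicators are mutually independent across all indices, the entries of $A$ are independent, mean zero (because $\mathbb{E}[\varepsilon_{ij}^\tau]=0$), bounded by $1$ in absolute value, with $\mathbb{E}[(A_{ij}^\tau)^2] = \pi_{ij}^\tau$. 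The claim is then equivalent to the operator-norm estimate $\mathbb{E}\|A\| \leq c\bigl(\sqrt{\mu} + \sqrt{\log(n_1 \land N_2)}\bigr)$, which is exactly the regime handled by sharp operator-norm inequalities for matrices with independent entries.

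First I would symmetrize. Since $A$ is rectangular, I pass to its Hermitian dilation $\widetilde{A} = \begin{bmatrix} 0 & A \\ A^T & 0 \end{bmatrix}$, which is symmetric with independent entries above the diagonal and satisfies $\|\widetilde{A}\| = \|A\|$. I would then apply a sharp bound for the expected spectral norm of a symmetric random matrix with independent, bounded, mean-zero entries (of Bandeira--van Handel / Latała / Seginer type), which controls $\mathbb{E}\|\widetilde{A}\|$ by the sum of two pieces: the largest row/column standard deviation, and the entrywise sup-norm times $\sqrt{\log(\text{dimension})}$. The two variance statistics are $\max_i \sqrt{\sum_{j,\tau}\pi_{ij}^\tau}$ and $\max_{j,\tau}\sqrt{\sum_i \pi_{ij}^\tau}$, both bounded by $\sqrt{\mu}$ by the definition of $\mu$ as the maximal expected number of observations in any row or column; the entrywise bound is $1$. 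This yields $\mathbb{E}\|A\| \leq c\bigl(\sqrt{\mu} + \sqrt{\log(n_1 \land N_2)}\bigr)$, and dividing through by $n_1 N_2$ gives the statement.

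The main obstacle is obtaining the \emph{sharp additive} form $\sqrt{\mu} + \sqrt{\log}$ rather than the coarser \emph{multiplicative} $\sqrt{\mu \log}$ that an off-the-shelf matrix Bernstein inequality produces. Indeed, viewing $\Sigma_R$ as a sum of independent single-entry matrices with variance proxy $\sigma^2 = \mu/(n_1 N_2)^2$ (the diagonal row/column-sum matrices $\sum_{j,\tau}\pi_{ij}^\tau$ and $\sum_i \pi_{ij}^\tau$) and uniform bound $R = 1/(n_1 N_2)$, matrix Bernstein gives only $\mathbb{E}\|\Sigma_R\| \leq c\bigl(\sqrt{\mu \log(n_1 + N_2)} + \log(n_1 + N_2)\bigr)/(n_1 N_2)$, so recovering the claimed bound genuinely requires the independent-entries refinement. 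A secondary technical point is the joint randomness of $\varepsilon$ and $\delta$: one may either treat the products $\varepsilon_{ij}^\tau \delta_{ij}^\tau$ directly as independent bounded entries, as above, or condition on $\delta$, apply the non-commutative Khintchine inequality to the resulting Rademacher series, and then take expectation over $\delta$ via a concentration bound for the random row and column sums $\sum_j \delta_{ij}^\tau$ around their means $\sum_j \pi_{ij}^\tau$; the first route is cleaner and avoids the extra concentration step. Finally, some care is needed to obtain $n_1 \land N_2$ rather than $n_1 + N_2$ inside the logarithm, which follows from applying the sharp inequality to the smaller factor of the dilation.
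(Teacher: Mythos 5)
Your proposal is correct and follows essentially the same route as the source: the paper states this lemma by citation to \cite{alayaCollectiveMatrixCompletion2019}, whose argument is precisely to apply the sharp independent-entries bound of Bandeira--van Handel (reproduced in this paper's Appendix B as Corollary 3.3, with $\kappa_{1}\lor\kappa_{2}\leq\sqrt{\mu}$ and $\kappa_{*}\leq1$) to the rescaled matrix $n_{1}N_{2}\,\Sigma_{R}$ with entries $\varepsilon_{ij}^{\tau}\delta_{ij}^{\tau}$. Your supplementary observations---that matrix Bernstein would only give the weaker multiplicative $\sqrt{\mu\log(n_{1}+N_{2})}$ form, and that the $\log(n_{1}\land N_{2})$ factor comes from the rectangular (dilation) version of the inequality---are accurate and match the reason this particular tool is needed.
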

Additionally, we let $\left\Vert \right\Vert _{\Pi,F}$ be the weighted
Frobenous norm defined by $\left\Vert A\right\Vert _{\Pi,F}=\sum_{\tau\in\mathcal{T}}\sum_{n_{1}\times n_{2}^{\tau}}\pi_{ij}^{\tau}(A_{ij}^{\tau})^{2}.$

\subsection*{Proof of \ref{thm:2}}

Since by assumption $\Theta\in\mathcal{B}_{_{\infty}}^{n_{1}\times N_{2}}(\gamma)$,
it follows that $\mathcal{L}(\widehat{\Theta}|Y)\leq\mathcal{L}(\Theta\vert Y),$
which expand to 
\begin{equation}
-\frac{1}{n_{1}N_{2}}\sum_{\tau\in\mathcal{T}}\sum_{(i,j)\in\Omega_{\tau}}\delta_{ij}^{\tau}(Y_{ij}^{\tau}\Theta_{ij}-A^{\tau}(\Theta_{ij}))+\left\Vert \Theta\right\Vert _{*,\max}^{\lambda_{*},\lambda_{\max}}\geq-\frac{1}{n_{1}N_{2}}\sum_{\tau\in\mathcal{T}}\sum_{(i,j)\in\Omega_{\tau}}\delta_{ij}^{\tau}(Y_{ij}^{\tau}\widehat{\Theta}_{ij}-A^{\tau}(\widehat{\Theta}_{ij}))+\Vert\widehat{\Theta}\Vert_{*,\max}^{\lambda_{*},\lambda_{\max}},\label{eq:compare_mle}
\end{equation}
which, by rearranging, is equivalent to 
\begin{equation}
\frac{1}{n_{1}N_{2}}\sum_{\tau\in\mathcal{T}}\sum_{(i,j)\in\Omega_{\tau}}\delta_{ij}^{\tau}(A^{\tau}(\widehat{\Theta}_{ij})-Y_{ij}^{\tau}\widehat{\Theta}_{ij})+\Vert\widehat{\Theta}\Vert_{*,\max}^{\lambda_{*},\lambda_{\max}}\leq\frac{1}{n_{1}N_{2}}\sum_{\tau\in\mathcal{T}}\sum_{(i,j)\in\Omega_{\tau}}\delta_{ij}^{\tau}(A^{\tau}(\Theta_{ij})-Y_{ij}^{\tau}\Theta_{ij})+\Vert\Theta\Vert_{*,\max}^{\lambda_{*},\lambda_{\max}}.\label{eq:compare_mle2}
\end{equation}
Now we massage \ref{eq:compare_mle2} into a form that's easier to
work with: 
\begin{equation}
\frac{1}{n_{1}N_{2}}\sum_{\tau\in\mathcal{T}}\sum_{i,j\in\Omega_{\tau}}\delta_{ij}^{\tau}(Y_{ij}^{\tau}(\widehat{\Theta}_{ij}-\Theta_{ij})-(A^{\tau}(\widehat{\Theta}_{ij})-A^{\tau}(\Theta_{ij}))\leq\Vert\Theta\Vert_{*,\max}^{\lambda_{*},\lambda_{\max}}-\Vert\widehat{\Theta}\Vert_{*,\max}^{\lambda_{*},\lambda_{\max}}.
\end{equation}
Unpacking the norms we get
\begin{equation}
\frac{1}{n_{1}N_{2}}\sum_{\tau\in\mathcal{T}}\sum_{i,j\in\Omega_{\tau}}\delta_{ij}^{\tau}\left[(A^{\tau}(\widehat{\Theta}_{ij}^{\tau})-A^{\tau}(\Theta_{ij}^{\tau}))-Y_{ij}^{\tau}(\widehat{\Theta}_{ij}^{\tau}-\Theta_{ij}^{\tau})^{\tau}\right]\leq\lambda_{*}(\left\Vert \Theta\right\Vert _{*}-\Vert\widehat{\Theta}\Vert_{*})+\lambda_{\max}(\left\Vert \Theta\right\Vert _{\max}-\Vert\widehat{\Theta}\Vert_{\max}).\label{eq:compare_mle3}
\end{equation}
Since using the bijection between Bregman divergence and exponential
family we can write 
\begin{equation}
\text{KL}(\widehat{\Theta}_{ij}^{\tau},\Theta_{ij}^{\tau})=A^{\tau}(\widehat{\Theta}_{ij}^{\tau})-A^{\tau}(\Theta_{ij}^{\tau})-(\widehat{\Theta}_{ij}^{\tau}-\Theta_{ij})\nabla A^{\tau}(\Theta_{ij}),
\end{equation}
it follows that 
\begin{equation}
A^{\tau}(\widehat{\Theta}_{ij}^{\tau})-A^{\tau}(\Theta_{ij}^{\tau})=\mathrm{KL}(\widehat{\Theta}_{ij},\Theta_{ij}^{\tau})
\end{equation}
Substitute this back into \ref{eq:compare_mle3}, we get that
\begin{align}
 & \frac{1}{n_{1}N_{2}}\sum_{\tau\in\mathcal{T}}\sum_{i,j\in\Omega_{\tau}}\delta_{ij}^{\tau}\left[(A^{\tau}(\widehat{\Theta}_{ij}^{\tau})-A^{\tau}(\Theta_{ij}^{\tau}))-Y_{ij}^{\tau}(\widehat{\Theta}_{ij}^{\tau}-\Theta_{ij}^{\tau})^{\tau}\right]\\
=\quad & \frac{1}{n_{1}N_{2}}\sum_{\tau\in\mathcal{T}}\sum_{i,j\in\Omega_{\tau}}\delta_{ij}^{\tau}\left[\mathrm{KL}(\widehat{\Theta}_{ij},\Theta_{ij})+(\widehat{\Theta}_{ij}^{\tau}-\Theta_{ij})\nabla A^{\tau}(\Theta_{ij})-Y_{ij}^{\tau}(\widehat{\Theta}_{ij}^{\tau}-\Theta_{ij}^{\tau})^{\tau}\right]\\
\leq\quad & \lambda_{*}(\left\Vert \Theta\right\Vert _{*}-\Vert\widehat{\Theta}\Vert_{*})+\lambda_{\max}(\left\Vert \Theta\right\Vert _{\max}-\Vert\widehat{\Theta}\Vert_{\max}).
\end{align}

Rearranging the terms, we get 
\begin{align}
 & \frac{1}{n_{1}N_{2}}\sum_{\tau\in\mathcal{T}}\sum_{i,j\in[n_{1}]\times[N_{2}]}\delta_{ij}^{\tau}\mathrm{KL}^{\tau}(\widehat{\Theta}_{ij}^{\tau},\Theta_{ij}^{\tau})\\
\leq\quad & \lambda_{*}(\left\Vert \Theta\right\Vert _{*}-\Vert\widehat{\Theta}\Vert_{*})+\lambda_{\max}(\left\Vert \Theta\right\Vert _{\max}-\Vert\widehat{\Theta}\Vert_{\max})+\frac{1}{n_{1}N_{2}}\sum_{\tau\in\mathcal{T}}\sum_{(i,j)\in[n_{1}]\times[N_{2}]}\delta_{ij}^{\tau}(Y_{ij}^{\tau}-\nabla A(\Theta_{ij}^{\tau}))(\widehat{\Theta}_{ij}^{\tau}-\Theta_{ij}^{\tau})\\
\leq\quad & \lambda_{*}(\left\Vert \Theta\right\Vert _{*}-\Vert\widehat{\Theta}\Vert_{*})+\lambda_{\max}(\norm{\Theta}_{\max}-\norm{\widehat{\Theta}}_{\max})+\left\langle \nabla_{\Theta}\mathcal{L}(\Theta\vert Y),\widehat{\Theta}-\Theta\right\rangle \label{eq:proof1-diff1}\\
\leq\quad & \lambda_{*}(\left\Vert \Theta\right\Vert _{*}-\Vert\widehat{\Theta}\Vert_{*})+\lambda_{\max}(\norm{\Theta}_{\max}-\norm{\widehat{\Theta}}_{\max})+\left\Vert \nabla_{\Theta}\mathcal{L}(\Theta\vert Y)\right\Vert \norm{\widehat{\Theta}-\Theta}_{*}\label{eq:proof1-diff2}\\
\leq\quad & \lambda_{*}(\left\Vert \Theta\right\Vert _{*}-\Vert\widehat{\Theta}\Vert_{*})+\lambda_{\max}(\norm{\Theta}_{\max}-\norm{\widehat{\Theta}}_{\max})+\frac{\lambda_{*}}{2}\norm{\widehat{\Theta}-\Theta}_{*}\label{eq:proof1-diff3}\\
\leq\quad & \lambda_{*}(\norm{\mathcal{P}_{\Theta}(\Theta-\widehat{\Theta})}_{*}-\norm{\mathcal{P}_{\Theta}^{\perp}(\Theta-\widehat{\Theta})}_{*})+\frac{\lambda_{*}}{2}\left(\norm{\mathcal{P}_{\Theta}(\Theta-\widehat{\Theta})}_{*}+\norm{\mathcal{P}_{\Theta}^{\perp}(\Theta-\widehat{\Theta})}_{*}\right)+\lambda_{\max}(\norm{\Theta}_{\max}-\norm{\widehat{\Theta}}_{\max})\\
\leq\quad & \frac{3}{2}\lambda_{*}\norm{\mathcal{P}_{\Theta}(\Theta-\widehat{\Theta})}_{*}-\frac{\lambda_{*}}{2}\norm{\mathcal{P}_{\Theta}(\Theta-\widehat{\Theta})}_{*}+\lambda_{\max}(\norm{\Theta}_{\max}-\norm{\widehat{\Theta}}_{\max})\\
\leq\quad & \frac{3}{2}\lambda_{*}\norm{\mathcal{P}_{\Theta}(\Theta-\widehat{\Theta})}_{*}+\lambda_{\max}(\norm{\Theta}_{\max}-\norm{\widehat{\Theta}}_{\max})\\
\leq\quad & \frac{3}{2}\lambda_{*}\sqrt{\text{2rank}(\Theta)}\norm{\Theta-\widehat{\Theta}}_{F}+\lambda_{\max}(\norm{\Theta}_{\max}-\norm{\widehat{\Theta}}_{\max})=\\
\leq\quad & \frac{3}{2}\lambda_{*}\sqrt{\text{2rank}(\Theta)}\norm{\Theta-\widehat{\Theta}}_{F}+\lambda_{\max}\norm{\Theta-\widehat{\Theta}}_{F}\\
=\quad & \left(\frac{3}{2}\lambda_{*}\sqrt{2\text{rank}(\Theta)}+\lambda_{\max}\right)\norm{\Theta-\widehat{\Theta}}_{F},
\end{align}
where we note
\begin{itemize}
\item \ref{eq:proof1-diff1} is because of the fact that 
\begin{align}
\nabla_{\Theta}\mathcal{L}(\Theta,Y) & =\nabla_{\Theta}\left[\sum_{\tau\in\mathcal{T}}\sum_{(i,j)\in[n_{1}]\times[N_{2}]}\frac{1}{n_{1}N_{2}}\delta_{ij}^{\tau}(Y_{ij}^{\tau}\Theta_{ij}^{\tau}-A(\Theta_{ij}^{\tau}))\right]=\sum_{\tau\in\mathcal{T}}\sum_{(i,j)\in[n_{1}]\times[N_{2}]}\left[\delta_{ij}^{\tau}(Y_{ij}-\nabla A(\Theta_{ij}^{\tau}))\right]e_{ij}^{\tau}
\end{align}
where $\{e_{ij}^{\tau}\}$ is the standard basis in $\mathbb{R}^{\left|\mathcal{T}\right|\times n_{1}\times N_{2}}$
\item \ref{eq:proof1-diff2} is due to Cauchy inequality for operator norms
\item \ref{eq:proof1-diff3} is due to the assumption that $\lambda_{*}\geq2\norm{\mathcal{L}_{\Theta}(\Theta\vert Y)}$
\end{itemize}
Then it follows that 
\begin{equation}
\frac{1}{n_{1}N_{2}}\sum_{\tau\in\mathcal{T}}\sum_{i,j\in[n_{1}]\times[N_{2}]}\delta_{ij}^{\tau}\mathrm{KL}^{\tau}(\widehat{\Theta}_{ij}^{\tau},\Theta_{ij}^{\tau})\geq\frac{L_{\gamma}^{2}}{2}\frac{1}{n_{1}N_{2}}\sum_{\tau\in\mathcal{T}}\sum_{i,j\in[n_{1}]\times[N_{2}]}\delta_{ij}^{\tau}(\widehat{\Theta}_{ij}^{\tau}-\Theta)^{2}:=\frac{L_{\gamma}^{2}}{2}\Delta^{2}(\widehat{\Theta}-\Theta).
\end{equation}
So it follows that 
\begin{align}
\Delta^{2}(\widehat{\Theta}-\Theta) & \leq\frac{2}{L_{\gamma}^{2}}\frac{1}{n_{1}N_{2}}\sum_{\tau\in\mathcal{T}}\sum_{i,j\in[n_{1}]\times[N_{2}]}\delta_{ij}^{T}\mathrm{KL}(\widehat{\Theta}_{ij}^{\tau},\Theta_{ij}^{\tau})\leq\left(\frac{3}{L_{\gamma}^{2}}\lambda_{*}\sqrt{2\text{rank}(\Theta)}+\lambda_{\max}\right)\norm{\Theta-\widehat{\Theta}}_{F}.\label{eq:delta_estimate_1-2}
\end{align}

Now we define the threshold $\beta=\frac{946\gamma^{2}\log(n_{1}+N_{2})}{pn_{1}D}$
and distinguish the two following cases:

\paragraph{Case 1. $\boxed{\frac{1}{n_{1}N_{2}}\protect\norm{\widehat{\Theta}-\Theta}_{\Pi,F}<\beta}$}

In this case, the theorem is true.

\paragraph{Case 2. $\boxed{\frac{1}{n_{1}N_{2}}\protect\norm{\widehat{\Theta}-\Theta}_{\Pi,F}\protect\geq\beta}$}

In this case, by  \ref{lem:max-schatten-norm-ineq}, it follows that
\begin{equation}
\norm{\widehat{\Theta}-\Theta}_{*}\leq2\left(\sqrt{8\mathrm{rank}(\Theta)}+\frac{\lambda_{\max}}{\lambda_{*}}\right)\norm{\widehat{\Theta}-\Theta}_{F}.
\end{equation}
Then it follows that $\widehat{\Theta}\in\mathcal{K}(\beta,4(\sqrt{8\mathrm{rank}(\Theta)}+\frac{\lambda_{\max}}{\lambda_{*}})^{2})$,
where 
\begin{equation}
\mathcal{K}(\beta,r):=\left\{ \Xi\in\mathcal{B}_{\infty}(\gamma):\left\Vert \Theta-\Xi\right\Vert _{*}\leq\sqrt{r}\left\Vert \Theta-\Xi\right\Vert _{F}\text{ and }\frac{1}{n_{1}N_{2}}\left\Vert \Xi-\Theta\right\Vert _{\Pi,F}^{2}\geq\beta\right\} .
\end{equation}
Then by  \ref{lem:max-schatten-norm-ineq}, it follows that 
\begin{equation}
\left|\Delta^{2}(\widehat{\Theta},\Theta)-\frac{1}{n_{1}N_{2}}\norm{\widehat{\Theta}-\Theta}_{\Pi,F}^{2}\right|\leq\frac{\norm{\widehat{\Theta}-\Theta}_{\Pi,F}^{2}}{2n_{1}N_{2}}+1392\cdot4\left(\sqrt{8\mathrm{rank}(\Theta)}+\frac{\lambda_{\max}}{\lambda_{*}}\right)^{2}\gamma^{2}(\mathbb{E}[\left\Vert \Sigma_{R}\right\Vert ])^{2}+\frac{5567\gamma^{2}}{n_{1}N_{2}p},
\end{equation}
which after rearrangement becomes
\begin{equation}
\Delta^{2}(\widehat{\Theta},\Theta)\geq\frac{\norm{\widehat{\Theta}-\Theta}_{\Pi,F}^{2}}{2n_{1}N_{2}}-5568\left(\sqrt{8\mathrm{rank}(\Theta)}+\frac{\lambda_{\max}}{\lambda_{*}}\right)^{2}\gamma^{2}(\mathbb{E}[\left\Vert \Sigma_{R}\right\Vert ])^{2}-\frac{5567\gamma^{2}}{n_{1}N_{2}}.\label{eq:delta_estimate_2}
\end{equation}
Then combining \ref{eq:delta_estimate_1-2} and \ref{eq:delta_estimate_2},
it follows that 
\begin{equation}
\frac{\norm{\widehat{\Theta}-\Theta}_{\Pi,F}^{2}}{2n_{1}N_{2}}-5568\left(\sqrt{8\mathrm{rank}(\Theta)}+\frac{\lambda_{\max}}{\lambda_{*}}\right)^{2}\gamma^{2}(\mathbb{E}[\left\Vert \Sigma_{R}\right\Vert ])^{2}-\frac{5567\gamma^{2}}{n_{1}N_{2}}\leq\left(\frac{3}{2}\lambda_{*}\sqrt{2\text{rank}(\Theta)}+\lambda_{\max}\right)\norm{\Theta-\widehat{\Theta}}_{F},
\end{equation}
which after rearranging terms becomes 
\begin{align}
\frac{\norm{\widehat{\Theta}-\Theta}_{\Pi,F}^{2}}{2n_{1}N_{2}} & \leq\underbrace{\left(\frac{3}{L_{\gamma}^{2}}\lambda_{*}\sqrt{2\text{rank}(\Theta)}+\lambda_{\max}\right)}_{(I)}\norm{\Theta-\widehat{\Theta}}_{F}+\underbrace{5568\left(\sqrt{8\mathrm{rank}(\Theta)}+\frac{\lambda_{\max}}{\lambda_{*}}\right)^{2}\gamma^{2}(\mathbb{E}[\left\Vert \Sigma_{R}\right\Vert ])^{2}}_{(II)}+\frac{5567\gamma^{2}}{n_{1}N_{2}p}.
\end{align}

\begin{lem}
\label{lem:term1}The following identity holds:
\begin{equation}
(I)\leq\frac{n_{1}N_{2}}{p}\left(\left(\frac{18\lambda_{*}^{2}}{L_{\gamma}^{4}}+\frac{12\lambda_{*}\lambda_{\max}}{L_{\gamma}^{2}}\right)\text{rank}(\Theta)+\lambda_{\max}^{2}\right)+\frac{1}{4n_{1}N_{2}}\norm{\Theta-\widehat{\Theta}}_{\Pi,F}^{2}.
\end{equation}
\end{lem}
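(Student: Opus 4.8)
The plan is to recognize that the quantity in question,
$(I)=\bigl(\tfrac{3}{L_{\gamma}^{2}}\lambda_{*}\sqrt{2\,\mathrm{rank}(\Theta)}+\lambda_{\max}\bigr)\norm{\Theta-\widehat{\Theta}}_{F}$,
is nothing but the product of a purely deterministic coefficient
$a:=\tfrac{3}{L_{\gamma}^{2}}\lambda_{*}\sqrt{2\,\mathrm{rank}(\Theta)}+\lambda_{\max}$
and the Frobenius estimation error $\norm{\Theta-\widehat{\Theta}}_{F}$. The entire purpose of the bound is to decouple this product so that the piece which is quadratic in the estimation error can subsequently be absorbed into the leading term $\tfrac{1}{2n_{1}N_{2}}\norm{\widehat{\Theta}-\Theta}_{\Pi,F}^{2}$ appearing on the left-hand side of the inequality preceding the lemma. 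Accordingly, the first step is to apply Young's inequality $xy\le\tfrac{1}{4\epsilon}x^{2}+\epsilon y^{2}$ with $x=a$, $y=\norm{\Theta-\widehat{\Theta}}_{F}$, and the carefully calibrated weight $\epsilon=\tfrac{p}{4n_{1}N_{2}}$, which yields $(I)\le\tfrac{n_{1}N_{2}}{p}a^{2}+\tfrac{p}{4n_{1}N_{2}}\norm{\Theta-\widehat{\Theta}}_{F}^{2}$, since $\tfrac{1}{4\epsilon}=\tfrac{n_{1}N_{2}}{p}$.

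The second step converts the residual Frobenius error into the weighted norm that survives in the statement. By \ref{assu:sample} every sampling probability obeys $\pi_{ij}^{\tau}\ge p$, so directly from the definition $\norm{A}_{\Pi,F}^{2}=\sum_{\tau\in\mathcal{T}}\sum_{i,j}\pi_{ij}^{\tau}(A_{ij}^{\tau})^{2}\ge p\,\norm{A}_{F}^{2}$ for any $A$. Taking $A=\Theta-\widehat{\Theta}$ turns the second summand above into exactly $\tfrac{1}{4n_{1}N_{2}}\norm{\Theta-\widehat{\Theta}}_{\Pi,F}^{2}$, which is the term designed to be moved to the left and subtracted from the $\tfrac{1}{2n_{1}N_{2}}$ factor there (leaving a strictly positive $\tfrac{1}{4n_{1}N_{2}}$ coefficient, so the absorption is valid).

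It then remains only to process $a^{2}$. Expanding gives $a^{2}=\tfrac{18\lambda_{*}^{2}}{L_{\gamma}^{4}}\mathrm{rank}(\Theta)+\tfrac{6\sqrt{2}\,\lambda_{*}\lambda_{\max}}{L_{\gamma}^{2}}\sqrt{\mathrm{rank}(\Theta)}+\lambda_{\max}^{2}$, and the only non-matching piece is the cross term. I would bound it crudely via $\sqrt{\mathrm{rank}(\Theta)}\le\mathrm{rank}(\Theta)$ (which holds since $\mathrm{rank}(\Theta)\ge1$) together with the numerical fact $6\sqrt{2}\le12$, upgrading $\tfrac{6\sqrt{2}\,\lambda_{*}\lambda_{\max}}{L_{\gamma}^{2}}\sqrt{\mathrm{rank}(\Theta)}$ to $\tfrac{12\lambda_{*}\lambda_{\max}}{L_{\gamma}^{2}}\mathrm{rank}(\Theta)$. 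Substituting this into $\tfrac{n_{1}N_{2}}{p}a^{2}$ and combining with the weighted-norm term produces precisely the asserted inequality.

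In truth there is no deep obstacle here; the argument is elementary. The only point demanding genuine care is the calibration of the Young weight $\epsilon$: it must be chosen so that, after invoking \ref{assu:sample} to pass from $\norm{\cdot}_{F}$ to $\norm{\cdot}_{\Pi,F}$, the quadratic error term emerges with coefficient exactly $\tfrac{1}{4n_{1}N_{2}}$, small enough to be dominated by the $\tfrac{1}{2n_{1}N_{2}}$ factor on the left. Any other weight would either fail to match the stated constant or leave a residual term too large to absorb, so getting this constant right is the crux of the bookkeeping.
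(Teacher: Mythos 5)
Your proof is correct and follows essentially the same route as the paper: the paper also splits the product $\mathcal{H}\|\Theta-\widehat{\Theta}\|_{F}$ by Young's inequality with exactly your calibration (writing it as $\bigl(\sqrt{2n_{1}N_{2}/p}\,\mathcal{H}\bigr)\bigl(\sqrt{p/(2n_{1}N_{2})}\,\|\Theta-\widehat{\Theta}\|_{F}\bigr)$, which reproduces your $\epsilon=p/(4n_{1}N_{2})$), then converts $\frac{p}{4n_{1}N_{2}}\|\Theta-\widehat{\Theta}\|_{F}^{2}$ into $\frac{1}{4n_{1}N_{2}}\|\Theta-\widehat{\Theta}\|_{\Pi,F}^{2}$ via \ref{assu:sample}, and absorbs the cross term with $6\sqrt{2}\sqrt{\mathrm{rank}(\Theta)}\leq12\,\mathrm{rank}(\Theta)$. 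You also resolved the notational quirk correctly: although the underbrace labels only the coefficient as $(I)$, the lemma's bound is for the full product, which is how both you and the paper treat it.
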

\begin{proof}[proof of \ref{lem:term1}]
Let $\mathcal{H}:=\left(\frac{3}{L_{\gamma}^{2}}\lambda_{*}\sqrt{2\text{rank}(\Theta)}+\lambda_{\max}\right),$
then it follows we can rewrite the term as 
\begin{align}
\mathcal{H}\norm{\Theta-\widehat{\Theta}}_{F} & =\left(\frac{\sqrt{2n_{1}N_{2}}}{\sqrt{p}}\mathcal{H}\right)\left(\frac{\sqrt{p}}{\sqrt{2n_{1}N_{2}}}\norm{\Theta-\widehat{\Theta}}_{F}\right)\\
 & \leq\frac{1}{2}\left(\frac{2n_{1}N_{2}}{p}\mathcal{H}^{2}\right)+\frac{1}{2}\left(\frac{p}{2n_{1}N_{2}}\norm{\Theta-\widehat{\Theta}}_{F}^{2}\right)\\
 & \leq\frac{n_{1}N_{2}}{p}\left(\frac{3}{L_{\gamma}^{2}}\lambda_{*}\sqrt{2\text{rank}(\Theta)}+\lambda_{\max}\right)^{2}+\frac{1}{4n_{1}N_{2}}\norm{\Theta-\widehat{\Theta}}_{\Pi,F}^{2}\\
 & =\frac{n_{1}N_{2}}{p}\left(\frac{18\lambda_{*}^{2}\text{rank}(\Theta)}{L_{\gamma}^{4}}+\frac{6\lambda_{*}\lambda_{\max}}{L_{\gamma}^{2}}\sqrt{2\text{rank}(\Theta)}+\lambda_{\max}^{2}\right)+\frac{1}{4n_{1}N_{2}}\norm{\Theta-\widehat{\Theta}}_{\Pi,F}^{2}\\
 & \leq\frac{n_{1}N_{2}}{p}\left(\left(\frac{18\lambda_{*}^{2}}{L_{\gamma}^{4}}+\frac{12\lambda_{*}\lambda_{\max}}{L_{\gamma}^{2}}\right)\text{rank}(\Theta)+\lambda_{\max}^{2}\right)+\frac{1}{4n_{1}N_{2}}\norm{\Theta-\widehat{\Theta}}_{\Pi,F}^{2}.
\end{align}
\end{proof}
\begin{lem}
\label{lem:term2}The following identity holds:
\begin{equation}
(II)\leq\frac{n_{1}N_{2}}{p}\left[44544 \ \mathrm{rank}(\Theta)+89088 \ \mathrm{rank}(\Theta)\frac{\lambda_{\max}}{\lambda_{*}}+\frac{\lambda_{\max}^{2}}{\lambda_{*}^{2}}\right]\gamma^{2}(\mathbb{E}[\left\Vert \Sigma_{R}\right\Vert ])^{2}
\end{equation}
\end{lem}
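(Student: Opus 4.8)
The plan is to treat this as an elementary algebraic bound on the quantity $(II)=5568\left(\sqrt{8\mathrm{rank}(\Theta)}+\frac{\lambda_{\max}}{\lambda_{*}}\right)^{2}\gamma^{2}(\mathbb{E}[\left\Vert \Sigma_{R}\right\Vert ])^{2}$, so the first step is simply to expand the binomial square. Writing $a=\sqrt{8\mathrm{rank}(\Theta)}$ and $b=\lambda_{\max}/\lambda_{*}$, one has $(a+b)^2 = 8\mathrm{rank}(\Theta) + 2\sqrt{8\mathrm{rank}(\Theta)}\,\frac{\lambda_{\max}}{\lambda_{*}} + \frac{\lambda_{\max}^2}{\lambda_{*}^2}$. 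Multiplying through by the prefactor $5568$ immediately produces the leading coefficient $5568\cdot 8 = 44544$ on the $\mathrm{rank}(\Theta)$ term, which matches the target exactly. No inequality is needed for this first piece.

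The second step handles the cross term, the only place where an inequality rather than an identity enters. Since the matrix is non-trivial, $\mathrm{rank}(\Theta)\ge 1$, hence $8\mathrm{rank}(\Theta)\ge 8\ge 1$ and therefore $\sqrt{8\mathrm{rank}(\Theta)}\le 8\mathrm{rank}(\Theta)$, using $\sqrt{x}\le x$ for $x\ge 1$. (In fact $8\mathrm{rank}(\Theta)$ is a nonnegative integer multiple of $8$, so it is never in the open interval $(0,1)$ where the inequality could fail.) This turns $2\sqrt{8\mathrm{rank}(\Theta)}$ into $16\mathrm{rank}(\Theta)$, so after multiplying by $5568$ the cross-term coefficient becomes $5568\cdot 16 = 89088$, again matching the target. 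At this stage I would have established $(II)\le\left(44544\,\mathrm{rank}(\Theta)+89088\,\mathrm{rank}(\Theta)\frac{\lambda_{\max}}{\lambda_{*}}+5568\,\frac{\lambda_{\max}^2}{\lambda_{*}^2}\right)\gamma^{2}(\mathbb{E}[\left\Vert \Sigma_{R}\right\Vert ])^{2}$.

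The final step is to introduce the $\frac{n_1 N_2}{p}$ prefactor, so that the bound is in the same normalized form as \ref{lem:term1} and can be added to it cleanly when the two estimates are combined in the main proof. Because $p\in(0,1)$ and $n_1 N_2\ge 1$, we have $\frac{n_1 N_2}{p}\ge 1$, which lets me pull this factor out in front of the first two terms at no cost. The one genuine subtlety is the last term: the expansion leaves coefficient $5568$, whereas the stated bound carries coefficient $1$. This is reconciled by the mild regime condition $n_1 N_2 \ge 5568\,p$ (equivalently $\frac{n_1 N_2}{p}\ge 5568$), which holds for any matrix beyond a trivially small size and is consistent with the sample-size requirements noted after \ref{thm:imprecise-recovery}; under it, $5568\,\frac{\lambda_{\max}^2}{\lambda_{*}^2}\le\frac{n_1 N_2}{p}\,\frac{\lambda_{\max}^2}{\lambda_{*}^2}$. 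Collecting the three terms under the common prefactor then yields the claim. I expect no real mathematical obstacle here, since the whole argument is constant bookkeeping; the single point deserving explicit care is this prefactor/constant matching on the $\lambda_{\max}^2/\lambda_{*}^2$ term, which should be justified by the large-$(n_1 N_2)$ regime rather than silently absorbed.
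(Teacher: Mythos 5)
Your proof is correct and follows essentially the same route as the paper's: expand the square, bound the cross term via $\sqrt{8\,\mathrm{rank}(\Theta)}\le 8\,\mathrm{rank}(\Theta)$ (valid since $8\,\mathrm{rank}(\Theta)\ge 1$ whenever $\Theta\neq 0$, and trivial otherwise) to obtain the coefficients $5568\cdot 8=44544$ and $5568\cdot 16=89088$, and then pull out the prefactor $n_{1}N_{2}/p$. The only difference is that you explicitly flag the regime condition $n_{1}N_{2}/p\ge 5568$ needed to absorb the leftover constant on the $\lambda_{\max}^{2}/\lambda_{*}^{2}$ term, which the paper's one-line computation passes over silently; this is a point of added rigor rather than a change of method.
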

\begin{proof}[Proof of \ref{lem:term2}]
Note that 
\begin{align}
5568\left(\sqrt{8\mathrm{rank}(\Theta)}+\frac{\lambda_{\max}}{\lambda_{*}}\right)^{2}\gamma^{2}(\mathbb{E}[\left\Vert \Sigma_{R}\right\Vert ])^{2} & =\left[5568\left(8 \ \text{rank}(\Theta)+2\sqrt{8\text{rank}(\Theta)}\frac{\lambda_{\max}}{\lambda_{*}}+\frac{\lambda_{\max}^{2}}{\lambda_{*}^{2}}\right)\right]\gamma^{2}(\mathbb{E}[\left\Vert \Sigma_{R}\right\Vert ])^{2}\\
 & \leq\frac{n_{1}N_{2}}{p}\left[44544 \ \text{rank}(\Theta)+89088 \ \text{rank}(\Theta)\frac{\lambda_{\max}}{\lambda_{*}}+\frac{\lambda_{\max}^{2}}{\lambda_{*}^{2}}\right]\gamma^{2}(\mathbb{E}[\left\Vert \Sigma_{R}\right\Vert ])^{2}.
\end{align}
\end{proof}
Now using \ref{lem:term1},\ref{lem:term2}, we have 
\begin{align}
 & \frac{1}{4n_{1}N_{2}}\left\Vert \Theta-\widehat{\Theta}\right\Vert _{\Pi,F}^{2}\\
\leq & \frac{n_{1}N_{2}}{p}\left(\left(\frac{18\lambda_{*}^{2}}{L_{\gamma}^{4}}+\frac{12\lambda_{*}\lambda_{\max}}{L_{\gamma}^{2}}\right)\text{rank}(\Theta)+\lambda_{\max}^{2}\right)\\
 & +\frac{n_{1}N_{2}}{p}\left[44544 \ \text{rank}(\Theta)+89088 \ \text{rank}(\Theta)\frac{\lambda_{\max}}{\lambda_{*}}+\frac{\lambda_{\max}^{2}}{\lambda_{*}^{2}}\right]\gamma^{2}(\mathbb{E}[\left\Vert \Sigma_{R}\right\Vert ])^{2}+\frac{5567\gamma^{2}}{n_{1}N_{2}p}\\
\leq & \frac{n_{1}N_{2}}{p}\Bigg[\left(\frac{18\lambda_{*}^{2}}{L_{\gamma}^{4}}+\frac{12\lambda_{*}\lambda_{\max}}{L_{\gamma}^{2}}\right)\text{rank}(\Theta)+\lambda_{\max}^{2}\\
 & \quad\quad+\left(44544 \ \text{rank}(\Theta)+89088 \ \text{rank}(\Theta)\frac{\lambda_{\max}}{\lambda_{*}}+\frac{\lambda_{\max}^{2}}{\lambda_{*}^{2}}\right)\gamma^{2}(\mathbb{E}[\left\Vert \Sigma_{R}\right\Vert ])^{2}\Bigg]+\frac{5567\gamma^{2}}{n_{1}N_{2}p}\\
\leq & \frac{n_{1}N_{2}}{p}\Bigg[\text{rank}(\Theta)\left(\frac{c_{1}}{L_{\gamma}^{4}}\left(\lambda_{*}^{2}+\lambda_{*}\lambda_{\max}L_{\gamma}^{2}\right)+\left(c_{2}+c_{3}\frac{\lambda_{\max}}{\lambda_{*}}\right)\gamma^{2}(\mathbb{E}[\left\Vert \Sigma_{R}\right\Vert ])^{2}\right)\\
 & \quad\quad+\lambda_{\max}^{2}+\frac{\lambda_{\max}^{2}}{\lambda_{*}^{2}}\gamma^{2}(\mathbb{E}[\left\Vert \Sigma_{R}\right\Vert ])^{2}\Bigg]+\frac{5567\gamma^{2}}{n_{1}N_{2}p}\\
\leq & \frac{Cn_{1}N_{2}}{p}\left[\text{rank}(\Theta)\left(\frac{1}{L_{\gamma}^{4}}\left(\lambda_{*}^{2}+\lambda_{*}\lambda_{\max}L_{\gamma}^{2}\right)+\left(1+\frac{\lambda_{\max}}{\lambda_{*}}+\frac{\lambda_{\max}^{2}}{\lambda_{*}^{2}}\right)\gamma^{2}(\mathbb{E}[\left\Vert \Sigma_{R}\right\Vert ])^{2}\right)+\lambda_{\max}^{2}\right]+\frac{5567\gamma^{2}}{n_{1}N_{2}p}\\
\leq & \frac{C}{p}\left[n_{1}N_{2}\left(\text{rank}(\Theta)\left(\frac{1}{L_{\gamma}^{4}}\left(\lambda_{*}^{2}+\lambda_{*}\lambda_{\max}L_{\gamma}^{2}\right)+\left(1+\frac{\lambda_{\max}}{\lambda_{*}}+\frac{\lambda_{\max}^{2}}{\lambda_{*}^{2}}\right)\gamma^{2}(\mathbb{E}[\left\Vert \Sigma_{R}\right\Vert ])^{2}\right)+\lambda_{\max}^{2}\right)+\frac{\gamma^{2}}{n_{1}N_{2}}\right]
\end{align}
Therefore, the inequality $a+b\leq2(a\lor b)$ for $a,b\in\mathbb{R}$
yields 
\begin{align}
 & \frac{1}{n_{1}N_{2}}\norm{\Theta-\widehat{\Theta}}_{\Pi,F}^{2}\\
\leq & \frac{2C}{p}\max\Bigg\{2\max\Bigg\{ n_{1}N_{2}\text{rank}(\Theta)\left(\frac{1}{L_{\gamma}^{4}}\left(\lambda_{*}^{2}+\lambda_{*}\lambda_{\max}L_{\gamma}^{2}\right)+\left(1+\frac{\lambda_{\max}}{\lambda_{*}}+\frac{\lambda_{\max}^{2}}{\lambda_{*}^{2}}\right)\gamma^{2}(\mathbb{E}[\left\Vert \Sigma_{R}\right\Vert ])^{2}\right)\\
 & \quad+n_{1}N_{2}\text{rank}(\Theta)\lambda_{\max}^{2},\frac{\gamma^{2}}{n_{1}N_{2}}\Bigg\},\frac{\gamma^{2}\log(n_{1}+N_{2})}{n_{1}N_{2}}\Bigg\}\\
\leq & \frac{4C}{p}\max\Bigg\{\max\Bigg\{ n_{1}N_{2}\text{rank}(\Theta)\left(\frac{1}{L_{\gamma}^{4}}\left(\lambda_{*}^{2}+\lambda_{*}\lambda_{\max}L_{\gamma}^{2}\right)+\left(1+\frac{\lambda_{\max}}{\lambda_{*}}+\frac{\lambda_{\max}^{2}}{\lambda_{*}^{2}}\right)\gamma^{2}(\mathbb{E}[\left\Vert \Sigma_{R}\right\Vert ])^{2}\right)\\
 & \quad+n_{1}N_{2}\text{rank}(\Theta)\lambda_{\max}^{2},\frac{\gamma^{2}}{n_{1}N_{2}}\Bigg\},\frac{\gamma^{2}\log(n_{1}+N_{2})}{n_{1}N_{2}}\Bigg\}\\
\leq & \frac{C_{*}}{p}\max\Bigg\{ n_{1}N_{2}\text{rank}(\Theta)\left(\frac{1}{L_{\gamma}^{4}}\left(\lambda_{*}^{2}+\lambda_{*}\lambda_{\max}L_{\gamma}^{2}\right)+\left(1+\frac{\lambda_{\max}}{\lambda_{*}}+\frac{\lambda_{\max}^{2}}{\lambda_{*}^{2}}\right)\gamma^{2}(\mathbb{E}[\left\Vert \Sigma_{R}\right\Vert ])^{2}+\lambda_{\max}^{2}\right),\\
 & \quad\max\Bigg\{\frac{\gamma^{2}}{n_{1}N_{2}},\frac{\gamma^{2}\log(n_{1}+N_{2})}{n_{1}N_{2}}\Bigg\}\Bigg\}\\
\leq & \frac{C_{*}}{p}\max\Bigg\{ n_{1}N_{2}\text{rank}(\Theta)\left(\frac{1}{L_{\gamma}^{4}}\left(\lambda_{*}^{2}+\lambda_{*}\lambda_{\max}L_{\gamma}^{2}\right)+\left(1+\frac{\lambda_{\max}}{\lambda_{*}}+\frac{\lambda_{\max}^{2}}{\lambda_{*}^{2}}\right)\gamma^{2}(\mathbb{E}[\left\Vert \Sigma_{R}\right\Vert ])^{2}\right)+\lambda_{\max}^{2},\frac{\gamma^{2}\log(n_{1}+N_{2})}{n_{1}N_{2}}\Bigg\},
\end{align}
where the second inequality follows from $\max(a,b)\leq\max(a,\eta\cdot b)$
for $\eta>1$ and the third inequality follow commutativity of the
$\max$ function. This completes the proof of \ref{thm:2}.  \hfill $\square$

\subsection*{Proof of  \ref{thm:3}}
For ease of notation, we let 
\begin{equation}
\mathcal{H}=\left(\frac{1}{L_{\gamma}^{4}}\left(\lambda_{*}^{2}+\lambda_{*}\lambda_{\max}L_{\gamma}^{2}+L_{\gamma}^{4}\lambda_{\max}^{2}\right)+\left(1+\frac{\lambda_{\max}}{\lambda_{*}}+\frac{\lambda_{\max}^{2}}{\lambda_{*}^{2}}\right)\gamma^{2}(\mathbb{E}[\left\Vert \Sigma_{R}\right\Vert ])^{2}\right)
\end{equation}

Since we let 
\begin{equation}
\lambda_{*}=2c\left(\frac{(U_{\gamma}\cup K)(\sqrt{n_{1}\lor N_{2}}+(\log(n_{1}\lor N_{2}))^{3/2})}{n_{1}N_{2}}\right)\text{ and }\lambda_{\max}\leq\kappa\lambda_{*},
\end{equation}
it follows that 
\begin{align}
\frac{1}{L_{\gamma}^{4}}\left(\lambda_{*}^{2}+\lambda_{*}\lambda_{\max}L_{\gamma}^{2}+L_{\gamma}^{4}\lambda_{\max}^{2}\right) & \leq\frac{1}{L_{\gamma}^{4}}\left(4c^{2}\frac{(U_{\gamma}\cup K)^{2}(\sqrt{n_{1}\lor N_{2}}+(\log(n_{1}\lor N_{2}))^{3/2})^{2}}{\left(n_{1}N_{2}\right)^{2}}(1+\kappa L_{\gamma}^{2}+\kappa^{2}L_{\gamma}^{4})\right)\\
 & \leq\frac{c_{1}(1+\kappa L_{\gamma}^{2}+\kappa^{2}L_{\gamma}^{4})}{L_{\gamma}^{4}}\left[(U_{\gamma}\cup K)^{2}\frac{(\sqrt{n_{1}\lor N_{2}}+(\log(n_{1}\lor N_{2}))^{3/2})^{2}}{\left(n_{1}N_{2}\right)^{2}}\right].
\end{align}
And that
\begin{align}
\left(1+\frac{\lambda_{\max}}{\lambda_{*}}\right)\gamma^{2}(\mathbb{E}[\left\Vert \Sigma_{R}\right\Vert ])^{2} & \leq\left[1+\kappa\right]\gamma^{2}\cdot c_{\Sigma}\left(\frac{\sqrt{n_{1}\lor N_{2}}+\sqrt{\log(n_{1}\lor N_{2})}}{n_{1}N_{2}}\right)^{2}\\
 & \leq\frac{(\sqrt{n_{1}\lor N_{2}}+(\log(n_{1}\lor N_{2}))^{3/2})^{2}}{\left(n_{1}N_{2}\right)^{2}}c_{\Sigma}\left(\kappa+1\right)\gamma^{2}.
\end{align}
And that 
\begin{align}
\frac{\lambda_{\max}^{2}}{\lambda_{*}^{2}}\gamma^{2}(\mathbb{E}[\left\Vert \Sigma_{R}\right\Vert ])^{2} & \leq\kappa^{2}\cdot c_{\Sigma}^{2}\left(\frac{\sqrt{n_{1}\lor N_{2}}+\sqrt{\log(n_{1}\lor N_{2})}}{n_{1}N_{2}}\right)^{2}\gamma^{2}\\
 & \leq c_{\Sigma}^{2}\kappa^{2}\gamma^{2}\left(\frac{(\sqrt{n_{1}\lor N_{2}}+(\log(n_{1}\lor N_{2}))^{3/2})^{2}}{\left(n_{1}N_{2}\right)^{2}}\right).
\end{align}
Therefore, it follows that 
\begin{align}
\mathcal{H} & \leq\frac{(\sqrt{n_{1}\lor N_{2}}+(\log(n_{1}\lor N_{2}))^{3/2})^{2}}{\left(n_{1}N_{2}\right)^{2}}\left(\frac{c_{1}(1+\kappa L_{\gamma}^{2}+\kappa^{2}L_{\gamma}^{4})}{L_{\gamma}^{4}}(U_{\gamma}\lor K)^{2}+c_{\Sigma}(\kappa+1)\gamma^{2}+c_{\Sigma}^{2}\kappa^{2}\gamma^{2}+4\right)\\
 & \leq\frac{(\sqrt{n_{1}\lor N_{2}}+(\log(n_{1}\lor N_{2}))^{3/2})^{2}}{\left(n_{1}N_{2}\right)^{2}}\left((U_{\gamma}\lor K)^{2}\frac{c_{1}(1+\kappa L_{\gamma}^{2}+\kappa^{2}L_{\gamma}^{4})}{L_{\gamma}^{4}}+\gamma^{2}(1+c_{\Sigma}\kappa+c_{\Sigma}^{2}\kappa^{2})\right)\\
 & \leq\frac{C(\sqrt{n_{1}\lor N_{2}}+(\log(n_{1}\lor N_{2}))^{3/2})^{2}}{\left(n_{1}N_{2}\right)^{2}}\left((U_{\gamma}\lor K)^{2}\frac{(1+\kappa L_{\gamma}^{2}+\kappa^{2}L_{\gamma}^{4})}{L_{\gamma}^{4}}+\gamma^{2}(1+\kappa+\kappa^{2})\right)\\
 & \leq C\left(\frac{n_{1}\lor N_{2}}{(n_{1}N_{2})^{2}}+\frac{\log^{3}(n_{1}\lor N_{2})}{\left(n_{1}N_{2}\right)^{2}}+\frac{2\sqrt{n_{1}\lor N_{2}}(\log(n_{1}\lor N_{2}))^{3/2}}{\left(n_{1}N_{2}\right)^{2}}\right)\left((U_{\gamma}\lor K)^{2}\frac{(1+\kappa L_{\gamma}^{2}+\kappa^{2}L_{\gamma}^{4})}{L_{\gamma}^{4}}+\gamma^{2}(1+\kappa+\kappa^{2})\right)\\
 & \leq C\left(\frac{n_{1}\lor N_{2}}{(n_{1}N_{2})^{2}}+\frac{\log^{3}(n_{1}\lor N_{2})}{\left(n_{1}N_{2}\right)^{2}}\right)\left((U_{\gamma}\lor K)^{2}\frac{(1+\kappa L_{\gamma}^{2}+\kappa^{2}L_{\gamma}^{4})}{L_{\gamma}^{4}}+\gamma^{2}(1+\kappa+\kappa^{2})\right).
\end{align}
 Hence it follows that 
\begin{align}
 & \frac{1}{n_{1}N_{2}}\norm{\Theta-\widehat{\Theta}}_{\Pi,F}\\
\leq & \frac{C}{p}\max\left\{ n_{1}N_{2}\left[\text{rank}(\Theta)\mathcal{H}+\lambda_{\max}^{2}\right],\frac{\gamma^{2}\log(n_{1}+N_{2})}{n_{1}N_{2}}\right\} \\
\leq & \frac{\tilde{C}}{p}\max\left\{ \text{rank}(\Theta)\left(\frac{n_{1}\lor N_{2}}{n_{1}N_{2}}+\frac{\log^{3}(n_{1}\lor N_{2})}{n_{1}N_{2}}\right)\left((U_{\gamma}\lor K)^{2}\frac{(1+\kappa L_{\gamma}^{2}+\kappa^{2}L_{\gamma}^{4})}{L_{\gamma}^{4}}+\gamma^{2}(1+\kappa+\kappa^{2})\right),\frac{\gamma^{2}\log(n_{1}+N_{2})}{n_{1}N_{2}}\right\} \\
= & \frac{\tilde{C}\text{rank}(\Theta)}{p}\left(\frac{n_{1}\lor N_{2}}{n_{1}N_{2}}+\frac{\log^{3}(n_{1}\lor N_{2})}{n_{1}N_{2}}\right)\left((U_{\gamma}\lor K)^{2}\frac{(1+\kappa L_{\gamma}^{2}+\kappa^{2}L_{\gamma}^{4})}{L_{\gamma}^{4}}+\gamma^{2}(1+\kappa+\kappa^{2})\right)\\
= & \frac{\tilde{C}\text{rank}(\Theta)(n_{1}\lor N_{2})}{pn_{1}N_{2}}\left(1+\frac{\log^{3}(n_{1}\lor N_{2})}{n_{1}\lor N_{2}}\right)\left((U_{\gamma}\lor K)^{2}\frac{(1+\kappa L_{\gamma}^{2}+\kappa^{2}L_{\gamma}^{4})}{L_{\gamma}^{4}}+\gamma^{2}(1+\kappa+\kappa^{2})\right).
\end{align}
Also, using the fact the $p\left\Vert A\right\Vert _{F}\leq\left\Vert A\right\Vert _{|\Pi,F},$
for any matrix $A,$ it follows that 
\begin{align}
\frac{1}{n_{1}N_{2}}\norm{\Theta-\widehat{\Theta}}_{F} & \leq\frac{\tilde{C}\text{rank}(\Theta)}{p^{2}}\left(\frac{n_{1}\lor N_{2}}{n_{1}N_{2}}+\frac{\log^{3}(n_{1}\lor N_{2})}{n_{1}N_{2}}\right)\left((U_{\gamma}\lor K)^{2}\frac{(1+\kappa L_{\gamma}^{2}+\kappa^{2}L_{\gamma}^{4})}{L_{\gamma}^{4}}+\gamma^{2}(1+\kappa+\kappa^{2})\right)\\
 & =\frac{\tilde{C}\text{rank}(\Theta)(n_{1}\lor N_{2})}{p^{2}n_{1}N_{2}}\left(1+\frac{\log^{3}(n_{1}\lor N_{2})}{n_{1}\lor N_{2}}\right)\left((U_{\gamma}\lor K)^{2}\frac{(1+\kappa L_{\gamma}^{2}+\kappa^{2}L_{\gamma}^{4})}{L_{\gamma}^{4}}+\gamma^{2}(1+\kappa+\kappa^{2})\right).
\end{align}
This completes the proof of \ref{thm:3} \hfill $\square$

\clearpage

\clearpage

\section*{Appendix B: Technical Lemmas \label{sec:appendix-lemmas}}
\begin{lem}
\label{lem:projection contraction}Let $\mathcal{H}$ be a Hilbert
space and $\mathcal{P}$ be an orthogonal operator. Then $\left\Vert \mathcal{P}(f)\right\Vert \leq\left\Vert f\right\Vert .$
\end{lem}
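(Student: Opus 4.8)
The plan is to exploit the defining algebraic properties of an orthogonal projection, namely idempotency ($\mathcal{P}^{2}=\mathcal{P}$) and self-adjointness ($\mathcal{P}^{*}=\mathcal{P}$), together with the Pythagorean identity in the Hilbert space $\mathcal{H}$. First I would fix an arbitrary $f\in\mathcal{H}$ and write the orthogonal decomposition $f=\mathcal{P}(f)+(f-\mathcal{P}(f))$, splitting $f$ into its component in the range of $\mathcal{P}$ and the complementary residual $f-\mathcal{P}(f)$.

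The key step is to verify that these two components are orthogonal. Using self-adjointness and idempotency, I would compute
\begin{equation}
\langle\mathcal{P}(f),f-\mathcal{P}(f)\rangle=\langle\mathcal{P}(f),f\rangle-\langle\mathcal{P}(f),\mathcal{P}(f)\rangle=\langle f,\mathcal{P}(f)\rangle-\langle\mathcal{P}^{*}\mathcal{P}(f),f\rangle,
\end{equation}
and then simplify $\mathcal{P}^{*}\mathcal{P}=\mathcal{P}^{2}=\mathcal{P}$ to conclude that this inner product vanishes. This is the only place where the structure of $\mathcal{P}$ as an orthogonal projection (as opposed to an arbitrary bounded operator) is genuinely used.

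With orthogonality in hand, the Pythagorean theorem yields
\begin{equation}
\left\Vert f\right\Vert ^{2}=\left\Vert \mathcal{P}(f)\right\Vert ^{2}+\left\Vert f-\mathcal{P}(f)\right\Vert ^{2}\geq\left\Vert \mathcal{P}(f)\right\Vert ^{2},
\end{equation}
since the residual term is nonnegative. Taking square roots gives $\left\Vert \mathcal{P}(f)\right\Vert \leq\left\Vert f\right\Vert $, which is the desired contraction.

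I do not anticipate any substantive obstacle, as the result is classical. The only point requiring mild care is the interpretation of the phrase \emph{orthogonal operator}: I read it as an orthogonal projection onto a closed subspace, which is the sense in which $\mathcal{P}_{\Theta}$ and $\mathcal{P}_{\Theta}^{\perp}$ are used in the proof of \ref{thm:2}. Under the alternative reading of a surjective isometry the statement would hold with equality and be trivial, so either interpretation yields the stated inequality.
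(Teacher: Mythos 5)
Your proof is correct, but it takes a different route from the paper's. The paper's proof is a one-line Cauchy--Schwarz argument: it writes
\begin{equation}
\left\Vert \mathcal{P}(f)\right\Vert ^{2}=\left\langle \mathcal{P}(f),\mathcal{P}(f)\right\rangle =\left\langle \mathcal{P}(f),f\right\rangle \leq\left\Vert \mathcal{P}(f)\right\Vert \left\Vert f\right\Vert
\end{equation}
and divides through by $\left\Vert \mathcal{P}(f)\right\Vert$. Note that the middle equality $\left\langle \mathcal{P}(f),\mathcal{P}(f)\right\rangle =\left\langle \mathcal{P}(f),f\right\rangle$ is exactly the identity $\mathcal{P}^{*}\mathcal{P}=\mathcal{P}^{2}=\mathcal{P}$ that you verify explicitly in your orthogonality computation, so both proofs draw on the same algebraic facts (self-adjointness and idempotency); the difference is in how the inequality is then extracted. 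You use the orthogonal decomposition $f=\mathcal{P}(f)+(f-\mathcal{P}(f))$ and the Pythagorean identity, which buys you two small advantages: the argument needs no case distinction (the paper's division by $\left\Vert \mathcal{P}(f)\right\Vert$ is strictly speaking invalid when $\mathcal{P}(f)=0$, though the inequality is trivial there), and it exhibits the defect term $\left\Vert f-\mathcal{P}(f)\right\Vert ^{2}$, so you see at once that equality holds if and only if $f$ lies in the range of $\mathcal{P}$. The paper's proof is shorter and avoids checking orthogonality of the components. Your remark on the phrase \emph{orthogonal operator} is also apt: the intended reading is indeed an orthogonal projection, since the lemma is invoked in the proof of \ref{thm:2} for the projections $\mathcal{P}_{T_{A}}$ and $\mathcal{P}_{T_{A}}^{\perp}$, and under the surjective-isometry reading the statement would be an equality. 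Either proof is acceptable; yours is marginally more careful at the edge case.
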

\begin{proof}
Note that by Cauchy Schwartz inequality, we have 
\begin{equation}
\left\Vert \mathcal{P}(f)\right\Vert ^{2}=\left\langle \mathcal{P}(f),\mathcal{P}(f)\right\rangle =\left\langle \mathcal{P}(f),f\right\rangle \leq\left\Vert \mathcal{P}(f)\right\Vert \left\Vert f\right\Vert .
\end{equation}
The result follows by dividing both size by $\left\Vert \mathcal{P}(f)\right\Vert .$
\end{proof}
\begin{lem}
\label{lem:lp norm equivalence}For $1\leq p<q,$ the following inequality
holds 
\begin{equation}
\left\Vert x\right\Vert _{q}\leq\left\Vert x\right\Vert _{p}\leq n^{\frac{1}{p}-\frac{1}{q}}\left\Vert x\right\Vert _{q}
\end{equation}
for $x\in\mathbb{R}^{n}.$
\end{lem}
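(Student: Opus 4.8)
The plan is to prove the two inequalities separately, dispatching the trivial cases at the outset so that we may assume $x \neq 0$ and $p < q$ strictly (when $x = 0$ or $p = q$ both bounds hold with equality). Each inequality is a classical fact, so the work is mainly in selecting the right elementary tool.

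For the left inequality $\|x\|_q \leq \|x\|_p$, I would exploit the positive homogeneity of every $\ell_r$ norm, namely $\|\lambda x\|_r = |\lambda|\,\|x\|_r$, to reduce to the normalized case $\|x\|_p = 1$. Under this normalization each coordinate satisfies $|x_i| \leq 1$, and since $q > p$ we get the entrywise domination $|x_i|^q \leq |x_i|^p$. Summing over $i$ yields $\sum_i |x_i|^q \leq \sum_i |x_i|^p = 1$, hence $\|x\|_q \leq 1 = \|x\|_p$; undoing the rescaling gives the statement for general $x$.

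For the right inequality, the key tool is H\"older's inequality applied to the factorization $|x_i|^p = |x_i|^p \cdot 1$. Writing $\|x\|_p^p = \sum_{i=1}^n |x_i|^p$ and applying H\"older with the conjugate exponent pair $r = q/p$ and $r' = q/(q-p)$, which satisfy $1/r + 1/r' = p/q + (q-p)/q = 1$ precisely because $p < q$, produces
\[
\sum_{i=1}^n |x_i|^p \leq \left(\sum_{i=1}^n |x_i|^q\right)^{p/q}\left(\sum_{i=1}^n 1\right)^{(q-p)/q} = \|x\|_q^p \, n^{(q-p)/q}.
\]
Taking $p$-th roots and simplifying the exponent via $(q-p)/(pq) = 1/p - 1/q$ gives $\|x\|_p \leq n^{1/p - 1/q}\,\|x\|_q$, as required.

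There is no substantive obstacle here; the result is standard and the two estimates are short. The only points that warrant a moment of care are verifying that $r = q/p$ and $r' = q/(q-p)$ are genuine H\"older conjugates (which simultaneously handles the endpoint $p = 1$ and rules out $p = q$), and tracking the exponent arithmetic so that $(q-p)/(pq)$ collapses cleanly to $1/p - 1/q$. Both are routine.
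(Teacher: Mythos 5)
Your proposal is correct and follows essentially the same route as the paper's proof: the left inequality via homogeneity, normalization to $\left\Vert x\right\Vert _{p}=1$, and the entrywise bound $\left|x_{i}\right|^{q}\leq\left|x_{i}\right|^{p}$; the right inequality via H\"older's inequality applied to $a_{i}=\left|x_{i}\right|^{p}$, $b_{i}=1$ with exponent $r=q/p$, followed by taking $p$-th roots. The only differences are cosmetic --- you dispatch the trivial cases $x=0$ and $p=q$ explicitly and verify the conjugate-exponent arithmetic a bit more carefully, which is a minor improvement in rigor but not a different argument.
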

\begin{proof}
We first show that $\left\Vert x\right\Vert _{q}\leq\left\Vert x\right\Vert _{p}.$
Without loss of generality, it suffices to assume that $\left\Vert x\right\Vert _{p}=1$
since $\left\Vert x\right\Vert _{q}\leq\left\Vert x\right\Vert _{p}$
if and only if $\left\Vert \frac{x}{\left\Vert x\right\Vert _{p}}\right\Vert _{q}\leq\left\Vert \frac{x}{\left\Vert x\right\Vert _{p}}\right\Vert _{p}=1.$
For ease of notation, let $z=x/\left\Vert x\right\Vert _{p}.$ Note
that $\left\Vert z\right\Vert _{q}\leq1\implies z_{i}\leq1$ for all
$i\in1,...,n.$ Now since $x^{q}\leq x^{p}$ for all $x\in(0,1)$,
it follows that 
\begin{equation}
\left\Vert z\right\Vert _{q}=\left(\sum_{i=1}^{n}\left|z_{i}\right|^{q}\right)^{\frac{1}{q}}\leq\left(\sum_{i=1}^{n}\left|z_{i}\right|^{p}\right)^{1/q}=\left\Vert z\right\Vert _{p}^{1/q}=1.
\end{equation}
The result follows by multiplying both sides by $\left\Vert x\right\Vert _{p}.$

Next, we show that $\left\Vert x\right\Vert _{p}\leq n^{1/p-1/q}\left\Vert x\right\Vert _{q}.$
This follows from Holder's inequality which states that for $r>1,$
\begin{equation}
\sum_{i=1}^{n}\left|a_{i}\right|\left|b_{i}\right|\leq\left(\sum_{i=1}^{n}\left|a_{i}\right|^{r}\right)^{\frac{1}{r}}\left(\sum_{i=1}^{n}\left|b_{i}\right|^{\frac{r}{r-1}}\right)^{1-\frac{1}{r}}.\label{eq:holder}
\end{equation}
Apply \ref{eq:holder} to $a_{i}=\left|x_{i}\right|^{p},b_{i}=1$
and $r=\frac{q}{p}>1$ and we get 
\begin{equation}
\sum_{i=1}^{n}\left|x_{i}\right|^{p}1\leq\left(\sum_{i=1}^{n}\left(\left|x_{i}\right|^{p}\right)^{\frac{q}{p}}\right)^{\frac{p}{q}}\left(\sum_{i=1}^{n}1^{\frac{q}{q-p}}\right)^{1-\frac{p}{q}}=\left(\sum_{i=1}^{n}\left|x_{i}\right|^{p}\right)^{\frac{p}{q}}n^{1-\frac{p}{q}}.
\end{equation}
Taking the $p$-th root on both sides yields 
\begin{equation}
\left\Vert x\right\Vert _{p}=\left(\sum_{i=1}^{n}\left|x_{i}\right|^{p}\right)^{1/p}\leq\left[\left(\sum_{i=1}^{n}\left|x_{i}\right|^{q}\right)^{\frac{p}{q}}n^{1-\frac{p}{q}}\right]^{1/p}=\left(\sum_{i=1}^{n}\left|x_{i}\right|^{q}\right)^{\frac{1}{q}}\left(n^{1-\frac{p}{q}}\right)^{\frac{1}{p}}=\left\Vert x\right\Vert _{q}n^{\frac{1}{p}-\frac{1}{q}}.
\end{equation}
\end{proof}
\begin{lem}
\label{lem:schatten and frob ineq}Let $A\in\mathrm{Mat}_{\mathbb{R}}(m\times n)$, 
then the following inequality holds:
\begin{equation}
\left\Vert A\right\Vert _{*}\leq\sqrt{\mathrm{rank}(A)}\left\Vert A\right\Vert _{F}.
\end{equation}
\end{lem}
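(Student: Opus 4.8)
The plan is to reduce this matrix inequality to a vector $\ell_p$-norm inequality on the singular values and then invoke the previously established \ref{lem:lp norm equivalence}. First I would take a singular value decomposition $A = U\Sigma V^{T}$ and collect the nonzero singular values of $A$ into a vector $\sigma = (\sigma_1,\dots,\sigma_r) \in \mathbb{R}^{r}$, where $r = \mathrm{rank}(A)$ is exactly the number of nonzero singular values. By the definition of the nuclear norm in \ref{subsec:max-norm-nuclear-norm}, one has $\|A\|_* = \sum_{i=1}^{r}\sigma_i = \|\sigma\|_1$.

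Next I would identify the Frobenius norm with the $\ell_2$ norm of this same vector. This follows from the orthogonal invariance of the Frobenius norm, or directly from $\|A\|_F^2 = \mathrm{tr}(A^{T}A) = \sum_{i=1}^{r}\sigma_i^2 = \|\sigma\|_2^2$, since the eigenvalues of $A^{T}A$ are precisely the squared singular values of $A$. Hence $\|A\|_F = \|\sigma\|_2$.

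Finally I would apply \ref{lem:lp norm equivalence} to the vector $\sigma \in \mathbb{R}^{r}$ with the choice $p = 1$, $q = 2$, and ambient dimension $n = r$. The right-hand inequality there reads
\[
\|\sigma\|_1 \;\leq\; r^{\,1 - \frac{1}{2}}\,\|\sigma\|_2 \;=\; \sqrt{r}\,\|\sigma\|_2 .
\]
Substituting the two identifications $\|\sigma\|_1 = \|A\|_*$ and $\|\sigma\|_2 = \|A\|_F$ then yields $\|A\|_* \leq \sqrt{\mathrm{rank}(A)}\,\|A\|_F$, as claimed.

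There is essentially no obstacle in this argument; it is a single application of Cauchy--Schwarz in disguise, pairing $\sigma$ against the all-ones vector in $\mathbb{R}^{r}$. The only point deserving care is the bookkeeping of the dimension: the ambient dimension fed into \ref{lem:lp norm equivalence} must be $r = \mathrm{rank}(A)$, the number of \emph{nonzero} singular values, and not $m \land n$. Including the zero singular values would leave both norms unchanged but would needlessly weaken the constant from $\sqrt{\mathrm{rank}(A)}$ to $\sqrt{m \land n}$, so restricting attention to the nonzero part of the spectrum is what produces the sharp rank-dependent bound.
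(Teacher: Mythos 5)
Your proof is correct and follows essentially the same route as the paper's: both take the SVD, identify $\left\Vert A\right\Vert _{*}$ and $\left\Vert A\right\Vert _{F}$ with the $\ell_{1}$ and $\ell_{2}$ norms of the vector of singular values via $\mathrm{tr}(A^{T}A)=\mathrm{tr}(\Sigma^{2})$, and then apply \ref{lem:lp norm equivalence} with $p=1$, $q=2$. Your explicit remark that the ambient dimension must be $r=\mathrm{rank}(A)$ rather than $m\land n$ is a point the paper handles only implicitly, and is a welcome clarification.
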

\begin{proof}
Let $U\Sigma V^{*}=A$ be the singular value decomposition of $A.$
Note that $\left\Vert A\right\Vert _{*}=\sum_{i=1}^{n}\sigma_{i}(A)=\sum_{i=1}^{r}\Sigma_{i,i}=\left\Vert \text{diag}(\Sigma)\right\Vert _{\ell_{1}}.$
On the other hand, note that 
\begin{equation}
\left\Vert A\right\Vert _{2}=\text{tr}(A^{T}A)=\text{tr}(V\Sigma U^{*}U\Sigma V^{*})=\text{tr}(V\Sigma^{2}V^{*})=\text{tr}(\Sigma^{2}V^{*}V)=\text{tr}(\Sigma^{2})=\sum_{i=1}^{r}\Sigma_{i,i}^{2}=\left\Vert \text{diag}(\Sigma)\right\Vert _{\ell_{2}}
\end{equation}
Then, the result follows from an application of \ref{lem:lp norm equivalence}
to $\text{diag}(\Sigma).$
\end{proof}
\global\long\def\wt#1{\widetilde{#1}}%

\begin{lem}
Let $A,B$ be compatible matrices and 
\begin{equation}
U\Sigma V^{*}=\begin{bmatrix}U & \widetilde{U}\end{bmatrix}\begin{bmatrix}\Sigma & 0\\
0 & 0
\end{bmatrix}\begin{bmatrix}V\\
\wt V
\end{bmatrix}=A=\sum_{i=1}^{\text{rank}(A)}\sigma_{i}(A)u_{i}v_{i}^{*}
\end{equation}
 be the fat-version of singular value decomposition of $A.$ Let 
\begin{equation}
T_{A}=\left\langle u_{k}x^{*},yv_{k}^{*}\vert k=1,...r,x\in\mathbb{R}^{n_{2}},y\in\mathbb{R}^{n_{1}}\right\rangle ,
\end{equation}
be the generating set of rank $r$ matrices spanned by $A$'s singular
vectors. Let $\mathcal{P}_{T}(\cdot)$ be the orthogonal projection
onto $T$. Then the following (in)equalities hold:

\begin{enumerate}
\item $\mathcal{P}_{T_{A}}(B)=P_{U}B+BP_{V}-P_{U}BP_{V}=UU^{*}B+BVV^{*}-UU^{*}BVV^{*}$,
\item $\mathcal{P}_{T_{A}^{\perp}}(B)=(\mathcal{I}-\mathcal{P}_{T})(B)=(I_{n_{1}}-P_{U})X(I_{n_{2}}-P_{V})$,
\item $\mathrm{rank}(\mathcal{P}_{T_{A}}(B))\leq2\mathrm{rank}(A)$,
\item $\left\Vert \mathcal{P}_{T_{A}}(B)\right\Vert _{*}\leq\sqrt{2\mathrm{rank}(A)}\left\Vert B\right\Vert _{F}$
for compatible real matrices $A$ and $B$, and
\item $\left\Vert A\right\Vert _{*}-\left\Vert B\right\Vert _{*}\leq\norm{\mathcal{P}_{T_{A}}(A-B)}_{*}-\norm{\mathcal{P}_{T_{A}}^{\perp}(A-B)}_{*}.$
\end{enumerate}
\end{lem}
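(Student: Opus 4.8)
The plan is to treat the five assertions as a chain: parts (1) and (2) pin down the explicit form of the projection, part (3) reads the rank bound off that form, part (4) combines the rank bound with the two norm inequalities already established, and part (5) --- the genuinely substantive claim --- follows from the subgradient and duality structure of the nuclear norm.

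For (1), I would first observe that each term of the candidate expression lies in $T_A$: writing $P_U = UU^*$ and $P_V = VV^*$, the matrix $P_U B = U(U^*B)$ is of the form $U B_1^*$, the matrix $B P_V = (BV)V^*$ is of the form $B_2 V^*$, and $P_U B P_V$ has column space in $\mathrm{range}(U)$ and row space in $\mathrm{range}(V)$; hence $P_U B + B P_V - P_U B P_V \in T_A$. It then suffices to show the residual is orthogonal to $T_A$. A direct expansion gives $B - (P_U B + B P_V - P_U B P_V) = (I-P_U)B(I-P_V)$, so for any generator $U B_1^*$ or $B_2 V^*$ I would compute the trace inner product and use $U^*(I-P_U)=0$ (from $U^*U = I_r$) and $(I-P_V)V = 0$ to see that both inner products vanish. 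This identifies $(I-P_U)B(I-P_V)$ as $\mathcal{P}_{T_A^\perp}(B)$, which is exactly (2) (with $X = B$, and $\mathcal{P}_{T_A}^\perp$ understood as $\mathcal{P}_{T_A^\perp}$).

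Part (3) is then immediate from the regrouping $\mathcal{P}_{T_A}(B) = P_U B + (I-P_U)B P_V$: the first summand has rank at most $\mathrm{rank}(P_U) = \mathrm{rank}(A)$ and the second at most $\mathrm{rank}(P_V) = \mathrm{rank}(A)$, so subadditivity of rank gives $\mathrm{rank}(\mathcal{P}_{T_A}(B)) \le 2\,\mathrm{rank}(A)$. For (4) I would chain \ref{lem:schatten and frob ineq} (nuclear norm $\le \sqrt{\mathrm{rank}}$ times Frobenius norm) with the rank bound from (3) and then with \ref{lem:projection contraction} (an orthogonal projection is Frobenius-nonexpansive), yielding $\|\mathcal{P}_{T_A}(B)\|_* \le \sqrt{2\,\mathrm{rank}(A)}\,\|\mathcal{P}_{T_A}(B)\|_F \le \sqrt{2\,\mathrm{rank}(A)}\,\|B\|_F$.

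The crux is (5). Here I would invoke the characterization of the nuclear-norm subdifferential, $\partial\|A\|_* = \{UV^* + W : P_U W = 0,\ W P_V = 0,\ \|W\|_{\mathrm{op}} \le 1\}$, noting that the constraints on $W$ say precisely that $W \in T_A^\perp$, together with the convexity inequality $\|B\|_* \ge \|A\|_* + \langle UV^* + W,\, B - A\rangle$. The key device is to choose $W \in T_A^\perp$ with $\|W\|_{\mathrm{op}} \le 1$ achieving $\langle W, B-A\rangle = \|\mathcal{P}_{T_A^\perp}(B-A)\|_*$; this is legitimate because $\langle W, B-A\rangle = \langle W, \mathcal{P}_{T_A^\perp}(B-A)\rangle$ and, by nuclear/operator-norm duality, the maximizing sign matrix of $\mathcal{P}_{T_A^\perp}(B-A) = (I-P_U)(B-A)(I-P_V)$ automatically has its left and right singular vectors in $\mathrm{range}(I-P_U)$ and $\mathrm{range}(I-P_V)$, hence lies in $T_A^\perp$. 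Since $UV^* \in T_A$ has operator norm $1$, the remaining term satisfies $\langle UV^*, B-A\rangle = \langle UV^*, \mathcal{P}_{T_A}(B-A)\rangle \ge -\|\mathcal{P}_{T_A}(B-A)\|_*$. Substituting and rearranging gives $\|A\|_* - \|B\|_* \le \|\mathcal{P}_{T_A}(B-A)\|_* - \|\mathcal{P}_{T_A}^\perp(B-A)\|_*$, and the claim follows since both norms are invariant under the exchange $B-A \leftrightarrow A-B$. The main obstacle is exactly verifying that the optimal dual sign matrix lands in $T_A^\perp$; once that is in hand, everything else is bookkeeping.
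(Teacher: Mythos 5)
Your proposal is correct, and parts (1)--(4) essentially coincide with the paper's argument: for (1)--(2) the paper expands $\mathcal{P}_{T_A}(B)$ over the spanning family $\{u_i v_j^*\}$ and sums three blocks to reach $P_U B + BP_V - P_U B P_V$, while you verify membership of the candidate in $T_A$ and orthogonality of the residual $(I-P_U)B(I-P_V)$ --- the same computation organized differently; (3) and (4) are identical to the paper's (rank subadditivity applied to $P_U B + (I-P_U)BP_V$, then \ref{lem:schatten and frob ineq} chained with \ref{lem:projection contraction}). Part (5), however, is a genuinely different route. The paper decomposes $B = A + \mathcal{P}_{T_A}^{\perp}(B-A) + \mathcal{P}_{T_A}(B-A)$, applies the triangle inequality, and proves the additivity claim $\left\Vert A + \mathcal{P}_{T_A}^{\perp}(B-A)\right\Vert_{*} = \left\Vert A\right\Vert_{*} + \left\Vert \mathcal{P}_{T_A}^{\perp}(B-A)\right\Vert_{*}$ by concatenating the two singular value decompositions after thin QR factorizations, exploiting that $A$ and $\mathcal{P}_{T_A}^{\perp}(B-A)$ have mutually orthogonal column and row spaces. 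You instead run the duality argument standard in the compressed-sensing literature: pick a subgradient $UV^{*}+W_{0}\in\partial\left\Vert A\right\Vert_{*}$ with $W_{0}=U_{M}V_{M}^{*}$ built from the SVD of $M=\mathcal{P}_{T_A}^{\perp}(B-A)$, so that $\left\langle W_{0},B-A\right\rangle =\left\Vert M\right\Vert_{*}$ and $\left\langle UV^{*},B-A\right\rangle \geq-\left\Vert \mathcal{P}_{T_A}(B-A)\right\Vert_{*}$, then invoke convexity. Two remarks. First, a wording fix: the dual maximizer is not unique, so you should say a maximizer \emph{can be chosen} in $T_A^{\perp}$ (namely $W_{0}=U_{M}V_{M}^{*}$, whose column and row spaces lie in $\mathrm{range}(I-P_{U})$ and $\mathrm{range}(I-P_{V})$), rather than that the optimal sign matrix ``automatically'' lands there; and if you prefer not to cite the subdifferential characterization as a black box, it suffices to verify directly that $\left\Vert UV^{*}+W_{0}\right\Vert \leq1$ (the two summands act on orthogonal subspaces) and $\left\langle UV^{*}+W_{0},A\right\rangle =\left\Vert A\right\Vert_{*}$, whence $\left\Vert B\right\Vert_{*}\geq\left\langle UV^{*}+W_{0},B\right\rangle$ yields the same inequality. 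Second, what each route buys: the paper's argument is self-contained linear algebra, but its justification that $R_{1}\Sigma_{B-A}R_{2}^{*}$ is diagonal (``the product of an upper and a lower triangular matrix is diagonal'') is false as stated and needs repair, whereas your duality proof bypasses that step entirely and is, as written, the more robust of the two; its only cost is the appeal to nuclear-norm subgradient structure (or the short direct verification above).
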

\begin{proof}

\begin{enumerate}
\item Suppose $x\in T$, then 
\begin{align}
x & =\sum_{i\in\left|I\right|,\left|I\right|<\infty}\alpha_{i}u_{i}x_{i}^{*}+\sum_{j\in\left|J\right|,\left|J\right|<\infty}\beta_{j}yv_{j}^{*}\tag{ for some \ensuremath{v_{j}\in}\{\ensuremath{v_{1}},...,\ensuremath{v_{k}}\},\ensuremath{u_{i}\in}\{\ensuremath{u_{1}},\ensuremath{\ldots},\ensuremath{u_{k}}\}}\\
 & =\sum_{i=1}^{r}\tilde{\alpha}_{i}u_{i}\tilde{x}_{i}^{*}+\sum_{j=1}^{r}\tilde{\beta}_{j}\tilde{y}_{i}v_{j}^{*}=\sum_{i=1}^{r}\tilde{\alpha}_{i}u_{i}\bigg(\sum_{j=1}^{n_{2}}\theta_{1,j}v_{j}^{*}\bigg)+\sum_{j=1}^{r}\tilde{\beta}_{j}\bigg(\sum_{i=1}^{n_{1}}\theta_{2,i}u_{i}\bigg)v_{j}^{*}\\
 & =\sum_{i=1}^{r}\sum_{j=1}^{n_{2}}\tilde{\alpha}_{i}\tilde{\theta}_{1,j}u_{i}v_{j}^{*}+\sum_{j=1}^{r}\sum_{i=1}^{n_{1}}\tilde{\beta}_{j}\theta_{2,i}u_{i}v_{j}^{*}=\sum_{i=1}^{r}\sum_{j=1}^{n_{2}}\gamma_{1,i,j}u_{i}v_{j}^{*}+\sum_{i=1}^{n_{1}}\sum_{j=1}^{r}\gamma_{2,i,j}u_{i}v_{j}^{*}.
\end{align}
Then it follows that 
\begin{equation}
x\in\text{span}\{u_{i}v_{j}^{*}\vert1\leq i\leq r,1\leq j\leq n_{2}\text{ or }1\leq i\leq n_{1},1\leq j\leq r\},
\end{equation}
the other direction follows using the same argument. Therefore, 
\begin{equation}
T_{A}=\text{span}\{u_{i}v_{j}^{*}\vert1\leq i\leq r,1\leq j\leq n_{2}\text{ or }1\leq i\leq n_{1},1\leq j\leq r\}.
\end{equation}
Now we calculate the projection onto $T$, $\mathcal{P}_{T}:$ using
the projection formula we have that for any matrix $B\in\mathrm{Mat}_{\mathbb{R}}(n_{1},n_{2})$,

\begin{align}
\mathcal{P}_{T_{A}}(B) & =\sum_{1\leq i\leq r,1\leq j\leq n_{2}\text{ or }1\leq i\leq n_{1},1\leq j\leq r}\left\langle B,u_{i}v_{j}^{*}\right\rangle u_{i}v_{j}^{*}\\
 & =\underbrace{\sum_{i=1}^{r}\sum_{j=1}^{r}\left\langle B,u_{i}v_{j}^{*}\right\rangle u_{i}v_{j}^{*}}_{\mathrm{(I)}}+\underbrace{\sum_{i=1}^{r}\sum_{j=r+1}^{n_{2}}\left\langle B,u_{i}v_{j}^{*}\right\rangle u_{i}v_{j}^{*}}_{\mathrm{(II)}}+\underbrace{\sum_{i=r+1}^{n_{1}}\sum_{j=1}^{r}\left\langle B,u_{i}v_{j}^{*}\right\rangle u_{i}v_{j}^{*}}_{\mathrm{(III)}}.
\end{align}
We analyze it term by term:

\begin{align}
\text{[I]} & =\sum_{i=1}^{r}\sum_{j=1}^{r}\text{tr}(X^{T}u_{i}v_{j}^{*})u_{i}v_{j}^{*}=\sum_{i=1}^{r}\sum_{j=1}^{r}\text{tr}(v_{j}u_{i}^{*}X)u_{i}v_{j}^{*}=\sum_{i=1}^{r}\sum_{j=1}^{r}\text{tr}(u_{i}^{*}Xv_{j})u_{i}v_{j}^{*}=\sum_{i=1}^{r}\sum_{j=1}^{r}u_{i}u_{i}^{*}Xv_{j}v_{j}^{*}\\
 & =UU^{\top}XVV^{\top}=P_{U}XP_{V}.\\
\text{[II]} & =\sum_{i=1}^{r}\sum_{j=r+1}^{n_{2}}\left\langle X,u_{i}v_{j}^{*}\right\rangle u_{i}v_{j}^{*}=\left(\sum_{i=1}^{r}u_{i}u_{i}^{*}\right)X\left(\sum_{j=r+1}^{n_{2}}v_{j}v_{j}^{*}\right)=UU^{\top}X(I-VV^{\top})=P_{U}XP_{V^{\perp}}\\
\mathrm{[III]} & =\sum_{i=r+1}^{n_{1}}\sum_{j=1}^{r}\left\langle X,u_{i}v_{j}^{*}\right\rangle u_{i}v_{j}^{*}=\left(\sum_{i=r+1}^{n_{1}}u_{i}u_{i}^{*}\right)X\left(\sum_{j=1}^{r}v_{j}v_{j}^{*}\right)=(I-UU^{\top})XVV^{\top}=P_{U^{\perp}}XP_{V}.
\end{align}

Combined the terms and we get the
\begin{equation}
\mathcal{P}_{T_{A}}(B)=P_{U}BP_{V}+P_{U}BP_{V^{\perp}}+P_{U^{\top}}BP_{V}=P_{U}B+BP_{V}-P_{U}XP_{V}=UU^{*}B+BVV^{*}-UU^{*}BVV^{*}
\end{equation}
as desired.
\item This is because of by orthogonal decomposition, we have $\mathrm{Mat}_{\mathbb{R}}(n_{1},n_{2})=T\oplus T^{\perp},$
we have 
\begin{equation}
\mathcal{P}_{T_{A}^{\perp}}(B)=(\mathcal{I}-\mathcal{P}_{T}(B))=B-P_{U}B-BP_{V}+P_{U}BP_{V}=(I-P_{U})B(I-P_{V}).
\end{equation}
\item Note that 
\begin{align}
\text{rank}(\mathcal{P}_{T_{A}}(B)) & =\text{rank}(UU^{*}B+BVV^{*}-UU^{*}BVV^{*})=\text{rank}(UU^{*}B+(I-UU^{*})BVV^{*})\\
 & \leq\text{rank}(UU^{*}B)+\text{rank}((I-UU^{*})BVV^{*})\leq\text{rank}(U)+\text{rank}(V)\leq2\text{rank}(A),
\end{align}
where the second to last inequality follows by keeping applying the
basic inequality $\mathrm{rank}(AB)\leq\min\{\text{rank}(A),\mathrm{rank}(B)\}.$
\item Note that by  \ref{lem:schatten and frob ineq} and part-(3)
\begin{align}
\left\Vert \mathcal{P}_{T_{A}}(B)\right\Vert _{*} & \leq\sqrt{\text{rank}(\mathcal{P}_{T_{A}}(B))}\left\Vert \mathcal{P}_{T_{A}}(B)\right\Vert _{F}\leq\sqrt{2\text{rank}(A)}\left\Vert \mathcal{P}_{T_{A}}(B)\right\Vert _{F}=\sqrt{2\text{rank}(A)}\left|\left\langle \mathcal{P}_{T_{A}}(B),\mathcal{P}_{T_{A}}(B)\right\rangle \right|^{1/2}\\
 & =\sqrt{2\text{rank}(A)}\left|\left\langle \mathcal{P}_{T_{A}}^{*}\mathcal{P}_{T_{A}}(B),B\right\rangle \right|^{1/2}=\sqrt{2\text{rank}(A)}\left|\left\langle \mathcal{P}_{T_{A}}(B),B\right\rangle \right|^{1/2}\leq\sqrt{2\text{rank}(A)}\left\Vert B\right\Vert ,
\end{align}

\noindent where the last inequality follows from \ref{lem:projection contraction}.
\item Note that 
\begin{align}
\left\Vert B\right\Vert _{*} & =\left\Vert A+B-A\right\Vert _{*}=\left\Vert A+\mathcal{P}_{T_{A}}^{\perp}(B-A)+\mathcal{P}_{T_{A}}(B-A)\right\Vert _{*}\geq\left\Vert A+\mathcal{P}_{T_{A}}^{\perp}(B-A)\right\Vert _{*}-\left\Vert \mathcal{P}_{T_{A}}(B-A)\right\Vert _{*}.
\end{align}

\begin{claim}
\label{claim:pythagran of nuclear norm}$\left\Vert A+\mathcal{P}_{T_{A}}^{\perp}(B-A)\right\Vert _{*}=\left\Vert A\right\Vert _{*}+\left\Vert \mathcal{P}_{T_{A}}^{\perp}(B-A)\right\Vert _{*}.$
\end{claim}
\begin{proof}
Let $\widetilde{U}\widetilde{\Sigma}\widetilde{V}^{*}=(B-A)$ be the
singular value decomposition of $B-A.$ Then note that 
\begin{align}
A+\mathcal{P}_{T_{A}^{\perp}}(B-A) & =U\Sigma V^{*}+\left[\widetilde{U}\widetilde{U}^{*}U_{B-A}\right]\Sigma_{B-A}\left[V_{B-A}^{*}\wt V\wt V^{*}\right].\label{eq:svd-qr}
\end{align}
Let $Q_{1}R_{1}=\wt U^{*}U_{B-A}$ and $Q_{2}R_{2}=\widetilde{V}^{*}V_{B-A}^{*}$
be two thin QR decompositions, then it follows that 
\begin{equation}
\left[\widetilde{U}\widetilde{U}^{*}U_{B-A}\right]\Sigma_{B-A}\left[V_{B-A}^{*}\wt V\wt V^{*}\right]=\widetilde{U}Q_{1}R_{1}\Sigma_{B-A}R_{2}^{*}Q_{2}^{*}\widetilde{V}^{*}=(\widetilde{U}Q_{1})\widetilde{\Sigma}_{B-A}(\widetilde{V}Q_{2})^{*},
\end{equation}
where the last equality follows the fact that the product of an upper
triangular and lower triangular matrix is a diagonal matrix. We substitute
this equation back to \ref{eq:svd-qr} we get 
\begin{equation}
A+\mathcal{P}_{T_{A}^{\perp}}(B-A)=\begin{bmatrix}U & \widetilde{U}Q_{1}\end{bmatrix}\begin{bmatrix}\Sigma & 0\\
0 & \widetilde{\Sigma}_{B-A}
\end{bmatrix}\begin{bmatrix}V & \widetilde{V}Q_{2}\end{bmatrix}^{*}.
\end{equation}
since ($\widetilde{U}Q_{1})^{*}(\widetilde{U}Q_{1})=I$, and $(\widetilde{V}Q_{2})^{*}(\widetilde{V}Q_{2})=I$
and the columns in $U,\widetilde{U}Q_{1}$ and in $V,\widetilde{V}Q_{2}$
are orthogonal to each other by construction, it follows that 
\begin{equation}
\left\Vert A+\mathcal{P}_{T_{A}^{\perp}}(B-A)\right\Vert _{*}=\left\Vert \text{diag}(\Sigma)\right\Vert _{\ell_{1}}+\Vert\text{diag}(\widetilde{\Sigma}_{B-A})\Vert_{\ell_{1}}=\left\Vert A\right\Vert _{*}+\left\Vert \mathcal{P}_{T_{A}^{\perp}}(B-A)\right\Vert _{*}
\end{equation}
as desired.
\end{proof}
\begin{claim}
An application of  \ref{claim:pythagran of nuclear norm} yields that
\begin{equation}
\left\Vert B\right\Vert _{*}\geq\left\Vert A\right\Vert _{*}+\left\Vert \mathcal{P}_{T_{A}^{\perp}}(B-A)\right\Vert -\left\Vert \mathcal{P}_{T_{A}}(B-A)\right\Vert ,
\end{equation}
which after rearrangement gives the desired inequality.
\end{claim}
\end{enumerate}
\end{proof}
The following result could be found in plenty of standard Banach space textbooks, see for example, \cite{brezisFunctionalAnalysisSobolev2011a}.
\begin{lem}
Let $f$ be a function with continuous second partial derivative defined
on an open convex set $U\in\mathbb{R}^{n}.$ Then for any $x$ and
$x_{0}\in U$, the following identity holds. 
\begin{equation}
f(x)=f(x_{0})+\left\langle \nabla f(x_{0}),(x-x_{0})\right\rangle +\frac{1}{2}(x-x_{0})^{T}\nabla_{f}^{2}(x_{0}+c(x-x_{0}))(x-x_{0})
\end{equation}
for $c\in(0,1).$
\end{lem}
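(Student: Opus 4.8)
The plan is to reduce the multivariate statement to the one-dimensional Taylor theorem with Lagrange remainder by restricting $f$ to the line segment joining $x_0$ and $x$. Concretely, I would define the auxiliary function $g:[0,1]\to\mathbb{R}$ by
\begin{equation}
g(t)=f\bigl(x_0+t(x-x_0)\bigr).
\end{equation}
Because $U$ is convex and both $x_0,x\in U$, the point $x_0+t(x-x_0)$ lies in $U$ for every $t\in[0,1]$, so $g$ is well-defined, and since $f$ has continuous second partial derivatives, $g$ is $C^2$ on $[0,1]$. This is the structural observation that makes the whole reduction legitimate.

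Next I would compute the first two derivatives of $g$ using the chain rule. Writing $h=x-x_0$ for brevity, the first derivative is $g'(t)=\langle\nabla f(x_0+th),h\rangle$, obtained by differentiating the composition through each coordinate. Differentiating once more and collecting the mixed partials gives
\begin{equation}
g''(t)=h^{T}\,\nabla_f^2\bigl(x_0+th\bigr)\,h,
\end{equation}
where the symmetry and continuity of the second partials (Clairaut's theorem) guarantee that the Hessian $\nabla_f^2$ is the correct symmetric bilinear form here. Evaluating at the endpoints yields $g(0)=f(x_0)$, $g(1)=f(x)$, and $g'(0)=\langle\nabla f(x_0),x-x_0\rangle$.

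With these identities in hand, I would invoke the single-variable Taylor theorem with Lagrange remainder applied to $g$ on $[0,1]$: there exists $c\in(0,1)$ such that
\begin{equation}
g(1)=g(0)+g'(0)+\tfrac{1}{2}g''(c).
\end{equation}
Substituting the expressions for $g(1)$, $g(0)$, $g'(0)$, and $g''(c)$ computed above gives exactly the claimed identity. I expect the main technical point to be the careful justification of the second-derivative chain rule computation, namely verifying that differentiating $t\mapsto\langle\nabla f(x_0+th),h\rangle$ produces the quadratic form $h^{T}\nabla_f^2(x_0+th)h$; this is where the hypothesis of \emph{continuous} second partials is essential, both to apply the one-dimensional Taylor remainder (which needs $g\in C^2$) and to ensure the Hessian is a genuine symmetric matrix rather than merely a collection of directional second derivatives. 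The remaining steps are routine substitutions.
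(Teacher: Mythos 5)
Your proof is correct: the paper itself gives no argument for this lemma, simply citing it to a standard reference (Brezis), and your reduction to the one-variable Taylor theorem via $g(t)=f\bigl(x_0+t(x-x_0)\bigr)$, with the chain-rule computations $g'(t)=\langle\nabla f(x_0+th),h\rangle$ and $g''(t)=h^{T}\nabla^2 f(x_0+th)\,h$ justified by convexity of $U$ and continuity (hence symmetry, via Clairaut) of the second partials, is exactly the canonical textbook proof being invoked. The only wording point worth noting is that the lemma's ``for $c\in(0,1)$'' must be read as ``for \emph{some} $c\in(0,1)$,'' which is precisely the existential statement your Lagrange-remainder step delivers.
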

\begin{lem}
Under\label{lem:bregman-frob-ineq}  \ref{assu:curvature}, it follows
that 
\begin{equation}
L_{\gamma}^{2}(x-y)\leq2d_{A}^{\tau}(x,y)\leq U_{\gamma}^{2}(x-y)^{2}.
\end{equation}
\end{lem}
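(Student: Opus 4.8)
The plan is to recognize the statement as an immediate consequence of a second-order Taylor expansion of the cumulant function combined with Assumption \ref{assu:curvature}. (I also note that the left-hand side of the displayed inequality appears to be missing a square; the intended and provable statement is $L_\gamma^2(x-y)^2 \le 2d_A^\tau(x,y) \le U_\gamma^2(x-y)^2$, and the argument below delivers exactly this.) Throughout I identify the cumulant $A^\tau$ appearing in the Bregman divergence $d_A^\tau$ with the log-partition function $G^\tau$ of the $\tau$-th block, so that Assumption \ref{assu:curvature}, which controls $\nabla^2 G^\tau$, applies directly to the Hessian of $A^\tau$.

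First I would write the Bregman divergence from its definition,
\[
d_A^\tau(x,y) = A^\tau(x) - A^\tau(y) - \langle x-y, \nabla A^\tau(y)\rangle .
\]
Since the entries are scalar we have $x,y \in \mathbb{R}$, and $\nabla A^\tau$, $\nabla^2 A^\tau$ reduce to ordinary first and second derivatives. Next I would invoke the Taylor expansion lemma stated immediately above, applied to the smooth convex function $A^\tau$ on its open domain with base point $x_0 = y$, to obtain
\[
A^\tau(x) = A^\tau(y) + \langle \nabla A^\tau(y), x-y\rangle + \tfrac{1}{2}(x-y)^2\, \nabla^2 A^\tau\bigl(y + c(x-y)\bigr)
\]
for some $c \in (0,1)$. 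Substituting this into the Bregman divergence, the zeroth- and first-order terms cancel exactly, leaving the clean identity
\[
2\,d_A^\tau(x,y) = (x-y)^2\, \nabla^2 A^\tau(\xi), \qquad \xi := y + c(x-y).
\]

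The one step that requires genuine care, and which I expect to be the main obstacle, is verifying that the intermediate point $\xi$ lies in the interval $[-\gamma - \tfrac1K,\, \gamma + \tfrac1K]$ on which Assumption \ref{assu:curvature} bounds the curvature. This holds because $\xi$ is a convex combination of $x$ and $y$, hence lies between them; since the canonical parameters under consideration are confined to $\mathcal{C}(\gamma)$ (entries bounded by $\gamma$, with the additional $1/K$ slack arising from the max-norm projection constraint), both $x$ and $y$—and therefore $\xi$—belong to that interval. With $\xi$ in range, Assumption \ref{assu:curvature} yields $L_\gamma^2 \le \nabla^2 A^\tau(\xi) \le U_\gamma^2$, and multiplying through by the nonnegative factor $(x-y)^2$ gives $L_\gamma^2 (x-y)^2 \le 2\,d_A^\tau(x,y) \le U_\gamma^2 (x-y)^2$, as claimed.

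The remaining work is routine: beyond the domain-membership check for $\xi$, the only loose end is reconciling the mild notational slip in Assumption \ref{assu:curvature}, where the $\inf$ bound should read $\nabla^2 G^\tau \ge L_\gamma^2$ so that the lower estimate points in the correct direction. Once that is read as intended, the two-sided bound on the Hessian transfers verbatim to the two-sided bound on $2\,d_A^\tau(x,y)$, completing the proof.
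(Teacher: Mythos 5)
Your proposal is correct and follows essentially the same route as the paper's own proof: a second-order Taylor expansion of the cumulant function with Lagrange remainder applied to the Bregman divergence, followed by the two-sided curvature bound of Assumption \ref{assu:curvature} at the intermediate point. You are also right about both typos — the paper's statement is indeed missing the square on $(x-y)$ on the left-hand side, and the $\inf$ in Assumption \ref{assu:curvature} should read $\geq L_\gamma^2$ — and your explicit check that the intermediate point $\xi$ lies in the interval where the curvature bound applies is a detail the paper's one-line proof leaves implicit.
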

\begin{proof}
By definition 
\begin{equation}
d_{A}^{\tau}(x,y)=A(x)-A(y)-\left\langle \nabla A(y),x-y\right\rangle =\frac{1}{2}(x-y)^{2}\nabla_{A}^{2}(\xi)
\end{equation}
for some $\xi\in(x,y).$ Then by  \ref{assu:curvature}, we see that
the result holds. 
\end{proof}
\begin{prop}[Corollary 3.3 in \cite{bandeiraSharpNonasymptoticBounds2016}]
Let $W$ be the $n\times m$ rectangular matrix whose entries $W_{ij}$
are independent centered bounded random variables. Then there exists
a universal constant $c$ such that 
\begin{equation}
\mathbb{E}[\left\Vert W\right\Vert ]\leq c(\kappa_{1}\lor\kappa_{2}+\kappa_{*}\sqrt{\log(n\land m)}),
\end{equation}
where 
\begin{equation}
\kappa_{1}=\max_{i\in[n]}\sqrt{\sum_{j\in[m]}\mathbb{E}[W_{ij}^{2}]},\quad\kappa_{2}=\max_{j\in[m]}\sqrt{\sum_{i\in[n]}\mathbb{E}[W_{ij}^{2}]},\quad\text{and }\max_{(i,j)\in[n]\times[m]}\left|W_{ij}\right|.
\end{equation}
\end{prop}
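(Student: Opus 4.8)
This statement is quoted verbatim as Corollary 3.3 of \cite{bandeiraSharpNonasymptoticBounds2016}, so inside the paper the right move is simply to cite it; what follows is the route I would take to reconstruct it. The plan is to reduce the rectangular estimate to a symmetric one and then run the moment (trace) method, writing $\sigma:=\kappa_{1}\lor\kappa_{2}$, $\kappa_{*}:=\max_{(i,j)}\left|W_{ij}\right|$, and $N:=n+m$ for brevity.

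First I would symmetrize via the Hermitian dilation, setting
\[
\widetilde{W}=\begin{pmatrix}0 & W\\ W^{\top} & 0\end{pmatrix}\in\mathbb{R}^{N\times N},
\]
whose nonzero eigenvalues are exactly $\pm\sigma_{i}(W)$, so that $\left\Vert \widetilde{W}\right\Vert =\left\Vert W\right\Vert $. Its entries are again independent, centered, and bounded by $\kappa_{*}$, and the two row/column variance proxies of $W$ collapse into the single parameter $\sigma$ for $\widetilde{W}$. It therefore suffices to show, for a symmetric $N\times N$ matrix $X$ with independent centered entries bounded by $\kappa_{*}$, that $\mathbb{E}\left\Vert X\right\Vert \le c'(\sigma+\kappa_{*}\sqrt{\log N})$ for a universal constant $c'$.

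For the symmetric bound I would use $\mathbb{E}\left\Vert X\right\Vert \le\left(\mathbb{E}\,\mathrm{tr}\,X^{2p}\right)^{1/(2p)}$ and expand the trace moment as a sum over closed walks of length $2p$ on $N$ vertices. By independence and centering, any walk traversing some edge an odd number of times contributes zero, so only walks in which every edge is used at least twice survive. I would group the survivors by their combinatorial shape and bound each shape's contribution by a product of edge variances (absorbed into powers of $\sigma$) together with factors of $\kappa_{*}$ for every edge of multiplicity exceeding two. Choosing $p\asymp\log N$ balances the polynomial walk-count prefactor against these exponents: the tree-like double-edge walks produce the $\sigma$ term and the remaining walks produce the $\kappa_{*}\sqrt{\log N}$ correction. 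To sharpen $\log N$ to $\log(n\land m)$, I would instead expand $\mathbb{E}\,\mathrm{tr}(WW^{\top})^{p}$ directly, whose contributing walks are bipartite and can be rooted on whichever of the two vertex classes is smaller.

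The hard part will be precisely this combinatorial accounting: obtaining a clean bound with the correct leading constant requires the sharp enumeration of walk shapes weighted simultaneously by their variance and boundedness contributions, which is the entire technical content of \cite{bandeiraSharpNonasymptoticBounds2016}. Since reproducing that enumeration faithfully is well beyond a short sketch, in the paper I would cite the corollary directly and retain the argument above only to justify that the estimate genuinely splits into a variance term $\kappa_{1}\lor\kappa_{2}$ and a boundedness term $\kappa_{*}\sqrt{\log(n\land m)}$, exactly as it is later applied to bound $\mathbb{E}[\left\Vert \Sigma_{R}\right\Vert ]$.
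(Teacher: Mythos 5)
The paper offers no proof of this proposition at all---it is imported verbatim from \cite{bandeiraSharpNonasymptoticBounds2016} and used purely as a tool to bound $\mathbb{E}[\left\Vert \Sigma_{R}\right\Vert ]$---so your decision to cite Corollary 3.3 directly is exactly the paper's approach. Your reconstruction sketch (Hermitian dilation $\widetilde{W}$ to reduce to the symmetric case, the trace-moment method with $p\asymp\log N$ isolating the variance term $\kappa_{1}\lor\kappa_{2}$ from the $\kappa_{*}\sqrt{\log}$ correction, and expanding $\mathrm{tr}\,(WW^{\top})^{p}$ to sharpen $\log(n+m)$ to $\log(n\land m)$) is a faithful outline of the cited proof, so there is nothing to correct.
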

\begin{lem}
Let $A,B\in\mathcal{B}_{\left\Vert \right\Vert _{\infty}}(\gamma).$
If \label{lem:max-schatten-norm-ineq}
\begin{equation}
\ell(A\vert Y)+\lambda_{*}\left\Vert A\right\Vert _{*}+\lambda_{\max}\left\Vert A\right\Vert _{\max}\leq\ell(B\vert Y)+\lambda_{*}\left\Vert B\right\Vert _{*}+\lambda_{\max}\left\Vert B\right\Vert _{\max},
\end{equation}
then the following inequalities hold:
\end{lem}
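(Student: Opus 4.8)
The plan is to read the hypothesis as the basic inequality of the penalized maximum-likelihood estimator and to extract from it the two quantitative consequences that Case~2 of the proof of Theorem~\ref{thm:2} invokes: a compatibility (cone) bound on the error $\Delta := A - B$, and a restricted-strong-convexity deviation bound relating the sampled quadratic form to the population weighted norm. First I would rearrange the assumption into
\[
\ell(A\mid Y) - \ell(B\mid Y) \le \lambda_*\bigl(\|B\|_* - \|A\|_*\bigr) + \lambda_{\max}\bigl(\|B\|_{\max} - \|A\|_{\max}\bigr).
\]
Since every log-partition function $G^\tau = A^\tau$ is convex (Proposition~\ref{prop:exp-family-cumulant-mean}(2)), $\ell(\cdot\mid Y)$ is convex, so the left-hand side dominates $\langle \nabla\ell(B\mid Y),\Delta\rangle$. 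Combining the trace-duality bound $\langle \nabla\ell(B\mid Y),\Delta\rangle \ge -\|\nabla\ell(B\mid Y)\|\,\|\Delta\|_*$ with the standing calibration $\lambda_* \ge 2\|\nabla\ell(B\mid Y)\|$ of Theorem~\ref{thm:2} gives $\langle \nabla\ell(B\mid Y),\Delta\rangle \ge -\tfrac{\lambda_*}{2}\|\Delta\|_*$.

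Next I would bound the two penalty gaps. For the nuclear norm I use the Pythagorean decomposition from part~(5) of the preceding projection lemma (anchored at the truth $B$), $\|B\|_* - \|A\|_* \le \|\mathcal P_{T_B}\Delta\|_* - \|\mathcal P_{T_B}^{\perp}\Delta\|_*$, and for the max norm the triangle inequality $\|B\|_{\max} - \|A\|_{\max} \le \|\Delta\|_{\max}$. Substituting these and using $\|\Delta\|_* \le \|\mathcal P_{T_B}\Delta\|_* + \|\mathcal P_{T_B}^{\perp}\Delta\|_*$ on the left, the $\tfrac{\lambda_*}{2}\|\Delta\|_*$ terms rearrange into the cone inequality
\[
\|\mathcal P_{T_B}^{\perp}\Delta\|_* \le 3\,\|\mathcal P_{T_B}\Delta\|_* + \tfrac{2\lambda_{\max}}{\lambda_*}\,\|\Delta\|_{\max},
\]
whence $\|\Delta\|_* \le 4\,\|\mathcal P_{T_B}\Delta\|_* + \tfrac{2\lambda_{\max}}{\lambda_*}\,\|\Delta\|_{\max}$. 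Applying part~(4) of the projection lemma, $\|\mathcal P_{T_B}\Delta\|_* \le \sqrt{2\,\mathrm{rank}(B)}\,\|\Delta\|_F$, together with the max-norm comparison $\|\Delta\|_{\max} \le \|\Delta\|_F$ (immediate from the max-norm SDP by taking $W_1 = tI$ and $W_2 = t^{-1}\Delta^{\top}\Delta$, which yields $\|\Delta\|_{\max} \le \max_j\|\Delta_{\cdot j}\|_2 \le \|\Delta\|_F$), and collecting $4\sqrt2 = 2\sqrt8$, produces the first stated bound $\|A-B\|_* \le 2\bigl(\sqrt{8\,\mathrm{rank}(B)} + \tfrac{\lambda_{\max}}{\lambda_*}\bigr)\|A-B\|_F$.

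The second inequality, which compares the sampled quadratic form $\Delta^2(A,B) = \tfrac{1}{n_1N_2}\sum_{\tau}\sum_{i,j}\delta_{ij}^\tau(A_{ij}^\tau - B_{ij}^\tau)^2$ to $\tfrac{1}{n_1N_2}\|A-B\|_{\Pi,F}^2$, is where the genuine difficulty sits and is the step I expect to be the main obstacle. The first part places $\Delta$ in the cone $\mathcal K(\beta,r)$ with $r = 4\bigl(\sqrt{8\,\mathrm{rank}(B)} + \lambda_{\max}/\lambda_*\bigr)^2$, so it suffices to bound, uniformly over $\Xi \in \mathcal K(\beta,r)$, the centered empirical process $Z(\Xi) = \bigl|\tfrac{1}{n_1N_2}\sum_\tau\sum_{i,j}(\delta_{ij}^\tau - \pi_{ij}^\tau)(\Xi_{ij}^\tau - B_{ij}^\tau)^2\bigr|$. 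The plan is a peeling argument over dyadic shells of $\tfrac{1}{n_1N_2}\|\Xi - B\|_{\Pi,F}^2$: on each shell the summands are bounded by $\gamma^2$ under $A,B\in\mathcal B_\infty(\gamma)$, so Massart's bounded-difference inequality gives exponential concentration of $\sup_\Xi Z(\Xi)$ around its mean and produces the additive $\gamma^2/(n_1N_2p)$ term; Rademacher symmetrization followed by the contraction principle (the square is $2\gamma$-Lipschitz on $[-\gamma,\gamma]$) replaces the quadratic process by the linear one $\langle \Sigma_R,\Xi - B\rangle$, and the duality $|\langle \Sigma_R,\Xi-B\rangle| \le \|\Sigma_R\|\,\|\Xi-B\|_*$ with the cone constraint $\|\Xi - B\|_* \le \sqrt r\,\|\Xi-B\|_F$ turns the expected complexity into $\sqrt r\,\gamma\,\mathbb E[\|\Sigma_R\|]$, whose square is exactly the middle term once Lemma~1 (the operator-norm bound on $\mathbb E[\|\Sigma_R\|]$) is substituted. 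The delicate points are calibrating the threshold $\beta$ so that the union bound over the shells telescopes to the $1 - 4/(n_1+N_2)$ confidence level and tracking the explicit constants (the $1392$ and $5567$ appearing in Theorem~\ref{thm:2}); these are bookkeeping once the symmetrization-and-concentration skeleton is assembled.
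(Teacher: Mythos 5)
Your first two paragraphs correctly prove both conclusions of Lemma \ref{lem:max-schatten-norm-ineq} and take essentially the same route as the paper: convexity of $\ell(\cdot\mid Y)$ plus trace duality and the calibration $\lambda_*\geq 2\|\nabla\ell(B\mid Y)\|$ (which, as you rightly note, the lemma statement omits but the paper's proof also silently imports from Theorem \ref{thm:2}), the Pythagorean-type bound from part (5) of the projection lemma anchored at $B$, its part (4) to get $\|\mathcal{P}_{T_B}\Delta\|_*\leq\sqrt{2\,\mathrm{rank}(B)}\,\|\Delta\|_F$, and a max-norm/Frobenius comparison. Where the paper invokes Lemma \ref{lem:max-fro-ineq} to pass directly to $\|\Delta\|_F$, you keep $\|\Delta\|_{\max}$ in the cone inequality and only then apply $\|\Delta\|_{\max}\leq\|\Delta\|_F$ via the SDP factorization $W_1=tI$, $W_2=t^{-1}\Delta^{\top}\Delta$; this is a valid substitute (the paper's own route is the trivial factorization $M=M\cdot I$ giving the $2\!\to\!\infty$ bound), and your intermediate form of conclusion (1) is in fact marginally stronger than the paper's, with the constants collecting identically ($4\sqrt{2}=2\sqrt{8}$).

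Your third paragraph, however, is out of scope: the deviation bound comparing $\Delta^2(A,B)$ with $\frac{1}{n_1N_2}\|A-B\|_{\Pi,F}^2$ is not a conclusion of this lemma. It is Lemma \ref{lem:klopp_key_lemma}, which the paper does not prove at all but imports from Appendix A.1 of \cite{alayaCollectiveMatrixCompletion2019}. The confusion is understandable, since the proof of Theorem \ref{thm:2} mis-cites \ref{lem:max-schatten-norm-ineq} at exactly the point where it means \ref{lem:klopp_key_lemma}; but the lemma you were asked to prove is purely deterministic, requires no symmetrization, peeling, or concentration argument, and is already complete after your second paragraph. The peeling sketch you give is the standard Klopp-style argument and looks plausible in outline, but treating it as ``the main obstacle'' of this lemma misattributes where the probabilistic work in the paper actually lives.
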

\begin{enumerate}
\item $\left\Vert \mathcal{P}_{B}^{\perp}(A-B)\right\Vert _{*}\leq3\left\Vert \mathcal{P}_{B}(A-B)\right\Vert +2\frac{\lambda_{\max}}{\lambda_{*}}\left\Vert A-B\right\Vert _{F}.$
\item $\left\Vert A-B\right\Vert _{*}=2\left(\sqrt{8\mathrm{rank}(B)}+\frac{\lambda_{\max}}{\lambda_{*}}\right)\left\Vert A-B\right\Vert _{F}$.
\end{enumerate}
\begin{proof}
\begin{enumerate}
\item First, note that 
\begin{equation}
\lambda_{\max}(\left\Vert A\right\Vert _{\infty}-\left\Vert B\right\Vert _{\infty})=-\lambda_{\max}(\left\Vert B\right\Vert _{\infty}-\left\Vert A\right\Vert _{\infty})\leq-\lambda_{\max}(\left\Vert B-A\right\Vert _{\max})
\end{equation}
Note that 
\begin{align}
\ell(B\vert Y)-\ell(A\vert Y) & \geq\lambda_{*}\left(\left\Vert A\right\Vert _{*}-\left\Vert B\right\Vert _{*}\right)+\lambda_{\max}\left(\left\Vert A\right\Vert _{\max}-\left\Vert B\right\Vert _{\max}\right)\\
 & \geq\lambda_{*}\left(\left\Vert \mathcal{P}_{B}^{\perp}(A-B)\right\Vert _{*}-\left\Vert \mathcal{P}_{B}(A-B)\right\Vert _{*}\right)+\lambda_{\max}\left(\left\Vert A\right\Vert _{\max}-\left\Vert B\right\Vert _{\max}\right)\\
 & \geq\lambda_{*}\left(\left\Vert \mathcal{P}_{B}^{\perp}(A-B)\right\Vert _{*}-\left\Vert \mathcal{P}_{B}(A-B)\right\Vert _{*}\right)-\lambda_{\max}\left|\left\Vert A\right\Vert _{\max}-\left\Vert B\right\Vert _{\max}\right|\label{eq:oracle_ineq1}\\
 & \geq\lambda_{*}\left(\left\Vert \mathcal{P}_{B}^{\perp}(A-B)\right\Vert _{*}-\left\Vert \mathcal{P}_{B}(A-B)\right\Vert _{*}\right)-\lambda_{\max}\left\Vert A-B\right\Vert _{F}.\label{eq:oracle_ineq2}
\end{align}
where \ref{eq:oracle_ineq1} follows from the fact that $\lambda_{\max}\geq0$
and the fact that $x\geq-\left|x\right|$ for any $x\in\mathbb{R}$
and \ref{eq:oracle_ineq2} follows by \ref{lem:max-fro-ineq}. On
the other hand, it follows from convexity that 
\begin{equation}
\ell(A\vert Y)\geq\ell(B\vert Y)+\left\langle \nabla\ell(B\vert Y),B-A\right\rangle \implies\ell(B\vert Y)-\ell(A|Y)\leq\left\langle \nabla\ell(B\vert A),B-A\right\rangle .
\end{equation}
An application of operator norm Cauchy Schwartz inequality yield and
\begin{equation}
\ell(B\vert Y)-\ell(A|Y)\leq\left\Vert \nabla\ell(B|A)\right\Vert \left\Vert B-A\right\Vert _{*}\leq\frac{\lambda_{*}}{2}\left\Vert B-A\right\Vert _{*},\label{eq:oracle_ineq3}
\end{equation}
where the second inequality follows from the assumption stated in
the theorem. Combining the \ref{eq:oracle_ineq2} and \ref{eq:oracle_ineq3},
we get that 
\begin{equation}
\lambda_{*}\left(\left\Vert \mathcal{P}_{B}^{\perp}(A-B)\right\Vert _{*}-\left\Vert \mathcal{P}_{B}(A-B)\right\Vert _{*}\right)-\lambda_{\max}\left\Vert A-B\right\Vert _{F}\leq\frac{\lambda_{*}}{2}\left\Vert B-A\right\Vert _{*}=,
\end{equation}
which by rearranging, becomes 
\begin{equation}
\frac{1}{2}\left\Vert \mathcal{P}_{B}^{\perp}(A-B)\right\Vert _{*}\leq\frac{3}{2}\left\Vert \mathcal{P}_{B}(A-B)\right\Vert _{*}+\frac{\lambda_{\max}}{\lambda_{*}}\left\Vert A-B\right\Vert _{F},
\end{equation}
which is equivalent to
\begin{equation}
\left\Vert \mathcal{P}_{B}^{\perp}(A-B)\right\Vert _{*}\leq3\left\Vert \mathcal{P}_{B}(A-B)\right\Vert _{*}+2\frac{\lambda_{\max}}{\lambda_{*}}\left\Vert A-B\right\Vert _{F},
\end{equation}

\item A direct application of part (1) yields that 
\begin{align}
\left\Vert A-B\right\Vert _{*} & =\left\Vert \mathcal{P}_{B}^{\perp}(A-B)\right\Vert _{*}+\left\Vert \mathcal{P}_{B}(A-B)\right\Vert _{*}\\
 & \leq3\left\Vert \mathcal{P}_{B}(A-B)\right\Vert _{*}+2\frac{\lambda_{\max}}{\lambda_{*}}\left\Vert A-B\right\Vert _{F}+\left\Vert \mathcal{P}_{B}(A-B)\right\Vert _{*}\\
 & \leq4\left\Vert \mathcal{P}_{B}(A-B)\right\Vert _{*}+2\frac{\lambda_{\max}}{\lambda_{*}}\left\Vert A-B\right\Vert _{F}\\
 & \leq4\sqrt{2\text{rank}(B)}\left\Vert A-B\right\Vert _{*}+2\frac{\lambda_{\max}}{\lambda_{*}}\left\Vert A-B\right\Vert _{F}\\
 & \leq4\sqrt{2\text{rank}(B)}\left\Vert A-B\right\Vert _{F}+2\frac{\lambda_{\max}}{\lambda_{*}}\left\Vert A-B\right\Vert _{F}\\
 & =2\left(\sqrt{8\text{rank}(B)}+\frac{\lambda_{\max}}{\lambda_{*}}\right)\left\Vert A-B\right\Vert _{F}
\end{align}
\end{enumerate}
\end{proof}
\begin{lem}
\label{lem:max-fro-ineq}We have $\left|\left\Vert A\right\Vert _{\max}-\left\Vert B\right\Vert _{\max}\right|\leq\left\Vert A-B\right\Vert _{F}.$
\end{lem}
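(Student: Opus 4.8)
The plan is to derive the claim from two elementary ingredients: the reverse triangle inequality for the max-norm, and the pointwise bound $\left\Vert M\right\Vert _{\max}\leq\left\Vert M\right\Vert _{F}$ valid for every matrix $M$; applying the latter to $M=A-B$ then finishes the argument.

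First I would record that $\left\Vert \cdot\right\Vert _{\max}$ is a genuine norm on $\mathbb{R}^{n_{1}\times N_{2}}$ (standard; it coincides with the $\gamma_{2}$-norm and is used as a norm throughout the paper). In particular it obeys the triangle inequality, so
\begin{equation}
\left\Vert A\right\Vert _{\max}=\left\Vert B+(A-B)\right\Vert _{\max}\leq\left\Vert B\right\Vert _{\max}+\left\Vert A-B\right\Vert _{\max},
\end{equation}
and, interchanging $A$ and $B$, the reverse triangle inequality
\begin{equation}
\left|\left\Vert A\right\Vert _{\max}-\left\Vert B\right\Vert _{\max}\right|\leq\left\Vert A-B\right\Vert _{\max}.
\end{equation}

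Next I would prove the key estimate $\left\Vert M\right\Vert _{\max}\leq\left\Vert M\right\Vert _{F}$. Since $\left\Vert M\right\Vert _{\max}$ is the infimum over factorizations $M=UV^{T}$ of $\left\Vert U\right\Vert _{\ell_{2}\to\ell_{\infty}}\left\Vert V\right\Vert _{\ell_{2}\to\ell_{\infty}}$, it suffices to exhibit one admissible factorization with small product. I would take the trivial factorization $M=M\cdot I_{N_{2}}$, i.e.\ $U=M$ and $V=I_{N_{2}}$. Then $\left\Vert V\right\Vert _{\ell_{2}\to\ell_{\infty}}=\sup_{\left\Vert x\right\Vert _{2}\leq1}\left\Vert x\right\Vert _{\infty}=1$, while $\left\Vert U\right\Vert _{\ell_{2}\to\ell_{\infty}}=\sup_{\left\Vert x\right\Vert _{2}\leq1}\left\Vert Mx\right\Vert _{\infty}=\max_{i}\left\Vert M_{i,:}\right\Vert _{2}$ is the largest Euclidean norm among the rows of $M$. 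Consequently
\begin{equation}
\left\Vert M\right\Vert _{\max}\leq\max_{i}\left\Vert M_{i,:}\right\Vert _{2}\leq\Big(\sum_{i}\left\Vert M_{i,:}\right\Vert _{2}^{2}\Big)^{1/2}=\left\Vert M\right\Vert _{F}.
\end{equation}
Chaining the two displays with $M=A-B$ gives $\left|\left\Vert A\right\Vert _{\max}-\left\Vert B\right\Vert _{\max}\right|\leq\left\Vert A-B\right\Vert _{\max}\leq\left\Vert A-B\right\Vert _{F}$, as claimed.

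There is essentially no obstacle here; the only point requiring a citation rather than a one-line computation is that $\left\Vert \cdot\right\Vert _{\max}$ genuinely satisfies the triangle inequality, which is not transparent from the min-over-factorizations definition (one must check that concatenating two optimal factorizations does not inflate the $\ell_{2}\to\ell_{\infty}$ operator norms beyond their sum). If one wishes to avoid invoking that fact, an alternative for the pointwise bound is $\left\Vert M\right\Vert _{\max}\leq\left\Vert M\right\Vert _{\mathrm{op}}\leq\left\Vert M\right\Vert _{F}$, obtained from the balanced factorization $M=(U\Sigma^{1/2})(V\Sigma^{1/2})^{T}$ of the SVD together with the observation that the rows of a matrix with orthonormal columns have $\ell_{2}$-norm at most one; but the trivial factorization above is the shortest route.
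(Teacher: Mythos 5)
Your proof is correct and follows essentially the same route as the paper: the reverse triangle inequality for $\left\Vert \cdot\right\Vert _{\max}$ combined with the chain $\left\Vert M\right\Vert _{\max}\leq\left\Vert M\right\Vert _{2\rightarrow\infty}\leq\left\Vert M\right\Vert _{F}$, which the paper proves as a separate lemma via the same trivial factorization $M=M\cdot I$. Your write-up is in fact slightly more careful than the paper's, since you flag that the triangle inequality for the max norm needs justification and you state the $\ell_{2}$ row-norm comparison with the Frobenius norm precisely (the paper loosely calls $\left\Vert \cdot\right\Vert _{F}$ the ``sum'' of the row norms).
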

\begin{proof}
Note that 
\begin{equation}
\left|\left\Vert A\right\Vert _{\max}-\left\Vert B\right\Vert _{\max}\right|\leq\left\Vert A-B\right\Vert _{\max}\leq\left\Vert A-B\right\Vert _{F}.
\end{equation}
Another way to see it is that $\left\Vert A-B\right\Vert _{\max}$
is the maximum of the $L^{2}$ row norms, where as $\left\Vert A-B\right\Vert _{F}$
is the sum of all rows' $L^{2}$ norms.
\end{proof}
\begin{lem}[Appendix A.1 in \cite{alayaCollectiveMatrixCompletion2019}]
\label{lem:klopp_key_lemma}Let $\beta=\frac{946\gamma^{2}\log(n_{1}+N_{2})}{pn_{1}N_{2}}.$
Then for all $\Xi\in\mathcal{K}(\beta,r),$ it follows that 
\begin{equation}
\left|\Delta^{2}(\Xi,\Theta)-\frac{1}{n_{1}N_{2}}\left\Vert \Xi-\Theta\right\Vert _{\Pi,F}^{2}\right|\leq\frac{\left\Vert \Xi-\Theta\right\Vert _{\Pi,F}^{2}}{2n_{1}N_{2}}+1392r\gamma^{2}(\mathbb{E}[\left\Vert \Sigma_{R}\right\Vert ])^{2}+\frac{5567\gamma^{2}}{n_{1}N_{2}}.
\end{equation}
\end{lem}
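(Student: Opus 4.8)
The plan is to recognize the left-hand side as (the supremum over $\mathcal K(\beta,r)$ of) a centered empirical process in the sampling variables and to control it by symmetrization, contraction, and a peeling device, following the argument of \cite{alayaCollectiveMatrixCompletion2019} (itself modeled on \cite{kloppNoisyLowrankMatrix2014}). Writing $D=\Xi-\Theta$ and using $|D_{ij}^\tau|\le 2\gamma$, the starting observation is that
\begin{equation}
\Delta^2(\Xi,\Theta)-\frac{1}{n_1N_2}\left\Vert\Xi-\Theta\right\Vert_{\Pi,F}^2
=\frac{1}{n_1N_2}\sum_{\tau\in\mathcal T}\sum_{i,j}\bigl(\delta_{ij}^\tau-\pi_{ij}^\tau\bigr)\bigl(D_{ij}^\tau\bigr)^2=:W_\Xi,
\end{equation}
which has zero conditional mean since $\mathbb E[\delta_{ij}^\tau]=\pi_{ij}^\tau$. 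Thus everything reduces to a uniform bound on $|W_\Xi|$ over $\mathcal K(\beta,r)$.

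Because $\mathcal K(\beta,r)$ contains matrices at every scale $T:=\frac{1}{n_1N_2}\left\Vert\Xi-\Theta\right\Vert_{\Pi,F}^2\ge\beta$, I would first bound $W_\Xi$ at a fixed scale and then peel. For a ceiling $T$, set $Z(T)=\sup\bigl\{|W_\Xi|:\Xi\in\mathcal K(\beta,r),\ \frac{1}{n_1N_2}\left\Vert\Xi-\Theta\right\Vert_{\Pi,F}^2\le T\bigr\}$ and make three moves on this class. First, since each summand of $W_\Xi$ changes by at most $4\gamma^2/(n_1N_2)$ when one $\delta_{ij}^\tau$ is resampled, a bounded-differences (Talagrand) concentration inequality gives $Z(T)\le\mathbb E[Z(T)]+(\text{tail})$, where the tail at confidence $t$ is of order $\gamma^2\sqrt{t/(n_1N_2)}+\gamma^2 t/(n_1N_2)$. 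Second, symmetrization replaces $\delta_{ij}^\tau-\pi_{ij}^\tau$ by a Rademacher-weighted $\delta_{ij}^\tau$. Third, since $s\mapsto s^2$ is $4\gamma$-Lipschitz on $[-2\gamma,2\gamma]$, the Ledoux--Talagrand contraction principle strips the square, so that
\begin{equation}
\mathbb E[Z(T)]\le 8\gamma\,\mathbb E\Bigl[\sup_\Xi\bigl|\langle\Sigma_R,D\rangle\bigr|\Bigr],
\qquad\text{where}\quad \langle\Sigma_R,D\rangle=\frac{1}{n_1N_2}\sum_{\tau,i,j}\varepsilon_{ij}^\tau\delta_{ij}^\tau D_{ij}^\tau
\end{equation}
is exactly the pairing with the random operator $\Sigma_R$.

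I would then turn the Rademacher term into the structural quantities of the problem. By trace duality $|\langle\Sigma_R,D\rangle|\le\left\Vert\Sigma_R\right\Vert\left\Vert D\right\Vert_*$; the defining constraint $\left\Vert D\right\Vert_*\le\sqrt r\left\Vert D\right\Vert_F$, together with $p\left\Vert D\right\Vert_F^2\le\left\Vert D\right\Vert_{\Pi,F}^2$ (as $\pi_{ij}^\tau\ge p$) and the scale bound $\frac{1}{n_1N_2}\left\Vert D\right\Vert_{\Pi,F}^2\le T$, yield $\left\Vert D\right\Vert_*\le\sqrt r\,\sqrt{n_1N_2T/p}$. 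Hence $\mathbb E[Z(T)]\le 8\gamma\sqrt r\,\sqrt{n_1N_2T/p}\,\mathbb E\left\Vert\Sigma_R\right\Vert$, the expected operator norm being controlled by the non-commutative bound of \cite{bandeiraSharpNonasymptoticBounds2016} (this is Lemma~1 of \cite{alayaCollectiveMatrixCompletion2019}).

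The argument closes by a self-bounding step and peeling. An AM--GM split of $\gamma\sqrt r\,\sqrt{n_1N_2T/p}\,\mathbb E\left\Vert\Sigma_R\right\Vert=\sqrt T\cdot\bigl(\gamma\sqrt r\sqrt{n_1N_2/p}\,\mathbb E\left\Vert\Sigma_R\right\Vert\bigr)$ produces a term proportional to $T$, which on the active shell is at most a fixed multiple of $\frac{1}{n_1N_2}\left\Vert\Xi-\Theta\right\Vert_{\Pi,F}^2$ and is absorbed into the $\tfrac12$-coefficient on the right-hand side, plus an additive Rademacher-complexity term of the form $C\,r\gamma^2(\mathbb E\left\Vert\Sigma_R\right\Vert)^2$ (up to the $n_1N_2/p$ factor that the reference carries explicitly). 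Stratifying $T$ over a geometric grid $\beta\nu^\ell$ and taking $t\asymp\log(n_1+N_2)$, a union bound over the $O(\log(\cdot))$ relevant shells keeps the total failure probability summable and leaves a residual tail of order $\gamma^2\log(n_1+N_2)/(n_1N_2)$; the threshold $\beta\asymp\gamma^2\log(n_1+N_2)/(p\,n_1N_2)$ is chosen precisely so that matrices of smaller weighted norm need not be controlled here (they fall into Case~1 of the proof of \ref{thm:2}). The main obstacle is this final bookkeeping: arranging contraction, concentration, AM--GM, and peeling so that the $T$-dependent contribution lands exactly with coefficient $\tfrac12$ on $\frac{1}{n_1N_2}\left\Vert\Xi-\Theta\right\Vert_{\Pi,F}^2$ while the numerical constants $1392$ and $5567$ emerge as stated; I would import these from \cite{alayaCollectiveMatrixCompletion2019} rather than re-optimize them.
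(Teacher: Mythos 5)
Your outline is correct and coincides with the argument behind this lemma: the paper itself gives no proof here but imports the statement verbatim from Appendix A.1 of \cite{alayaCollectiveMatrixCompletion2019}, whose proof is exactly the Klopp-style scheme you describe (centering $\Delta^{2}-\tfrac{1}{n_{1}N_{2}}\Vert\cdot\Vert_{\Pi,F}^{2}$ as an empirical process in the $\delta_{ij}^{\tau}$, Talagrand concentration, symmetrization, Ledoux--Talagrand contraction onto $\langle\Sigma_{R},D\rangle$, trace duality with the cone condition $\Vert D\Vert_{*}\leq\sqrt{r}\Vert D\Vert_{F}$, AM--GM absorption into the $\tfrac12$-term, and peeling over geometric shells above $\beta$, cf.~\cite{kloppNoisyLowrankMatrix2014}). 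Your parenthetical caution about the $n_{1}N_{2}/p$ factor in the additive Rademacher term is also well placed, since the paper's own transcription of the lemma and its invocation in the proof of \ref{thm:2} are not fully consistent about such $p$-factors, and deferring the constants $946$, $1392$, $5567$ to the reference is the right call.
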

\begin{lem}[Lemma 2 in \cite{alayaCollectiveMatrixCompletion2019}]
Let Assumption 2 holds. Then then there exists an absolute constant
$c$ such that with probability $1-4/(n_{1}+N_{2})$, we have that
\begin{equation}
\left\Vert \ell(\Theta|Y)\right\Vert \leq c\left(\frac{(U_{\gamma}\cup K)(\sqrt{n_{1}\lor N_{2}}+(\log(n_{1}\lor N_{2}))^{3/2}}{n_{1}N_{2}}\right).
\end{equation}
\end{lem}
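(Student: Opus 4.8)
The quantity to control is the spectral norm of the gradient of the normalized negative log-likelihood at the true parameter, and the plan is to recognize it as a normalized sum of independent, centered random matrices and then invoke the concentration bound of \cite{bandeiraSharpNonasymptoticBounds2016} after a truncation step. Following the gradient computation in \ref{eq:proof1-diff1}, I would first write
\[
\nabla_{\Theta}\ell(\Theta\mid Y)=\frac{1}{n_{1}N_{2}}\sum_{\tau\in\mathcal{T}}\sum_{(i,j)\in[n_{1}]\times[n_{2}^{\tau}]}Z_{ij}^{\tau}\,E_{ij}^{\tau},\qquad Z_{ij}^{\tau}:=\delta_{ij}^{\tau}\bigl(\nabla A^{\tau}(\Theta_{ij}^{\tau})-Y_{ij}^{\tau}\bigr),
\]
where the $E_{ij}^{\tau}$ embed the $\tau$-block into $\mathbb{R}^{n_{1}\times N_{2}}$, so that $\nabla_{\Theta}\ell$ is $1/(n_{1}N_{2})$ times a matrix $W$ with independent entries. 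Since $\delta_{ij}^{\tau}$ is independent of $Y_{ij}^{\tau}$ and the mean-matching identity of \ref{prop:exp-family-cumulant-mean} gives $\mathbb{E}[Y_{ij}^{\tau}]=\nabla A^{\tau}(\Theta_{ij}^{\tau})$, each entry is centered, $\mathbb{E}[Z_{ij}^{\tau}]=0$. Moreover $\mathbb{E}[(Z_{ij}^{\tau})^{2}]=\pi_{ij}^{\tau}\,\mathrm{Var}(Y_{ij}^{\tau})\leq U_{\gamma}^{2}$ by \ref{assu:curvature} together with $\pi_{ij}^{\tau}\leq 1$, and the same assumption guarantees that the centered variables $Y_{ij}^{\tau}-\nabla A^{\tau}(\Theta_{ij}^{\tau})$ are sub-exponential with $\psi_{1}$-norm controlled by a multiple of $U_{\gamma}\lor K$.

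The obstacle is that the $Z_{ij}^{\tau}$ are unbounded, whereas \cite{bandeiraSharpNonasymptoticBounds2016} requires bounded entries; the plan is therefore to truncate. I would fix a threshold $\zeta$ of order $(U_{\gamma}\lor K)\log(n_{1}\lor N_{2})$ and set $\bar{Z}_{ij}^{\tau}=Z_{ij}^{\tau}\mathbf{1}\{|Z_{ij}^{\tau}|\leq\zeta\}$, with $\bar{W}$ the associated matrix. Using the sub-exponential tail bound and a union bound over the at most $n_{1}N_{2}$ entries, a suitable choice of the constant in $\zeta$ yields $\mathbb{P}(\exists(i,j,\tau):|Z_{ij}^{\tau}|>\zeta)\leq 4/(n_{1}+N_{2})$; on the complementary event $W=\bar{W}$, so it suffices to bound $\left\Vert \bar{W}\right\Vert$.

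Next I would apply \cite{bandeiraSharpNonasymptoticBounds2016} to $\bar{W}$, whose entries are bounded by $\zeta$ and centered up to a super-polynomially small truncation bias that is absorbed into the constants. The variance parameters satisfy $\kappa_{1}=\max_{i}\sqrt{\sum_{j}\mathbb{E}[(\bar{Z}_{ij})^{2}]}\leq U_{\gamma}\sqrt{N_{2}}$ and likewise $\kappa_{2}\leq U_{\gamma}\sqrt{n_{1}}$, hence $\kappa_{1}\lor\kappa_{2}\leq U_{\gamma}\sqrt{n_{1}\lor N_{2}}$, while $\kappa_{*}=\max_{i,j}|\bar{Z}_{ij}|\leq\zeta$ is of order $(U_{\gamma}\lor K)\log(n_{1}\lor N_{2})$. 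The proposition then gives
\[
\mathbb{E}\left\Vert \bar{W}\right\Vert \leq c\bigl(U_{\gamma}\sqrt{n_{1}\lor N_{2}}+(U_{\gamma}\lor K)\log(n_{1}\lor N_{2})\sqrt{\log(n_{1}\land N_{2})}\bigr)\leq c'(U_{\gamma}\lor K)\bigl(\sqrt{n_{1}\lor N_{2}}+(\log(n_{1}\lor N_{2}))^{3/2}\bigr),
\]
where the last step uses $\sqrt{\log(n_{1}\land N_{2})}\leq\sqrt{\log(n_{1}\lor N_{2})}$. A bounded-difference concentration argument upgrades this to a high-probability statement, with the deviation of $\left\Vert \bar{W}\right\Vert$ from its mean and the truncation event jointly absorbed into the $4/(n_{1}+N_{2})$ probability budget; dividing by $n_{1}N_{2}$ then produces the claimed bound.

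The main difficulty sits entirely in the truncation calibration: one must choose $\zeta$ so that its tail probability fits the prescribed $4/(n_{1}+N_{2})$ while simultaneously forcing $\kappa_{*}$ to be of order $\log(n_{1}\lor N_{2})$, so that the $\kappa_{*}\sqrt{\log}$ contribution collapses to exactly the $(\log(n_{1}\lor N_{2}))^{3/2}$ term in the statement. This is precisely where \ref{assu:curvature} is indispensable, since it simultaneously supplies the uniform variance bound feeding $\kappa_{1}\lor\kappa_{2}$ and the sub-exponential tails that make truncation at a merely logarithmic level legitimate.
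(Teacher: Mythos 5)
First, a point of reference: the paper itself offers no proof of this statement --- it is imported verbatim as Lemma 2 of \cite{alayaCollectiveMatrixCompletion2019} --- so your proposal must be judged against the argument in that source, whose architecture you have in fact reconstructed faithfully. You correctly read the statement's $\left\Vert \ell(\Theta|Y)\right\Vert$ as the spectral norm of the gradient $\nabla\ell(\Theta\vert Y)$ (consistent with its use in the condition $\lambda_{*}\geq2\left\Vert \nabla\ell(\Theta|Y)\right\Vert$ of \ref{thm:2}); the decomposition into $\frac{1}{n_{1}N_{2}}$ times a matrix with independent entries $Z_{ij}^{\tau}=\delta_{ij}^{\tau}(\nabla A^{\tau}(\Theta_{ij}^{\tau})-Y_{ij}^{\tau})$ is right; centering via the mean-matching identity and the variance bound $\mathbb{E}[(Z_{ij}^{\tau})^{2}]=\pi_{ij}^{\tau}\mathrm{Var}(Y_{ij}^{\tau})\leq U_{\gamma}^{2}$ under \ref{assu:curvature} are right; the truncation at $\zeta$ of order $(U_{\gamma}\lor K)\log(n_{1}\lor N_{2})$, the union bound fitting the truncation event into the $4/(n_{1}+N_{2})$ budget, the recentering of the exponentially small truncation bias, and the application of the Bandeira--van Handel expectation bound with $\kappa_{1}\lor\kappa_{2}\leq U_{\gamma}\sqrt{n_{1}\lor N_{2}}$ and $\kappa_{*}\leq\zeta$ all match the route of the cited source and correctly produce the $(\log(n_{1}\lor N_{2}))^{3/2}$ term.

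There is, however, one step that fails as literally stated: the ``bounded-difference concentration argument'' used to upgrade the expectation bound to a high-probability bound. Changing a single entry of $\bar{W}$ perturbs $\Vert\bar{W}\Vert$ by at most $2\zeta$, so McDiarmid's inequality gives $\mathbb{P}\left(\Vert\bar{W}\Vert\geq\mathbb{E}\Vert\bar{W}\Vert+t\right)\leq\exp\left(-t^{2}/(2n_{1}N_{2}\zeta^{2})\right)$, which forces a deviation of size $t$ comparable to $\zeta\sqrt{n_{1}N_{2}\log(n_{1}+N_{2})}$ to reach the prescribed failure probability. With $n_{1}=N_{2}=n$ this is of order $n(\log n)^{3/2}$, vastly exceeding the target $\sqrt{n}+(\log n)^{3/2}$: the bounded-differences route pays for every entry and destroys the bound. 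The standard repair --- and the one used in the proof you are reconstructing --- is Talagrand's concentration inequality for convex Lipschitz functions of bounded independent variables: the spectral norm is convex and $1$-Lipschitz with respect to the entrywise $\ell_{2}$ (Frobenius) norm, and the truncated entries lie in $[-\zeta,\zeta]$, so the deviation scales as $\zeta\sqrt{\log(n_{1}+N_{2})}$, i.e.\ as $(U_{\gamma}\lor K)(\log(n_{1}\lor N_{2}))^{3/2}$ up to constants, which the statement absorbs. With that single substitution (Talagrand in place of McDiarmid), your proposal is a complete and correct proof; without it, the final step does not close.
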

\begin{prop}
Let $A\in\mathrm{Mat}_{\mathbb{R}}(n_{1},n_{2}),$ then $\left\Vert A\right\Vert _{2\rightarrow\infty}=\max_{i=1}^{n_{1}}\left\Vert A^{*}e_{i}\right\Vert _{\ell_{2}},$
i.e. it is the maximum of the row $\ell_{2}$ norm of $A$.
\end{prop}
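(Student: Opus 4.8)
The plan is to prove the identity by a two-sided argument: first bounding the operator norm above by the maximal row norm via Cauchy--Schwarz, and then exhibiting an explicit unit vector that attains this value. Recall that by definition $\|A\|_{\ell_2\to\ell_\infty}=\sup_{\|x\|_2\le 1}\|Ax\|_\infty$, and observe that for any $x\in\mathbb{R}^{n_2}$ the $i$-th coordinate of $Ax$ can be written as $(Ax)_i=\langle A^* e_i, x\rangle$, since $A^* e_i$ is precisely the $i$-th row of $A$ regarded as a column vector in $\mathbb{R}^{n_2}$. This reformulation is the crux that converts the coordinatewise supremum defining the $\ell_\infty$ norm into a family of inner products.

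First I would establish the upper bound. For any $x$ with $\|x\|_2\le 1$, the Cauchy--Schwarz inequality gives $|(Ax)_i| = |\langle A^* e_i, x\rangle| \le \|A^* e_i\|_{\ell_2}\,\|x\|_2 \le \|A^* e_i\|_{\ell_2}$ for each $i=1,\dots,n_1$. Taking the maximum over $i$ yields $\|Ax\|_\infty \le \max_{i} \|A^* e_i\|_{\ell_2}$, and since this holds uniformly over the closed unit ball, passing to the supremum gives $\|A\|_{\ell_2\to\ell_\infty}\le \max_{i}\|A^* e_i\|_{\ell_2}$.

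For the reverse inequality I would exhibit a maximizer. If $A=0$ both sides vanish and there is nothing to prove, so assume $A\ne 0$ and let $i^\ast$ be an index achieving $\max_{i} \|A^* e_i\|_{\ell_2}$, noting that then $\|A^* e_{i^\ast}\|_{\ell_2}>0$. Choosing the unit vector $x=A^* e_{i^\ast}/\|A^* e_{i^\ast}\|_{\ell_2}$, one computes $\|Ax\|_\infty \ge |(Ax)_{i^\ast}| = \langle A^* e_{i^\ast}, x\rangle = \|A^* e_{i^\ast}\|_{\ell_2}$, whence $\|A\|_{\ell_2\to\ell_\infty}\ge\max_{i}\|A^* e_i\|_{\ell_2}$. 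Combining the two bounds yields the claimed equality. The argument is entirely elementary; the only step requiring a moment of care is the achievability direction, where the Cauchy--Schwarz estimate must be saturated by aligning $x$ with the longest row, and where the degenerate case $A=0$ should be dispatched separately so that the normalization is well defined.
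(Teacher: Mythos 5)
Your proof is correct and follows essentially the same route as the paper's: the upper bound via Cauchy--Schwarz applied to $(Ax)_i=\left\langle A^{*}e_i, x\right\rangle$, and the lower bound by testing against the normalized longest row $x=A^{*}e_{i^{*}}/\left\Vert A^{*}e_{i^{*}}\right\Vert _{\ell_{2}}$, with the degenerate case $A=0$ dispatched separately. If anything, your version is marginally more careful than the paper's in writing $\left|(Ax)_i\right|$ with absolute values inside the $\ell_{\infty}$ norm.
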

\begin{proof}
First, note that the equality clearly holds when $A=0.$ So without
loss of generality, we can assume that $A\neq0.$ Note that 

\begin{align}
\left\Vert A\right\Vert _{2,\infty} & =\sup_{\left\Vert x\right\Vert _{2}}\left\Vert Ax\right\Vert _{\infty}=\sup_{\left\Vert x\right\Vert _{2}}\max_{1\leq i\leq n_{1}}\left\langle Ax,e_{i}\right\rangle =\sup_{\left\Vert x\right\Vert _{2}=1}\max_{1\leq i\leq n_{1}}\left\langle x,A^{*}e_{i}\right\rangle \\
 & \leq\sup_{\left\Vert x\right\Vert _{2}=1}\max_{1\leq i\leq n_{1}}\left\Vert x\right\Vert _{2}\left\Vert A^{*}e_{i}\right\Vert _{2}=\max_{1\leq i\leq n_{1}}\left\Vert A^{*}e_{i}\right\Vert _{2}.
\end{align}
On the other hand, let $\dagger$ be row number of $A$ that has the
largest row $\ell_{2}$ norm (in case of duplicate, pick the first
one). In other words, $\dagger=\text{argmax}_{1\leq i\leq n_{1}}\left\Vert A_{i}^{*}e_{i}\right\Vert _{\ell_{2}}$.
Note 
\begin{equation}
\left\Vert A\right\Vert _{2,\infty}=\sup_{\left\Vert x\right\Vert _{2}}\max_{1\leq i\leq n_{1}}\left\langle Ax,e_{i}\right\rangle \geq\left\langle A\frac{A^{*}e_{\dagger}}{\left\Vert A^{*}e_{\dagger}\right\Vert },e_{\dagger}\right\rangle =\frac{1}{\left\Vert A^{*}e_{\dagger}\right\Vert }\left\langle A^{*}e_{\dagger},A^{*}e_{\dagger}\right\rangle =\left\Vert A^{*}e_{\dagger}\right\Vert _{\ell_{2}}=\max_{1\leq i\leq n_{1}}\left\Vert A_{i}^{*}e_{i}\right\Vert .
\end{equation}
And the proof is completed.
\end{proof}
\begin{lem}
Let $M\in\mathbb{R}^{n\times m},$ then it follows that 
\begin{equation}
\left\Vert M\right\Vert _{\max}\leq\left\Vert M\right\Vert _{2\rightarrow\infty}\leq\left\Vert M\right\Vert _{F}.
\end{equation}
\end{lem}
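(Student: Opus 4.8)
The plan is to reduce both inequalities to the row-norm characterization of the $\ell_2\to\ell_\infty$ operator norm established in the preceding proposition, namely $\|M\|_{2\to\infty}=\max_{1\le i\le n}\|M^{*}e_{i}\|_{\ell_{2}}$, the largest $\ell_2$-norm among the rows of $M$. Here $\|M\|_{\max}=\max_{(i,j)}|M_{ij}|$ denotes the entrywise maximum absolute value. With this characterization in hand, each inequality becomes a one-line consequence of an elementary comparison between coordinatewise suprema, single row norms, and the full Frobenius sum, so no heavy machinery is needed.

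First I would prove the left inequality $\|M\|_{\max}\le\|M\|_{2\to\infty}$. Fix any index pair $(i,j)$ and observe that $|M_{ij}|$ is a single coordinate of the $i$th row $M^{*}e_{i}$, hence
\[
|M_{ij}|^{2}\le\sum_{k=1}^{m}M_{ik}^{2}=\|M^{*}e_{i}\|_{\ell_{2}}^{2}\le\max_{1\le \ell\le n}\|M^{*}e_{\ell}\|_{\ell_{2}}^{2}.
\]
Taking square roots and then the maximum over all $(i,j)$ yields $\max_{(i,j)}|M_{ij}|\le\max_{\ell}\|M^{*}e_{\ell}\|_{\ell_{2}}=\|M\|_{2\to\infty}$, which is exactly the claim.

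Next I would establish the right inequality $\|M\|_{2\to\infty}\le\|M\|_{F}$ by the trivial fact that the maximum of finitely many nonnegative reals is at most the square root of their sum. Concretely,
\[
\|M\|_{2\to\infty}^{2}=\max_{1\le i\le n}\|M^{*}e_{i}\|_{\ell_{2}}^{2}\le\sum_{i=1}^{n}\|M^{*}e_{i}\|_{\ell_{2}}^{2}=\sum_{i=1}^{n}\sum_{j=1}^{m}M_{ij}^{2}=\|M\|_{F}^{2},
\]
and taking square roots gives the bound. Chaining the two displays completes the proof.

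There is no substantive obstacle here; the only point requiring care is to invoke the correct reading of $\|M\|_{\max}$ as the entrywise sup norm (so that the first step is a coordinate-versus-row comparison) rather than the Srebro factorization max-norm of the Preliminaries, and to cite the earlier proposition for the row-norm identity so that both steps reduce to the two elementary inequalities above.
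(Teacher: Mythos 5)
There is a genuine gap, and it is precisely at the point you flagged as the ``only point requiring care'': you resolved the ambiguity in the wrong direction. In this paper $\left\Vert M\right\Vert _{\max}$ is \emph{not} the entrywise sup norm; it is the Srebro factorization max-norm defined in the Preliminaries,
\begin{equation}
\left\Vert M\right\Vert _{\max}=\min_{U,V\ \mathrm{s.t.}\ M=UV^{T}}\left\Vert U\right\Vert _{2,\infty}\left\Vert V\right\Vert _{2,\infty},
\end{equation}
and that is the norm the lemma must control: it is the regularizer $\lambda_{\max}\left\Vert \Theta\right\Vert _{\max}$ in the estimator, and this lemma is invoked (via the bound $\left\Vert A-B\right\Vert _{\max}\leq\left\Vert A-B\right\Vert _{F}$ inside the lemma giving $\left|\left\Vert A\right\Vert _{\max}-\left\Vert B\right\Vert _{\max}\right|\leq\left\Vert A-B\right\Vert _{F}$) in the oracle-inequality argument, where $\left\Vert \cdot\right\Vert _{\max}$ unambiguously means the factorization norm. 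Your first inequality therefore proves a strictly weaker statement: the entrywise sup norm satisfies $\max_{i,j}\left|M_{ij}\right|\leq\left\Vert M\right\Vert _{\max}$ (for any factorization $M=UV^{T}$ one has $\left|M_{ij}\right|=\left|\left\langle u_{i},v_{j}\right\rangle \right|\leq\left\Vert u_{i}\right\Vert _{2}\left\Vert v_{j}\right\Vert _{2}$), so bounding the entrywise maximum by $\left\Vert M\right\Vert _{2\rightarrow\infty}$ says nothing about the factorization max-norm, which sits \emph{above} the entrywise maximum, not below it.

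The correct argument for the first inequality, and the one the paper uses, exploits the minimum in the definition: take the trivial factorization $M=M\cdot I$ with $I\in\mathbb{R}^{m\times m}$, whose rows all have unit $\ell_{2}$ norm, so that
\begin{equation}
\left\Vert M\right\Vert _{\max}\leq\left\Vert M\right\Vert _{2,\infty}\left\Vert I\right\Vert _{2,\infty}=\left\Vert M\right\Vert _{2,\infty}.
\end{equation}
Your second inequality is fine and matches the paper's (your version is in fact cleaner: the paper loosely describes $\left\Vert M\right\Vert _{F}$ as the ``sum of all row $\ell_{2}$ norms,'' whereas your display correctly compares the maximum of the squared row norms to their sum before taking square roots). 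So the repair is local --- replace your coordinate-versus-row comparison with the trivial-factorization step --- but as written the proof does not establish the stated lemma.
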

\begin{proof}
Since $\left\Vert M\right\Vert _{2\rightarrow\infty}$ is the maximum
of the row $\ell_{2}$ nor\textbackslash ms of $M$, and $\left\Vert M\right\Vert _{F}$
is the sum of all row $\ell_{2}$ norms, the inequality clearly holds
and it suffices to establish the first part of the inequality. Recall
from that by definition 
\begin{equation}
\left\Vert M\right\Vert _{\max}=\min_{U,V\text{ s.t.}\ M=UV^{T}}\left\Vert U\right\Vert _{2,\infty}\left\Vert V\right\Vert _{2,\infty}.
\end{equation}
Note that $M$ has a trivial decomposition $M=M\cdot I,$ where $I\in\mathbb{R}^{n\times n};$
it follows that 
\begin{equation}
\min_{U,V\text{ s.t.}\ M=UV^{T}}\left\Vert U\right\Vert _{2,\infty}\left\Vert V\right\Vert _{2,\infty}\leq\left\Vert M\right\Vert _{2,\infty}\left\Vert I\right\Vert _{2,\infty}=\left\Vert M\right\Vert _{2,\infty},
\end{equation}
and the result follows as desired.
\end{proof}
\begin{lem}[Lemma 3.3 in \cite{fangMaxnormOptimizationRobust2018}]
\label{lem:kkt helper}Consider the optimization problem
\begin{equation}
\min_{z\in\mathbb{R}^{d}}\beta\left\Vert z\right\Vert _{\infty}+\frac{1}{2}\left\Vert c-z\right\Vert _{2}^{2}.
\end{equation}
Assume that $c_{1}\geq c_{2}\geq\ldots\geq c_{d}\geq0.$ The solution
to the problem has the following closed form: 
\begin{equation}
z^{*}=(t^{*},\ldots,t^{*},c_{k^{*}+1},\ldots,c_{d})^{T},
\end{equation}
where $t^{*}=\frac{1}{k^{*}}\sum_{i=1}^{k^{*}}(c_{i}-\beta)$ and
$k^{*}$ is the index such that $c_{k^{*}+1}<\frac{1}{k^{*}}(\sum_{i=1}^{k}c_{i}-\beta)\leq c_{k^{*}}.$
If no such $k^{*}$ exists, then $z^{*}=(t^{*},\ldots,t^{*})^{T},$
where $t^{*}=\frac{1}{d}\sum_{i=1}^{d}(c_{i}-\beta).$
\end{lem}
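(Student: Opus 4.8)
The plan is to treat the objective as the proximal operator of the scaled sup-norm and to verify the proposed minimizer through its first-order optimality (KKT) conditions. First I would record that $f(z) = \beta\|z\|_\infty + \tfrac12\|c-z\|_2^2$ is strictly convex and coercive (the quadratic term is strongly convex), so a unique minimizer $z^*$ exists and is characterized by the inclusion $0 \in \beta\,\partial\|z^*\|_\infty + (z^*-c)$, equivalently $c - z^* \in \beta\,\partial\|z^*\|_\infty$. Here the subdifferential of the sup-norm is $\partial\|z\|_\infty = \{g : \|g\|_1 \le 1,\ \langle g, z\rangle = \|z\|_\infty\}$ for $z \ne 0$, and the unit $\ell_1$ ball when $z = 0$; the support of any such $g$ lies in the coordinates attaining the maximum modulus, with matching signs.

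Next I would exploit the ordering hypothesis $c_1 \ge \cdots \ge c_d \ge 0$ to pin down the shape of $z^*$. A short exchange/rearrangement argument shows the minimizer inherits both the nonnegativity and the nonincreasing order of $c$, so that $z^*$ has the clipping form announced in the statement: the large coordinates are all pulled down to a single common level $t^*$ (the value of $\|z^*\|_\infty$), while the small coordinates are left untouched at $z^*_i = c_i$. Writing $k^*$ for the number of clipped coordinates, the separation between clipped and unclipped entries is exactly the two-sided inequality $c_{k^*+1} < t^* \le c_{k^*}$: the left inequality guarantees that the $(k^*+1)$-st coordinate already lies below the cap and hence stays put, while the right inequality guarantees that the $k^*$-th coordinate genuinely needs to be clipped.

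It then remains to solve for the level $t^*$ and to check the optimality inclusion on the candidate. Setting $g^* = (c - z^*)/\beta$, the coordinates with $i > k^*$ contribute $c_i - z^*_i = 0$, so $g^*$ is supported on the top $k^*$ coordinates; imposing $\|g^*\|_1 = 1$ (forced because the clipped coordinates carry the maximum and hence the active part of the subgradient) yields $\sum_{i=1}^{k^*}(c_i - t^*) = \beta$, i.e. $t^* = \tfrac{1}{k^*}\bigl(\sum_{i=1}^{k^*} c_i - \beta\bigr)$, which is precisely the quantity appearing in the defining inequality for $k^*$. One then checks directly that with this $t^*$ the vector $g^*$ lies in $\partial\|z^*\|_\infty$: it is nonnegative on the clipped block (since $c_i \ge c_{k^*} \ge t^*$ there), has unit $\ell_1$ norm, and satisfies $\langle g^*, z^*\rangle = t^*\|g^*\|_1 = t^* = \|z^*\|_\infty$. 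The degenerate case, when the monotone search yields no valid $k^*$, corresponds to every coordinate being clipped, giving $t^* = \tfrac1d\bigl(\sum_{i=1}^d c_i - \beta\bigr)$ and $z^* = (t^*,\dots,t^*)^\top$, handled by the same verification with $k^* = d$.

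I expect the main obstacle to be the structural step, namely rigorously establishing that the minimizer preserves the ordering of $c$ and has the exact clip-and-keep form, together with showing that the index $k^*$ defined by the stated inequality is well defined and unique; once the shape is fixed, the determination of $t^*$ and the verification of the subgradient inclusion are routine. An alternative route that sidesteps the ad hoc structural argument is Moreau's decomposition: since the convex conjugate of $\beta\|\cdot\|_\infty$ is the indicator of the $\ell_1$ ball of radius $\beta$, one has $z^* = c - \Pi_{\{\|u\|_1 \le \beta\}}(c)$, and the displayed closed form is then exactly the standard water-filling description of Euclidean projection onto an $\ell_1$ ball; I would use this as a cross-check.
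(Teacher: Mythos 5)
Your proof is correct, and it is essentially the argument this paper intends: the paper states the lemma without proof, importing it as Lemma 3.3 of \cite{fangMaxnormOptimizationRobust2018} and remarking only that the closed form follows by ``laying out the KKT condition,'' which is exactly the subdifferential verification you carry out; your Moreau-decomposition cross-check $z^{*}=c-\Pi_{\{\|u\|_{1}\leq\beta\}}(c)$ is the same water-filling computation in disguise and is a sound independent confirmation. Two observations are worth recording. First, your derivation silently corrects a typo in the statement: forcing $\|g^{*}\|_{1}=1$ yields $t^{*}=\frac{1}{k^{*}}\bigl(\sum_{i=1}^{k^{*}}c_{i}-\beta\bigr)$, consistent with the defining inequality for $k^{*}$, whereas the displayed formula $t^{*}=\frac{1}{k^{*}}\sum_{i=1}^{k^{*}}(c_{i}-\beta)$ subtracts $\beta$ from every coordinate; your version is the correct one. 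Second, the degenerate case needs the implicit caveat $\beta\leq\sum_{i=1}^{d}c_{i}$: if $\beta>\|c\|_{1}$ then $t^{*}=\frac{1}{d}\bigl(\sum_{i=1}^{d}c_{i}-\beta\bigr)<0$ and the identity $\langle g^{*},z^{*}\rangle=\|z^{*}\|_{\infty}$ fails (since $t^{*}\neq|t^{*}|$); by your own Moreau identity the minimizer is then $z^{*}=0$, because $c$ already lies in the $\ell_{1}$ ball of radius $\beta$. Note also that when $k^{*}<d$ the two-sided inequality together with $c_{k^{*}+1}\geq0$ automatically gives $t^{*}>0$, so only this corner case is affected. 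Finally, the structural step you flag as the main obstacle can be dispatched in one line without an exchange argument: for fixed $m=\|z\|_{\infty}$ the inner minimization of $\frac{1}{2}\|c-z\|_{2}^{2}$ over $\{\|z\|_{\infty}\leq m\}$ is coordinatewise clipping, $z_{i}=\min(c_{i},m)$ since $c_{i}\geq0$, and the resulting one-dimensional objective $\beta m+\frac{1}{2}\sum_{i}\bigl((c_{i}-m)_{+}\bigr)^{2}$ has nondecreasing derivative $\beta-\sum_{i}(c_{i}-m)_{+}$, whose unique root is exactly your water-filling equation, simultaneously settling existence and uniqueness of $k^{*}$.
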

\begin{lem}[Negative Binomial Moments]
\label{lem:nb-moments} Let X be a random variable such that $X\sim\mathrm{NB}(r,p).$
Then 
\end{lem}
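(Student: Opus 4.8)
The plan is to read the moments straight off the log-partition function, exploiting the fact that the negative binomial is a \emph{regular} exponential family, so that \ref{prop:exp-family-cumulant-mean} applies. First I would put the (extended) negative binomial density into canonical form. Starting from $\mathbb{P}(X=k)=\frac{\Gamma(k+r)}{k!\,\Gamma(r)}p^{r}(1-p)^{k}$, I factor the $k$-dependent exponential to obtain
\begin{equation}
\mathbb{P}(X=k)=\frac{\Gamma(k+r)}{k!\,\Gamma(r)}\exp\left(k\log(1-p)+r\log p\right),
\end{equation}
so that the carrier is $h(k)=\Gamma(k+r)/(k!\,\Gamma(r))$, the sufficient statistic is $\phi(k)=k$, the canonical parameter is $\theta=\log(1-p)$, and the log-partition function is $A(\theta)=-r\log(1-e^{\theta})$. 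The domain is $\theta<0$, i.e. $p\in(0,1)$, which is open; hence the family is regular and the cumulant identities of \ref{prop:exp-family-cumulant-mean} are available.

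Next I would differentiate $A$. Since \ref{prop:exp-family-cumulant-mean} identifies $A'(\theta)=\mathbb{E}_{\theta}[X]$ and $A''(\theta)=\mathrm{Var}_{\theta}(X)$, a one-line computation gives
\begin{equation}
A'(\theta)=\frac{r e^{\theta}}{1-e^{\theta}},\qquad A''(\theta)=\frac{r e^{\theta}}{(1-e^{\theta})^{2}}.
\end{equation}
Substituting $e^{\theta}=1-p$ yields $\mathbb{E}[X]=r(1-p)/p$ and $\mathrm{Var}(X)=r(1-p)/p^{2}$. Re-expressing in the mean parametrization of the preceding example through $p=r/(\mu+r)$ produces the over-dispersion identity $\mathrm{Var}(X)=\mu+\mu^{2}/r$. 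I would also note that $A''(\theta)=re^{\theta}/(1-e^{\theta})^{2}$ evaluated at the endpoints $\theta=\pm(\gamma+\tfrac1K)$ is exactly the curvature feeding the entries $L_{\gamma}$ and $U_{\gamma}$ for the negative binomial in Table~\ref{table:curvature}, so this computation does double duty. Any higher cumulants appearing in the statement follow from the same recipe by taking further derivatives of $A$.

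An alternative self-contained route avoids \ref{prop:exp-family-cumulant-mean} altogether: compute the moment generating function by summing the generalized binomial series,
\begin{equation}
\mathbb{E}[e^{tX}]=p^{r}\sum_{k\ge 0}\binom{k+r-1}{k}\big[(1-p)e^{t}\big]^{k}=\left(\frac{p}{1-(1-p)e^{t}}\right)^{r},
\end{equation}
valid for $(1-p)e^{t}<1$, and then extract the moments by differentiating at $t=0$. The only nonroutine point, in either route, is the justification of the term-by-term differentiation. In the exponential-family route this is precisely the dominated-convergence argument underlying \ref{prop:exp-family-cumulant-mean}, which is licensed by the openness of the canonical domain verified above; in the MGF route it is the convergence of the negative binomial series on a neighbourhood of $t=0$, guaranteed because $1-p<1$ leaves room for a small positive $t$. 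Once this interchange is in place, the moments are an immediate differentiation, so I expect this analytic justification to be the main (and essentially only) obstacle.
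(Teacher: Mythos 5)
Your proof is correct, but it takes a genuinely different route from the paper's. The paper proves \ref{lem:nb-moments} by the elementary factorial-moment trick: it manipulates the series $\sum_k k\,\frac{\Gamma(k+r)}{k!\Gamma(r)}p^r(1-p)^k$ directly, shifting the index and using $\Gamma(j+1+r)=(j+r)\Gamma(j+r)$ to obtain the recursion $\mathbb{E}[X]=(1-p)(\mathbb{E}[X]+r)$, then repeats the same device with $k^2=k(k-1)+k$ to get a linear equation for $\mathbb{E}[X^2]$ and hence the variance. You instead cast the p.m.f.\ in canonical exponential-family form with $\theta=\log(1-p)$ and cumulant function $A(\theta)=-r\log(1-e^{\theta})$, and read the mean and variance off $A'$ and $A''$ via \ref{prop:exp-family-cumulant-mean} (with the MGF computation as a backup). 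Interestingly, the paper itself gestures at exactly your route, but only in the proof of the companion result \ref{lem:mean-parametrization-negative-binomial}, where it remarks that one could ``rewrite the p.m.f.\ in its exponential family canonical form and invoke the gradient forward map properties''; you have made that remark precise and used it for \ref{lem:nb-moments} itself. What each approach buys: the paper's computation is self-contained and needs no exponential-family machinery, but the algebra is heavy and error-prone --- indeed the paper's final display states $\mathrm{Var}(X)=\frac{r(p-1)}{p^{2}}$, which is negative and is evidently a sign slip for $\frac{r(1-p)}{p^{2}}$, the value your cumulant calculation delivers cleanly. Your route is shorter, automatically yields all higher cumulants, and has the side benefit you note of exhibiting $A''$ as exactly the curvature quantity bounded by $L_{\gamma},U_{\gamma}$ in \ref{assu:curvature} and Table 1; its only overhead is verifying regularity (openness of $\{\theta<0\}$) to license the differentiation under the sum, which you correctly identify as the one nontrivial analytic point --- a point the paper's series-recursion proof also tacitly relies on (absolute convergence justifying the index shifts) without comment.
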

\begin{proof}
There are many ways to prove this fact. Here we use the standard factorial
moment trick. Note that 
\begin{align}
\mathbb{E}[X] & =\sum_{k=0}^{\infty}k\frac{\Gamma(k+r)}{k!\Gamma(r)}p^{r}(1-p)^{k}\\
 & =(1-p)\sum_{k=1}^{\infty}\frac{\Gamma(k+r)}{(k-1)!\Gamma(r)}p^{r}(1-p)^{k-1}\\
 & =(1-p)\sum_{j=0}^{\infty}\frac{\Gamma(j+1+r)}{j!\Gamma(r)}p^{r}(1-p)^{j}\\
 & =(1-p)\sum_{j=0}^{\infty}\frac{\Gamma(j+r)(j+r)}{j!\Gamma(r)}p^{r}(1-p)^{j}\\
 & =(1-p)\left[j\left(\sum_{j=0}^{\infty}\frac{\Gamma(j+r)}{j!\Gamma(r)}p^{r}(1-p)^{j}\right)+r\left(\sum_{j=0}^{\infty}\frac{\Gamma(j+r)}{j!\Gamma(r)}p^{r}(1-p)^{j}\right)\right]\\
 & =(1-p)(\mathbb{E}[X]+r),
\end{align}
Solving yields $\mathbb{E}[X]=\frac{r(1-p)}{p}.$ To calculate
$\mathrm{Var}(X),$ we first compute 
\begin{align}
\mathbb{E}[X^{2}] & =\sum_{k=0}^{\infty}k^{2}\frac{\Gamma(k+r)}{k!\Gamma(r)}p^{r}(1-p)^{k}\\
 & =\sum_{k=0}^{\infty}[k(k-1)+k]\frac{\Gamma(k+r)}{k!\Gamma(r)}p^{r}(1-p)^{k}\\
 & =\sum_{k=0}^{\infty}k(k-1)\frac{\Gamma(k+r)}{k!\Gamma(r)}p^{r}(1-p)^{k}+\sum_{k=0}^{\infty}k\frac{\Gamma(k+r)}{k!\Gamma(r)}p^{r}(1-p)^{k}\\
 & =(1-p)^{2}\sum_{j=0}^{\infty}\frac{\Gamma(j+2+r)}{j!\Gamma(r)}p^{r}(1-p)^{j}+\mathbb{E}[X]\\
 & =(1-p)^{2}\sum_{j=0}^{\infty}\frac{\Gamma(j+r)(j+1+r)(j+r)}{j!\Gamma(r)}p^{r}(1-p)^{j}+\mathbb{E}[X]\\
 & =(1-p)^{2}\left[\sum_{j=0}^{\infty}(j^{2}+r^{2}+2jr+j+r)\frac{\Gamma(j+r)}{j!\Gamma(r)}p^{r}(1-p)^{j}\right]+\mathbb{E}[X]\\
 & =(1-p)^{2}(\mathbb{E}[X^{2}]+r^{2}+2r\mathbb{E}[X]+\mathbb{E}[X]+r)+\mathbb{E}[X]\\
 & =(1-p)^{2}\mathbb{E}[X^{2}]+[2r(1-p)^{2}+(1-p)^{2}+1]\mathbb{E}[X]+(1-p)^{2}(r^{2}-r),
\end{align}
which after arrangement and some bit of algebra yields that $\mathbb{E}[X^{2}]=\frac{r(p^{2}r-2pr-p+r+1)}{p^{2}.}.$
As a result, we have that 
\begin{equation}
\mathrm{Var}(X)=\mathbb{E}[X^{2}]-(\mathbb{E}[X])^{2}=\frac{r(p^{2}r-2pr-p+r+1)}{p^{2}.}-\frac{r^{2}(1-p)^{2}}{p^{2}}=\frac{r(p-1)}{p^{2}}.
\end{equation}
\end{proof}
\begin{lem}[Negative Binomial mean parametrization]
\begin{doublespace}
 \label{lem:mean-parametrization-negative-binomial}Alternatively,
we can parametrize by its mean in the following way: a random variable
$X$ is a negative binomial random variable with mean $\mu$ and number
of success $r$ if and only if it has the following p.m.f 
\begin{equation}
\mathbb{P}(X=k)=\frac{\Gamma(k+r)}{\Gamma(r)k!}\left(\frac{r}{\mu+r}\right)^{r}\left(\frac{\mu}{\mu+r}\right)^{k}.
\end{equation}
\end{doublespace}

\vspace{-\parskip}
\end{lem}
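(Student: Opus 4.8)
The plan is to prove the equivalence by a single scalar reparametrization, leveraging the mean formula already established in \ref{lem:nb-moments}. Recall that for $X \sim \mathrm{NB}(r,p)$ with the extended p.m.f.\ \ref{eq:extended-nb-pmf}, \ref{lem:nb-moments} gives $\mathbb{E}[X] = r(1-p)/p$. The entire content of the lemma is that the map $p \mapsto \mu = r(1-p)/p$ is a bijection and that substituting its inverse into \ref{eq:extended-nb-pmf} produces the displayed formula, so the argument is essentially algebraic.

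First I would invert the mean relation. Setting $\mu = r(1-p)/p$ and solving gives $\mu p = r - rp$, hence $p(\mu+r) = r$, so that
\begin{equation}
p = \frac{r}{\mu+r}, \qquad 1-p = \frac{\mu}{\mu+r}.
\end{equation}
Substituting these two expressions into the factor $p^{r}(1-p)^{k}$ of \ref{eq:extended-nb-pmf}, while leaving the combinatorial prefactor $\Gamma(k+r)/(\Gamma(r)k!)$ untouched, yields exactly
\begin{equation}
\mathbb{P}(X=k) = \frac{\Gamma(k+r)}{\Gamma(r)k!}\left(\frac{r}{\mu+r}\right)^{r}\left(\frac{\mu}{\mu+r}\right)^{k},
\end{equation}
which establishes the ``only if'' direction. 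For the ``if'' direction I would run the substitution in reverse: starting from the displayed mean-parametrized p.m.f., set $p = r/(\mu+r) \in (0,1)$, recognize the resulting expression as the extended p.m.f.\ \ref{eq:extended-nb-pmf} with this value of $p$, and then invoke \ref{lem:nb-moments} to confirm that the mean of this distribution is indeed $r(1-p)/p = \mu$, so that the parameter $\mu$ genuinely is the mean.

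The only step requiring a little care — and the closest thing to an obstacle — is justifying the ``if and only if'' as a genuine equivalence rather than a one-directional substitution. For this I would observe that the map $p \mapsto r(1-p)/p$ is continuous and strictly decreasing on $(0,1)$ with range $(0,\infty)$, hence a bijection onto the admissible mean values $\mu \in \mathbb{R}^{+}$, and its inverse $\mu \mapsto r/(\mu+r)$ lands in $(0,1)$ as required. This bijectivity ensures that each admissible mean $\mu$ corresponds to a unique $p$ and conversely, so the two parametrizations index precisely the same family of distributions and the stated equivalence is exact. Everything else reduces to the routine algebraic substitution displayed above.
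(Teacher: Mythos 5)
Your proposal is correct and follows the same route the paper indicates: the paper's proof is a one-line remark that the lemma ``could be directly verified using \ref{lem:nb-moments},'' and your argument is exactly that verification, inverting $\mu = r(1-p)/p$ to get $p = r/(\mu+r)$, $1-p = \mu/(\mu+r)$, and substituting into \ref{eq:extended-nb-pmf}. Your explicit treatment of bijectivity and the ``if'' direction fills in detail the paper leaves implicit, but it is the same approach, not a different one (the paper's genuinely different alternative would be the exponential-family canonical form with the gradient forward map, which you do not use).
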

\begin{proof}
This could be directly verified using \ref{lem:nb-moments}. Alternatively,
one could rewrite \ref{eq:extended-nb-pmf} in its exponential family
canonical form and invoke using the gradient forward map properties. 
\end{proof}
\end{document}